\documentclass{ieeeaccess}
\usepackage{cite}
\usepackage{amsmath,amssymb,amsfonts}
\usepackage{algorithmic}
\usepackage{graphicx}
\usepackage{textcomp}
\usepackage{amsthm}
\usepackage{bm}
\usepackage{booktabs}
\usepackage{graphicx}
\usepackage{hyperref}  
\usepackage[caption=false,font=footnotesize]{subfig} 

\definecolor{revcolor}{RGB}{255,204,0}
\newcommand{\rev}[1]{\textcolor{revcolor}{#1}}
\renewcommand{\rev}[1]{#1}
\newtheorem{theorem}{Theorem}
\newtheorem{lemma}[theorem]{Lemma}

\newtheorem{proposition}[theorem]{Proposition}

\usepackage{bm}
\makeatletter
\AtBeginDocument{\DeclareMathVersion{bold}
\SetSymbolFont{operators}{bold}{T1}{times}{b}{n}
\SetSymbolFont{NewLetters}{bold}{T1}{times}{b}{it}
\SetMathAlphabet{\mathrm}{bold}{T1}{times}{b}{n}
\SetMathAlphabet{\mathit}{bold}{T1}{times}{b}{it}
\SetMathAlphabet{\mathbf}{bold}{T1}{times}{b}{n}
\SetMathAlphabet{\mathtt}{bold}{OT1}{pcr}{b}{n}
\SetSymbolFont{symbols}{bold}{OMS}{cmsy}{b}{n}
\renewcommand\boldmath{\@nomath\boldmath\mathversion{bold}}}
\makeatother

\def\BibTeX{{\rm B\kern-.05em{\sc i\kern-.025em b}\kern-.08em
    T\kern-.1667em\lower.7ex\hbox{E}\kern-.125emX}}

\begin{document}
\doi{10.1109/ACCESS.2026.3669415}

\title{The density of cross-persistence diagrams and its applications}
\author{\uppercase{Alexander Mironenko}\authorrefmark{1},
\uppercase{Evgeny Burnaev}\authorrefmark{1,2} and \uppercase{Serguei Barannikov}\authorrefmark{1,3}}

\address[1]{Skolkovo Institute of Science and Technology, Moscow 121205, Russia}
\address[2]{AIRI, Moscow 123317, Russia}
\address[3]{CNRS, IMJ, Paris University, Paris 75205, France}


\markboth
{Alexander Mironenko \headeretal: Density of Cross-Persistence Diagrams and Its Applications}
{Alexander Mironenko \headeretal: Density of Cross-Persistence Diagrams and Its Applications}


\begin{abstract}
Topological Data Analysis (TDA) provides powerful tools to explore the shape and structure of data through topological features such as clusters, loops, and voids. Persistence diagrams are a cornerstone of TDA, capturing the evolution of these features across scales. While effective for analyzing individual manifolds, persistence diagrams do not account for interactions between pairs of them. Cross-persistence diagrams (cross-barcodes), introduced recently, address this limitation by characterizing relationships between topological features of two point clouds.
In this work, we \rev{present} the first systematic study of the density of cross-persistence diagrams. We prove its existence, establish theoretical foundations for its statistical use, and design the first machine learning framework for predicting cross-persistence density directly from point cloud coordinates and distance matrices. Our statistical approach enables the distinction of point clouds sampled from different manifolds by leveraging the linear characteristics of cross-persistence diagrams. Interestingly, we \rev{find} that introducing noise \rev{can enhance} our ability to distinguish point clouds, uncovering its novel utility in TDA applications.
We demonstrate the effectiveness of our methods through experiments on diverse datasets, where our approach consistently outperforms existing techniques in density prediction and achieves superior results in point cloud distinction tasks. Our findings contribute to \rev{a} broader understanding of cross-persistence diagrams and open new avenues for their application in data analysis, including potential insights into time-series domain tasks and the geometry of AI-generated texts. Our code is publicly available at \url{https://github.com/Verdangeta/TDA_experiments}.
\end{abstract}

\begin{keywords}
Cross-barcodes, deep neural networks, persistence diagrams, TDA
\end{keywords}

\titlepgskip=-21pt

\maketitle

\section{Introduction}
\label{sec:introduction}
\PARstart{T}{he} study of topological structures in data has gained significant attention in recent years, particularly in the context of high-dimensional data analysis. Topological Data Analysis (TDA) provides a framework for extracting and comparing topological features across domains, supporting robust methodologies in shape recognition, anomaly detection, and generative model evaluation. A notable advancement in this area is the introduction of cross-persistence diagrams (cross-barcodes), a tool designed to compare distributions by capturing multiscale topological discrepancies between manifolds \cite{barannikov2021manifold}.

This is particularly relevant for generative modeling, where evaluating the quality of synthetic data remains a fundamental challenge. Traditional metrics often fail to capture structural differences between real and generated samples, especially in high-dimensional settings. The cross-persistence framework addresses this by providing a topological divergence measure between two distributions. Building on this idea, the Manifold Topology Divergence (MTop-Divergence) has been proposed as a practical tool for assessing generative models in domains such as image synthesis, 3D shape generation, and time-series modeling.

\rev{However, existing cross-persistence-based methods operate at the level of individual diagrams or summary statistics and do not provide a principled notion of a probability density over cross-persistence diagrams, nor learning-based tools for its estimation. At the same time, despite their utility, cross-persistence diagrams suffer from high computational complexity, as they require interactions between two sets of topological features, substantially increasing the cost of computation.}


In this paper, we present the first theoretical and algorithmic framework for cross-persistence density. We not only prove its existence and statistical utility, but also introduce machine learning methods, including a novel neural model, Cross-RipsNet, to predict and exploit this density for practical data analysis tasks. Our framework supports both theoretical investigations and applied studies, ranging from refined manifold comparison and point cloud classification to domain-specific applications such as time-series analysis and AI-generated text identification.


We summarize our contributions as follows:
\begin{itemize}
\item We propose a methodology with rigorous theoretical foundations for estimating and utilizing the density function of cross-persistence diagrams, enabling refined comparison of data manifolds.
\item We introduce a statistical approach for distinguishing manifolds based on the linear characteristics of cross-persistence diagrams derived from samples on different manifolds, revealing the novel utility of noise in improving separability.
\item We show that introducing noise can enhance the effectiveness of our statistical method in separating point clouds.
\item We introduce Cross-RipsNet, the first neural architecture for learning cross-persistence densities from point cloud coordinates and distance matrices.
\item We present extensive experiments across multiple data modalities, demonstrating superior performance in density prediction and in distinguishing point clouds originating from different manifolds.
\item We illustrate practical impact through case studies in time-series classification (e.g., gravitational wave detection) and AI-generated text identification.
\end{itemize}

\section{Related Works}
\label{sec:related}
Early foundational work 
\cite{barannikov:morsecomplex, edelsbrunner2002topological, zomorodian2005computing} introduced persistent homology as a method \rev{for encoding} the topological structure of data given in the form of scalar fields or point clouds through persistence diagrams and barcodes. 
Subsequent research expanded their applicability through vectorization techniques,
including persistence landscapes \cite{bubenik2015statistical}, persistence images \cite{adams2016persistenceimages}, and other representations.

In recent years, persistence diagrams have been extensively studied, with a growing emphasis on cross-persistence, a paradigm that analyzes interactions between two manifolds, two weighted graphs or two scalar functions. In \cite{barannikov2021manifold}, the cross-barcode and Manifold Topology Divergence (MTD), a tool for assessing the performance of deep generative models such as GANs, were introduced. Subsequent development in \cite{barannikov2022representation} proposed a framework for comparing latent representations of data, allowing to gain insights into neural network representations in CV and NLP domains.

The proliferation of cross-persistence methods has spurred applications beyond traditional domains. Recent works \cite{tulchinskii2023intrinsicdimensionestimationrobust, Tulchinskii_2023} demonstrate the versatility of these tools, addressing challenges in AI-generated text detection and speech classification. By leveraging the intrinsic topological structure of data, their approach achieves SOTA results in many domains, emphasizing the broader potential of cross-persistence in modern machine learning tasks ~\cite{rieck2020uncoveringtopologytimevaryingfmri, kushnareva2021artificial}.

\section{Background}
\label{sec:methods}

In this section, we review the fundamental concepts of persistent homology, the construction of Rips persistence diagrams, and introduce the Cross-Barcode.

\subsection{Persistence Barcodes}

Let \( X \) be a topological space and \( f : X \to \mathbb{R} \) be a continuous function. The filtration induced by \( f \) is a nested sequence of sublevel sets defined as:
\begin{equation}
X_{\alpha} = \{ x \in X : f(x) \leq \alpha \}, \quad \alpha \in \mathbb{R}.
\end{equation}
As $\alpha$ increases, these sublevel sets define the filtration on chain complexes, inducing the evolution of the topological features , such as connected components, loops, and voids. Persistence barcode tracks this evolution by focusing on the birth and death of fundamental topological features across different scales.

Each basic topological feature, be it a connected component, a loop, or a void, is associated with a persistence interval, defined by its birth time $\alpha_b$ (when the feature first appears) and its death time $\alpha_d$ (when the feature disappears). The pair ($\alpha_b$, $\alpha_d$) encapsulates the "lifetime" of the feature as the parameter $\alpha$ increases.

\subsection{ Persistence Diagrams}

Assume we are dealing with a finite point cloud \( X = \{ p_1, p_2, \dots, p_n \} \) in metric space $(M, \rho)$ and a real number $\alpha \geqslant 0$. The Vietoris-Rips simplicial complex $Rips_{\alpha}(X)$ consists of the set of all simplices with vertices from $X$ such that the distances between the vertices do not exceed $\alpha$.

The Vietoris-Rips filtration $K(X) = (K(X,\alpha))_{\alpha}$ is the sequence of nested simplicial complexes:

\begin{equation}
\emptyset = Rips_0(X)  \subseteq \dots \subseteq Rips_{\epsilon_n}(X) = Rips_{\infty}(X).
\end{equation}
where \( \epsilon_1 < \epsilon_2 < \dots < \epsilon_n \). The persistence barcode of this filtration  consists of intervals \( (\alpha_b, \alpha_d) \), representing the birth and death of a fundamental topological feature (such as a connected component or loop) as the scale parameter \( \epsilon \) increases.

The set of features obtained can also be represented by the persistence diagram $PD[K(X)]$ in which each interval \( (\alpha_b, \alpha_d) \) is represented as a point in the $\Delta:=\{r=(r_1, r_2), r_1\le r_2\le \infty\}$ extended half-plane.

\subsection{The  density of expected persistence diagrams}

Let \(M\) be a compact, smooth \(d\)-dimensional Riemannian manifold.
A point cloud is modeled as the random vector
\begin{equation}
X=(X_1,\dots,X_n)\in M^n,
\qquad
X_i\stackrel{\text{i.i.d.}}{\sim}\mu.
\end{equation}
where \(\mu\) is a Borel probability measure on \(M\).
Replacement of deterministic samples with the random variable \(X\) allows bringing stochastic and statistical tools to the study of persistent homology.

For each scale parameter \(\alpha\ge 0\) we construct the simplicial complex
\(K(X,\alpha)\) and obtain the filtration
\(K(X)\).
The resulting \(s\)-dimensional persistence diagram $PD_s[K(X)]$ is random and we consider it as the discrete measure on $\mathbb{R}^2$:

\begin{equation}
D_s[K(X)] = \sum_{r\in PD_s[K(X)]} \delta_r.
\end{equation}
where $\delta_r$ is the Dirac measure concentrated at a  point $r$ from the persistence diagram $PD_s[K(X)]$. 

For a Borel set \(B\subset\Delta\), define the expectation of \(D_s[K(X)]\) in the standard way:
\begin{equation}
\mathbb{E}[D_s[K(X)]](B)=\mathbb{E}\bigl[D_s[K(X)](B)\bigr].
\end{equation}
The resulting deterministic measure \(\mathbb{E}[D_s[K(X)]]\) is called the
\emph{Expected Persistence Diagram}; it summarizes the average location of topological features generated by the sampling distribution \(\mu\).
  
In \cite{Density-Chazal}, it is proven that \(\mathbb{E}[D_s[K(X)]]\) admits a density \(p\) with respect to two-dimensional Lebesgue measure, i.e.  

\begin{equation}
\mathbb{E}[D_s[K(X)]](B)=\!\!\iint_{B}\! p(b,d)\,db\,dd,
\qquad B\subset\Delta.
\end{equation}
See Appendix~\ref{app:theorem_proof_base} for more details.



\subsection{Cross-Barcode (P, Q)}
\label{sec:Cross-Barcode}

The Cross-Barcode extends the concept of persistence to compare the topological features of two different point clouds, \( P \) and \( Q \), by examining their combined topological structure. To construct the Cross-Barcode, we first create a filtered simplicial complex based on the union of \( P \) and \( Q \). This is done by considering the weighted graph  (\( \Gamma_{P \cup Q}\), \(m_{(P \cup Q)/Q} \)), where the vertices represent the points in \( P \cup Q \), and the edge weights are the pairwise distances between these points. Notably, the distances between points within \( Q \) are set to zero, which emphasizes the structure of \( P \) relative to \( Q \).

The resulting Vietoris-Rips complex \( R_{\alpha}(\Gamma_{P \cup Q}, m_{(P \cup Q)/Q}) \) is a sequence of nested simplicial complexes, and the persistence of topological features is tracked as \( \alpha \) increases. Specifically, the \( \text{Cross-Barcode}_i(P, Q) \) records the persistence intervals for each \( i \)-dimensional feature in this combined filtration. These intervals capture the birth and death times of topological features that arise due to the interaction between the point clouds \( P \) and \( Q \), allowing for a direct comparison of their topological structures.

\begin{equation}
\text{Cross-Barcode}_i(P, Q) = \{ (\alpha_b, \alpha_d) \mid \alpha_b < \alpha_d \}.
\end{equation}

Topological features that persist across both point clouds are considered significant, and the Cross-Barcode helps quantify the extent to which the features of \( P \) and \( Q \) align in their topology. This concept is particularly useful in applications where comparing the topological similarity between two manifolds is essential. An example of a cross-persistence diagram is depicted in Fig.~\ref{fig:pipeline Rips}

\begin{figure}[ht]
\vspace{-.15in}
\centering
{\includegraphics[width=1\linewidth]{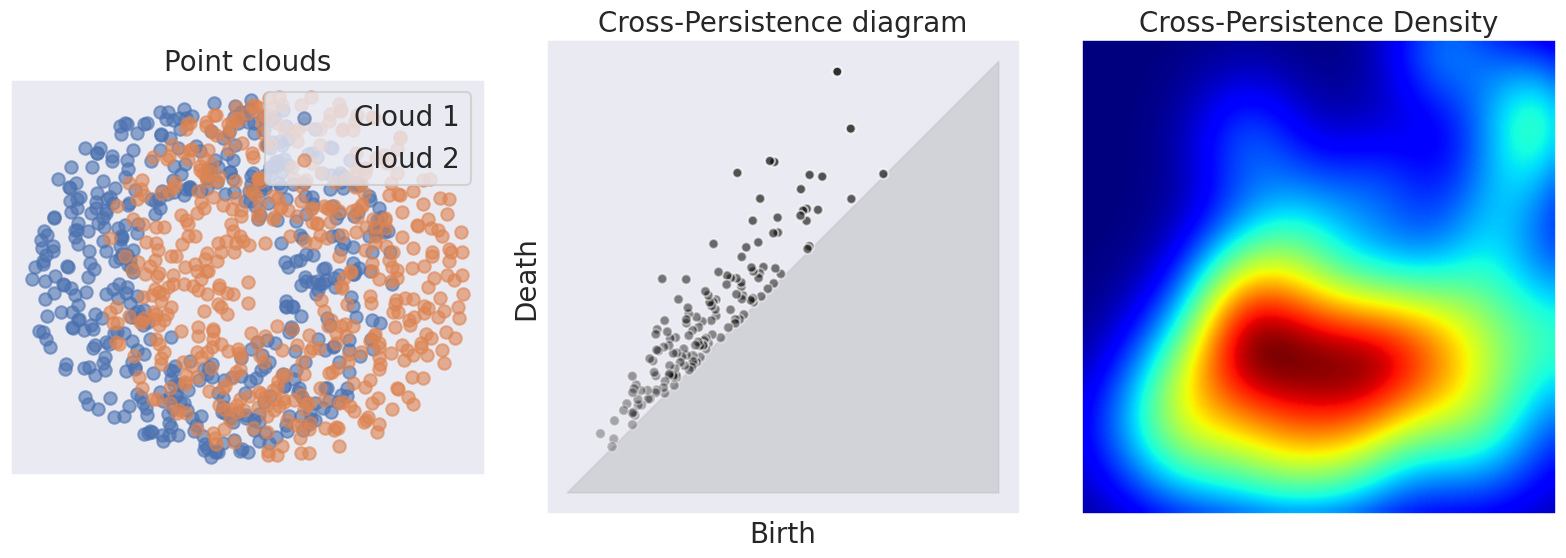}}
\caption{Standard pipeline for Cross-barcode vectorization}
\label{fig:pipeline Rips}
\end{figure}

\section{The density of the cross-persistence diagrams}
\label{sec:Density}
In this work, we extend the result of \cite{Density-Chazal} from persistence diagrams to cross-persistence diagrams.

 \begin{theorem}[]
 Let $n, k \geq 1$. Assume that $M$ and $N$ are real analytic compact $d$-dimensional connected submanifolds, possibly with boundaries, and that $\mathbf{X} \in M^n$, $\mathbf{Y} \in N^k$ are random variables with densities with respect to the Hausdorff measures $\mathcal{H}_{dn}$ and $\mathcal{H}_{dk}$, respectively. Then, for the $\mathcal{K}$ - Vietoris-Rips cross-persistence filtration, $\mathbf{Z} = (\mathbf{X}, \mathbf{Y})$ and $s \geq 1$, the expected measure $\mathbb{E}[D_s[\mathcal{K}(\mathbf{Z})]]$ admits a density with respect to the Lebesgue measure on $\Delta$. Furthermore, $\mathbb{E}[D_0[\mathcal{K}(\mathbf{Z})]]$ admits a density with respect to the Lebesgue measure on the vertical line $\{0\} \times [0, \infty)$.
 \end{theorem}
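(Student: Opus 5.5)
We will follow the strategy of \cite{Density-Chazal} and check that its hypotheses hold for the cross-persistence filtration. The key abstract result there (see Appendix~\ref{app:theorem_proof_base}) states: suppose the filtration $\mathcal{K}(\mathbf{z})$ on a fixed simplicial complex $K$ (the full simplex on $n+k$ vertices) is determined by filtration values $\phi_\sigma(\mathbf{z})$, $\sigma\in K$. Suppose further that (i) each $\phi_\sigma$ is a subanalytic function of $\mathbf{z}$, of the form $\phi_\sigma=\max_{e\subset\sigma}\phi_e$ over edges; (ii) the gradient of each relevant $\phi_\sigma$ is nonzero almost everywhere; and (iii) for two simplices $\sigma\neq\tau$ whose values are not forced to coincide, the set $\{\phi_\sigma=\phi_\tau\}$ has measure zero. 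Then $\mathbb{E}[D_s]$ has a density on $\Delta$ for $s\ge 1$. We will also need that the pushforward of a density under a map with almost-everywhere full-rank differential is absolutely continuous.

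\textbf{Step 1: the filtration.} For $\mathbf{z}=(\mathbf{x},\mathbf{y})\in M^n\times N^k$, define edge weights as follows: $\phi_{\{i,j\}}=\|x_i-x_j\|$ within $P$, $\|x_i-y_j\|$ across the two clouds, and $0$ within $Q$. Set $\phi_\sigma=\max_{e\subset\sigma}\phi_e$. Each $\phi_e$ is either identically zero or the Euclidean distance restricted to a real analytic manifold. Hence all $\phi_\sigma$ are subanalytic, and since $M,N$ are compact, $M^n\times N^k$ is a compact real analytic manifold with boundary carrying a density.

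\textbf{Step 2: generic injectivity.} For a fixed pair of distinct non-constant edges $e\neq e'$, the analytic function $\|z_a-z_b\|^2-\|z_c-z_{d}\|^2$ is not identically zero on the connected analytic manifold $M^n\times N^k$. This follows by moving one endpoint that is not shared, which is possible because $d\ge1$ and the manifolds are connected. Its zero set therefore has $\mathcal{H}_{d(n+k)}$-measure zero. The same argument, using that the squared distance is nonconstant in each free endpoint, gives $\nabla\phi_e\neq 0$ almost everywhere.

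\textbf{Step 3: the pushforward argument.} On the full-measure open set where all nonzero edge lengths are distinct and nonzero, the combinatorial type of the filtration is locally constant. This means the order of simplices and the persistence pairing $(\sigma,\tau)$ are fixed, and each point of $D_s$ for $s\ge1$ equals $(\phi_{e(\sigma)},\phi_{e(\tau)})$ for edges $e(\sigma)\neq e(\tau)$, both nonconstant.

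The main obstacle is the cross-specific degeneracy: the zero edges inside $Q$. Every simplex supported in $Q$ has value $0$, so we must check two things. First, no positive-dimensional finite bar of degree $s\ge1$ is born by a simplex whose maximal edge is a $Q$–$Q$ edge. Such simplices enter at time $0$ together with the whole contractible simplex on $Q$, so they only create classes killed at time $0$, and these do not appear as off-diagonal points. Second, the birth and death edges differ. Given this, we show that the map $\mathbf{z}\mapsto(\phi_e(\mathbf{z}),\phi_{e'}(\mathbf{z}))$ has rank $2$ almost everywhere. We handle this in two cases. If $e,e'$ have a non-shared endpoint, vary that endpoint independently. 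If they share all endpoints, they are the same edge, which is impossible. Since rank $2$ holds on a dense open set of an analytic manifold, the coarea formula gives that the pushforward of the density of $\mathbf{z}$, restricted to each of the finitely many type-regions, is absolutely continuous on $\Delta$. Summing over the finitely many pairs yields the density of $\mathbb{E}[D_s]$.

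\textbf{Step 4: degree $0$.} All vertices are born at $0$, so points of $D_0$ are $(0,\phi_e)$ for the edge $e$ merging two components. Cross-edges and $P$-edges are nonconstant with almost-everywhere nonzero gradient, so the one-dimensional pushforward is absolutely continuous on $\{0\}\times[0,\infty)$. The $Q$-component merges at time $0$ give deaths at $0$, which are diagonal points and are discarded.
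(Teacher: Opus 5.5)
\textbf{Where your Step~2 fails.} Your route is essentially the paper's. You restrict to the open set where the combinatorial type is locally constant and write each off-diagonal point as $(\phi_e,\phi_{e'})$ for distinct non-constant edges. You get rank $2$ by moving an endpoint of one edge that does not lie on the other, apply the coarea formula, and use rank $1$ on $\{0\}\times[0,\infty)$ for $s=0$. Your handling of the zero $Q$--$Q$ edges is more explicit than the paper's $E_1$ convention.

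The gap is in Step~2. The claim that moving a free endpoint changes the length is true for a $P$--$P$ edge: $x'\mapsto\|x-x'\|$ vanishes at $x'=x$, so it is not constant on $M$, and this is why the point comes for free in single-cloud Rips. For a cross edge the claim is false. The map $x\mapsto\|x-y\|$ is constant on $M$ whenever $M$ lies on a sphere centred at $y$, and this can hold for all $y\in N$ at once. Take $d=1$, $M=\{(\cos\theta,\sin\theta,0,0)\}$ and $N=\{(0,0,\cos\psi,\sin\psi)\}$ in $\mathbb{R}^4$. Then $\|x-y\|^2=\|x\|^2+\|y\|^2=2$ on $M\times N$, so every cross edge has length $\sqrt{2}$ identically. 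Consequently your genericity set in Step~3 is empty once $n+k\ge 3$. Your claim (Steps~2 and~4) that cross edges are non-constant with a.e.\ nonzero gradient also fails.

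\textbf{The statement itself fails here.} With i.i.d.\ uniform samples, the complex is $\Delta_Q\sqcup\mathrm{Rips}_t(P)$ for $t<\sqrt{2}$, where $\Delta_Q$ is the full simplex on $Q$. At $t=\sqrt{2}$ it becomes the contractible join $\Delta_Q*\mathrm{Rips}_{\sqrt{2}}(P)$, so every class still alive dies exactly at $\sqrt{2}$.
\begin{itemize}
\item For $n=1$, the only finite off-diagonal point of $D_0$ is $(0,\sqrt{2})$ almost surely, which is an atom.
\item For $n=6$, with positive probability all consecutive arcs are near $\pi/3$, so $\mathrm{Rips}_t(P)$ is a hexagon for $t$ just below $\sqrt{2}$. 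Its $H_1$ class dies exactly at $\sqrt{2}$, so $\mathbb{E}[D_1]$ charges the null line $\{r_2=\sqrt{2}\}$.
\end{itemize}
The paper's proof has the same hole: it imports the Chazal--Divol lemmas without checking (K4)--(K5) for cross edges, and both fail in this example.

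\textbf{What is needed.} You need a hypothesis on $(M,N)$, for example that $(x,y)\mapsto\|x-y\|^2$ on $M\times N$ depends nontrivially on $x$ and on $y$. This is automatic when $M,N\subset\mathbb{R}^d$ are full-dimensional. By analyticity on the connected $M\times N$, both partial gradients then vanish only on a null set. Every pair of distinct non-constant edges then has an endpoint whose motion changes one length but not the other, and your Steps~2--4 go through as written.

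Separately, you and the paper both use a joint density of $\mathbf{Z}$ on $M^n\times N^k$. That requires $\mathbf{X}$ and $\mathbf{Y}$ to be independent, or an explicit joint-density assumption. For example, with $v\neq 0$, $N=M+v$, $\mathbf{Y}=\mathbf{X}+v$ and $n=k=1$, one gets the atom $(0,\|v\|)$ in $D_0$.
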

 \begin{proof}
See Appendix~\ref{app:theorem_proof_cross}.
\end{proof}

The space of persistence diagrams is inherently complex, motivating the use of vector space representations. A widely adopted class of mappings is linear representations, defined as  
\begin{equation}
\Psi(D_s) = D_s(f) := \sum_{r \in D_s} f(r).
\end{equation}
where $f$ is a function on $\Delta$. This framework encompasses several well-known representations, such as the persistence image~\cite{adams2016persistenceimages}, the Manifold Topology Divergence~\cite{barannikov2021manifold}, \rev{and} the persistence silhouette~\cite{chazal2013Silhouettes}, among others. These representations not only provide a practical embedding of diagrams into vector spaces, but also preserve essential topological information in a form suitable \rev{for} statistical and machine learning methods.

A key property that makes these representations particularly attractive is that they often admit well-defined probability densities, which are crucial for developing rigorous statistical frameworks. The existence of such densities for a broad class of linear representations is a well-established result.

\begin{proposition}
\label{th:linear_repr}
Under the same assumptions, the density of $\mathbb{E}[D_s[\mathcal{K}(\mathbf{X})](f)]$ exists for all commonly used linear representations $f$ on $\Delta$, which have particularly good theoretical properties.  
\end{proposition}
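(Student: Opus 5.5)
The plan is to interpret the statement in the sense used by Chazal and Divol: the quantity $\mathbb{E}[D_s[\mathcal{K}(\mathbf{Z})](f)]$ is the integral of $f$ against the expected cross-persistence measure. Its well-definedness and regularity then come from the density $p_s$ given by the preceding Theorem, through the identity
\begin{equation}
\mathbb{E}\bigl[D_s[\mathcal{K}(\mathbf{Z})](f)\bigr]=\iint_{\Delta} f(r)\,p_s(r)\,dr .
\end{equation}
For $s=0$ the Lebesgue measure on $\Delta$ is replaced by the Lebesgue measure on $\{0\}\times[0,\infty)$. The first step is to prove this identity for indicator functions $f=\mathbf{1}_B$ of Borel sets $B\subset\Delta$. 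There it is exactly the statement $\mathbb{E}[D_s](B)=\int_B p_s$ from the Theorem. We then extend it to simple functions by linearity, and to nonnegative measurable $f$ by monotone convergence applied on both sides. For the left side this means interchanging $\mathbb{E}$ and the finite sum $\sum_{r\in D_s} f(r)$, which is allowed by Tonelli.

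The second step fixes the class of \emph{commonly used} $f$. I would take $f$ measurable with $|f(r)|\le C\,(r_2-r_1)^{a}$ for some $a\ge 0$, or more generally $|f(r)|\le C\,\omega(r_2-r_1)$ with $\omega$ bounded on bounded sets. This class covers the persistence image (a Gaussian weighted by a ramp in persistence), the silhouette and landscape evaluations at a fixed $t$ (tent functions bounded by the persistence), and the MTop-divergence summand (the persistence length, i.e.\ $a=1$). We split $f=f^+-f^-$ and apply the first step to each part. The identity then holds with finite values once $\mathbb{E}[D_s](|f|)<\infty$.

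The main obstacle is verifying this finiteness. I would argue as follows. Because $M$ and $N$ are compact, every finite bar of the cross-filtration has death at most $\operatorname{diam}(M\cup N)$. Distances within $Q$ are zero and the others are bounded, so finite filtration values are bounded, which gives $\operatorname{pers}(r)\le R$ for a deterministic constant $R$. The number of points of $D_s$ is at most the number of $s$-simplices on $n+k$ vertices, $\binom{n+k}{s+1}$, which is also deterministic. Hence
\begin{equation}
\mathbb{E}[D_s](|f|)\le \binom{n+k}{s+1}\,C\,\sup_{t\le R}\omega(t)<\infty .
\end{equation}
Infinite bars of degree $0$ must be excluded (or sent to a finite cutoff), which is standard. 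Here $\mathbf{Z}=(\mathbf{X},\mathbf{Y})$ plays the role of $\mathbf{X}$ in the statement.

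Finally, if "density of $\mathbb{E}[D_s(f)]$" is read as a density in the parameter of a functional representation, we add one more step. For a persistence image $\rho(u)=\mathbb{E}\sum_r w(r)\,\phi_\sigma(u-r)$, the identity gives $\rho(u)=\int w\,\phi_\sigma(u-\cdot)\,p_s$, which is smooth in $u$ by differentiation under the integral, dominated as in the finiteness step. Silhouettes and landscapes are treated the same way, and for them the result is Lipschitz in $t$.
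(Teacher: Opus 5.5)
Your proposal is correct and takes the same route as the paper's one-line proof: integrate $f$ against the Lebesgue density supplied by the cross-persistence theorem. It also carries out the steps the paper leaves implicit, namely the extension from indicators to general $f$ by linearity and monotone convergence, and the integrability bound from bounded filtration values and at most $\binom{n+k}{s+1}$ points in $D_s$.

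Be aware that both arguments give $\mathbb{E}[D_s(f)]=\iint f\,p_s$ (and smoothness of expected persistence images), not a Lebesgue density for the law of the scalar random variable $D_s(f)$. The paper later invokes the proposition in that stronger sense for MTD values in Section~\ref{sec:experiments} and Appendix~\ref{app:overlap_metric}. That claim needs a separate argument and can fail: for $n=k=1$ and $s=1$ one has $D_1=\emptyset$, so $\mathrm{MTD}_1\equiv 0$.
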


\begin{proof}
Once the cross-persistence diagram admits a Lebesgue density on $\Delta$, applying any linear functional $f$ places us in the standard persistence-diagram setting, so the existence result transfers directly.
\end{proof}


This theoretical guarantee allows us to utilize linear representations of cross-persistence diagrams within statistical pipelines, enabling, for instance, density estimation, hypothesis testing, and the application of classical statistical learning tools to topological summaries of data.

\section{Distinguishing Point Clouds via the Density of Cross-Persistence Diagrams (MTD)}
\label{sec:experiments}
MTD (MTop-Divergence) is a linear representation derived from Cross-persistence diagrams, introduced as a tool for evaluating deep generative models. This method quantifies the discrepancy between two point clouds using information encapsulated in the Cross-Barcode. It provides a practical approach for distinguishing point clouds based on their topological features.

Given two point clouds \( P \) and \( Q \), the MTD in homological dimension \( i \) is defined as:
\begin{equation}
\mathrm{MTD}_i(P, Q) := \sum_{(\alpha_b, \alpha_d) \in \mathrm{CrossBarcode}_i(P, Q)} (\alpha_d - \alpha_b).
\end{equation}

In this work, we address the problem of determining whether a subsample from an unknown point cloud originates from a given core point cloud or another in a collection. Given a set of point clouds \( Q_1, \dots, Q_n \subset \mathbb{R}^n \), each of varying size, \( Q_1 \) is designated as the core cloud. Our goal is to classify a subsample \( \hat{Q_s} \) as originating from \( Q_1 \) or from another cloud in the set.

To solve this, we introduce a statistical method based on MTD. We first estimate the density of MTD values (the existence of this density is established in Proposition~\ref{th:linear_repr}) for subsamples from \( Q_1 \), capturing the distribution of MTD values when comparing a point cloud to itself. Next, we compute the MTD value between \( Q_1 \) and the unknown subsample \( \hat{Q_s} \). By evaluating this value against the density of \( MTD(Q_1, Q_1) \), we estimate the probability that \( \hat{Q_s} \) belongs to \( Q_1 \). This probabilistic framework allows us to decide the origin of \( \hat{Q_s} \) with a high degree of confidence.


To evaluate the performance of our method, we estimate the density of \( \mathrm{MTD}(Q_1, Q_s) \) for each point cloud in the dataset. This is done by computing MTD scores between multiple random subsamples drawn from the core cloud \( Q_1 \) and each candidate cloud \( Q_s \). The resulting empirical density reflects the distribution of MTD values when comparing \( Q_1 \) to other clouds. By analyzing the degree of overlap between \rev{the} density \( \mathrm{MTD}(Q_1, Q_s) \) and \rev{the} reference density \( \mathrm{MTD}(Q_1, Q_1) \), we estimate the probability that the given subsample \( \hat{Q_s} \) originates from the same distribution as \( Q_1 \). A low degree of overlap indicates dissimilar topology. Notably, the order of arguments in \( \mathrm{MTD}(\cdot, \cdot) \) is asymmetric, and reversing the order may lead to different results. This asymmetry is an intrinsic property of the method and must be considered when interpreting the results.

This approach demonstrates the utility of MTD as a robust tool for differentiating point clouds, paving the way for further applications in evaluating generative models and analyzing topological discrepancies.

\rev{
A theoretical justification of the robustness of the density-based comparison procedure, including stability of the MTD density and of the density-overlap functional with respect to estimation error, is provided in Appendix~\ref{app:mtd_stability}.
}

\subsection{Experiments}

For the experiments, we used four real-world datasets containing several point clouds, that is, groups of objects that can be classified into different categories. First, we illustrated the ability of our proposed method to distinguish between point clouds with different intrinsic geometries using a simple dataset (MNIST, Fig.~\ref{fig:mnist}). Next, we demonstrated that our algorithm also performs well on more complex datasets, although it may struggle in certain cases (CIFAR-10 and COIL-20, Figs.~\ref{fig:cifar10} ,~\ref{fig:coil20} and Fig.~\ref{fig:MTD_no_noise_exp_failed}). To overcome these challenges, we proposed injecting noise into the image datasets to improve the quality of the results (MNIST, CIFAR-10, and COIL-20, Fig.~\ref{fig:MTD_noise_exp}).\rev{Finally, we evaluated the method in a high-complexity regime with a larger number of categories (CIFAR-100), where direct visual inspection of density overlap becomes less informative, yet substantial geometric differences between the distributions persist (Fig.~\ref{fig:CIFAR100_mtd}).}


\subsubsection{Distinction with simple test}
As discussed earlier, for each point cloud $Q_k$ in the \rev{considered} dataset, we estimated the density of $\mathrm{MTD}(Q_k, Q_k)$ and $\mathrm{MTD}(Q_k, Q_i)$ for all other point clouds. We then placed them on one graph to be able to visually distinguish them, see Fig.~\ref{fig:MTD_no_noise_exp}. As can be seen from this figure, our proposed method is able to capture the difference between samples from a core cloud and samples from other clouds, as these densities barely intersect.

\begin{figure}[t]
\vspace{-0.15in}
\centering
\subfloat[MNIST]{%
    \includegraphics[width=0.32\linewidth]{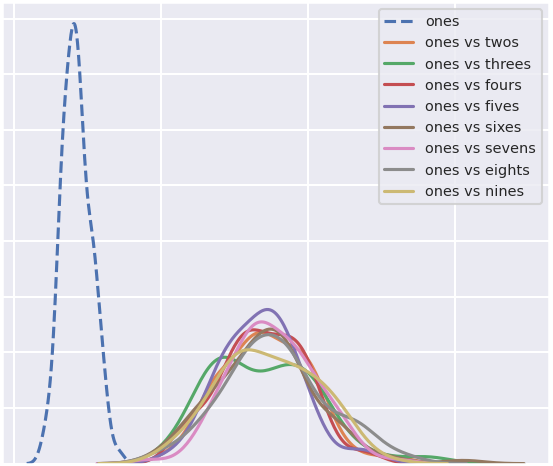}%
    \label{fig:mnist}}
\hfil
\subfloat[CIFAR10]{%
    \includegraphics[width=0.32\linewidth]{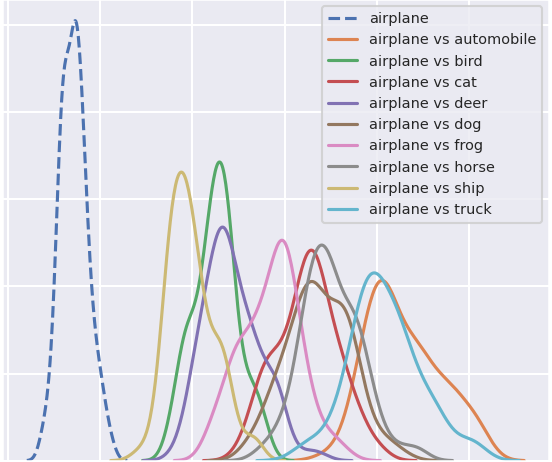}%
    \label{fig:cifar10}}
\hfil
\subfloat[COIL20]{%
    \includegraphics[width=0.32\linewidth]{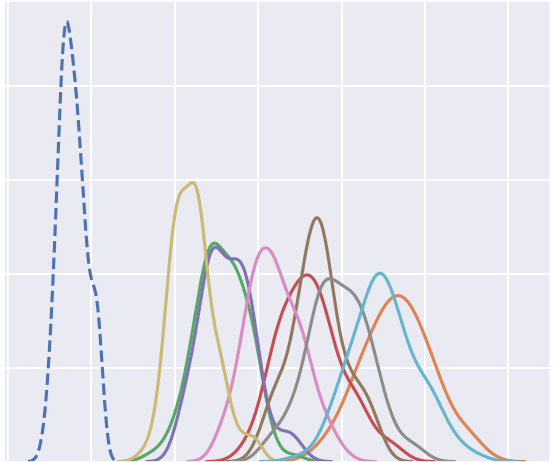}%
    \label{fig:coil20}}
\caption{In these pictures, the density of $MTD(Q_1, Q_1)$ is represented by dashed lines, and all other densities are represented by continuous lines. For each dataset, there is only one picture presented with one core cloud.}
\label{fig:MTD_no_noise_exp}
\end{figure}
Despite very good results on the MNIST dataset, \rev{the overlap-based comparison becomes less visually separable for a subset of pairs in the CIFAR10 and COIL20 datasets (approximately 40\%), see Fig.~\ref{fig:MTD_no_noise_exp_failed}. In these cases, the density supports exhibit non-negligible intersection, indicating a more challenging discrimination regime.}

\begin{figure}[t]
\vspace{-0.15in}
\centering
\subfloat[CIFAR10]{%
    \includegraphics[width=0.49\linewidth]{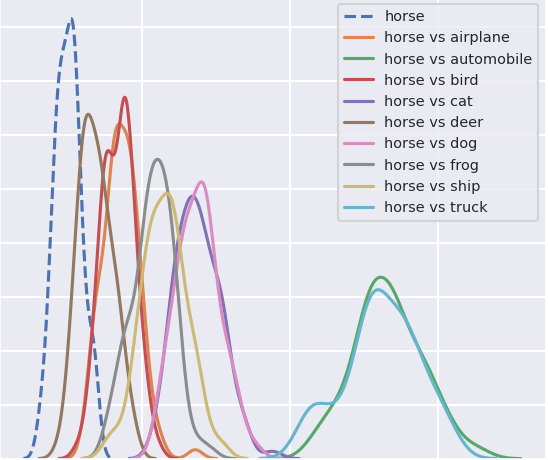}%
    \label{fig:cifar10_failed}}
\hfil
\subfloat[COIL20]{%
    \includegraphics[width=0.49\linewidth]{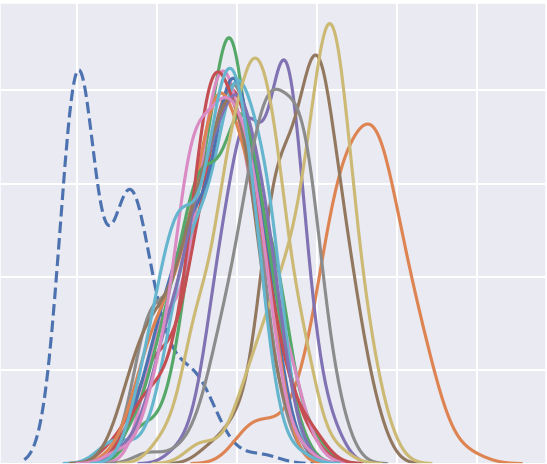}%
    \label{fig:coil20_failed}}
\caption{Densities of $MTD(Q_1, Q_1)$ (dashed) and $MTD(Q_1, Q_s)$ (solid). As can be seen from these images, the probability that our approach suggests that the value of $MTD(Q_1, Q_k)$ was sampled from the same distribution  is greater than the classical significance threshold (0.05).}
\label{fig:MTD_no_noise_exp_failed}
\end{figure}

\subsubsection{Applying noise for amplification of accuracy}
Relative articles from the computer vision domain often utilize noise to improve the performance of their algorithms~\cite{LI2024103855, Parametric_noise}. Such approaches have been shown to enhance robustness, promote better generalization, and mitigate overfitting. Moreover, noise injection helps models to focus on informative structures in the data, leading to the discovery of more discriminative and semantically meaningful features.

In order to overcome the difficulties that our approach encountered in the CIFAR-10 and COIL-20 datasets, we decided to add noise to them in varying degrees, see Fig.~\ref{fig:MTD_noise_exp} and~\ref{fig:MTD_noise_process}. In all these experiments only right point cloud in $MTD(Q_1, Q_i)$ was noised for all $i$.

\begin{figure*}[!t]
\vspace{-0.15in}
\centering
\subfloat[CIFAR10]{%
    \includegraphics[width=1\linewidth]{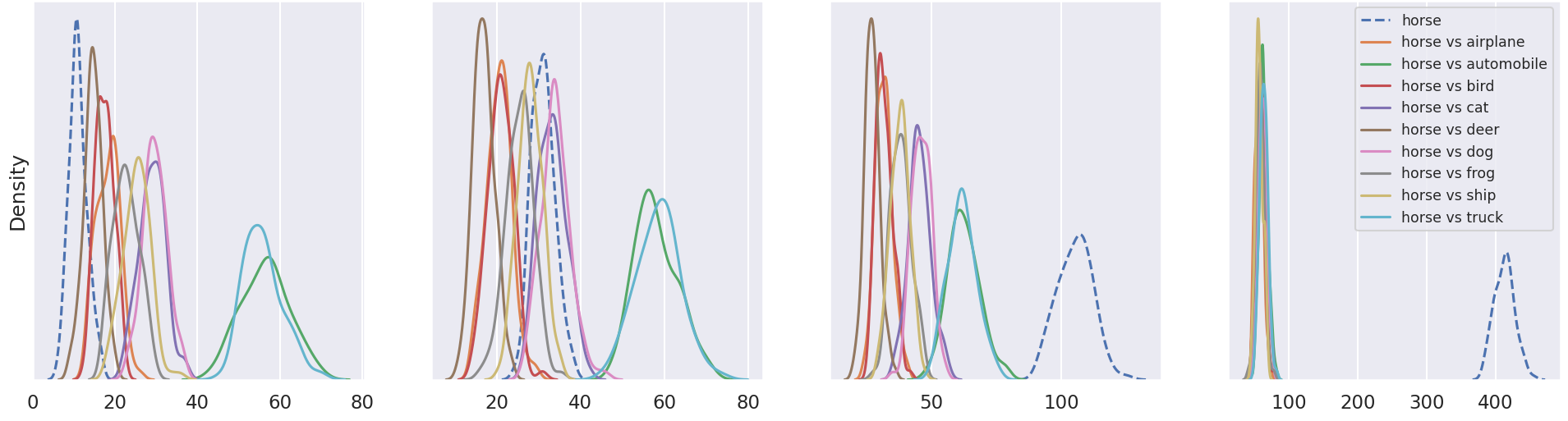}%
    \label{fig:cifar10_noise}}
\vfill
\subfloat[COIL20]{%
    \includegraphics[width=1\linewidth]{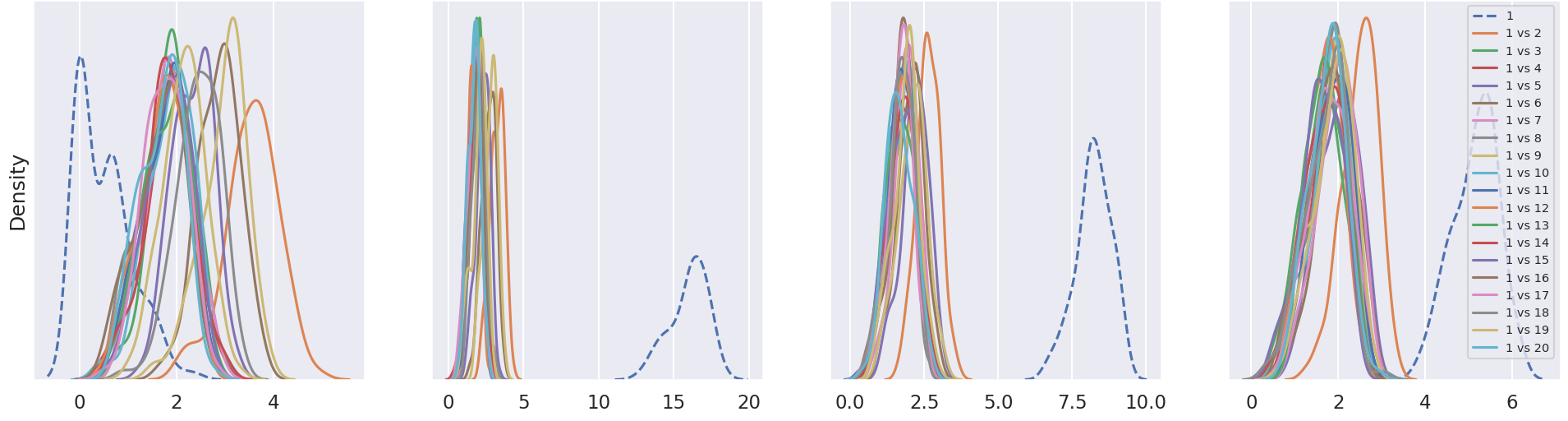}%
    \label{fig:coil20_noise}}
\caption{Densities of $MTD(Q_1, Q_1)$ (dashed) and $MTD(Q_1, Q_s)$ (solid), where Gaussian noise is applied only to the right argument (i.e., $MTD(\text{Pure}_i, \text{Noised}_i)$). From left to right, the relative noise norms $||\xi|| / ||x||$ are $[0\%, 25\%, 50\%, 75\%]$. As the noise intensity increases, the density of $MTD(Q_1, Q_1)$ gradually shifts to the right and eventually merges with the others once the objects become indistinguishable.}
\label{fig:MTD_noise_exp}
\end{figure*}
\begin{figure}[ht]
\vspace{-.15in}
\centering
{\includegraphics[width=1\linewidth]{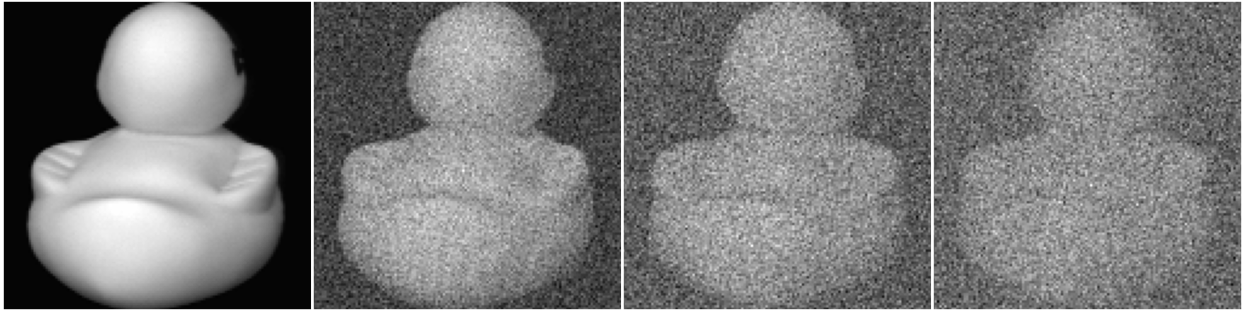}}
\caption{The process of adding Gaussian noise to COIL20 images. From left to right, the relative noise levels $||\xi|| / ||x||$ are $[0\%, 25\%, 50\%, 75\%]$}
\label{fig:MTD_noise_process}
\end{figure}

With the intention of understanding how the noise application process affects our algorithm, we conducted experiments using different positions for the application of noise: for all experiments, we added noise only to the right cloud and to both left and right. See Fig.~\ref{fig:MTD_noise_diff_places} for the results. From these experiments, we can conclude that MTD density is very robust and is not significantly affected when we apply noise to both sides of $MTD(*, *)$ simultaneously.

\rev{
Importantly, the application of noise does not introduce a new comparison criterion but rather amplifies geometric discrepancies already present in the cross-persistence structure. The density-overlap functional remains unchanged; noise shifts the relative mass of the distributions in a way that makes the existing discrepancy more pronounced and therefore easier to detect.}

\rev{
A quantitative sensitivity analysis of this effect is provided in Appendix~\ref{app:noise_sensitivity}, where we show that for datasets such as COIL20 there exists a broad range of noise levels for which the average overlap between $MTD(Q_1,Q_1)$ and $MTD(Q_1,Q_k)$ (for $k\neq 1$) approaches zero.
}

\begin{figure}[ht]
\vspace{-0.15in}
\centering
\subfloat[MNIST]{%
    \includegraphics[width=1\linewidth]{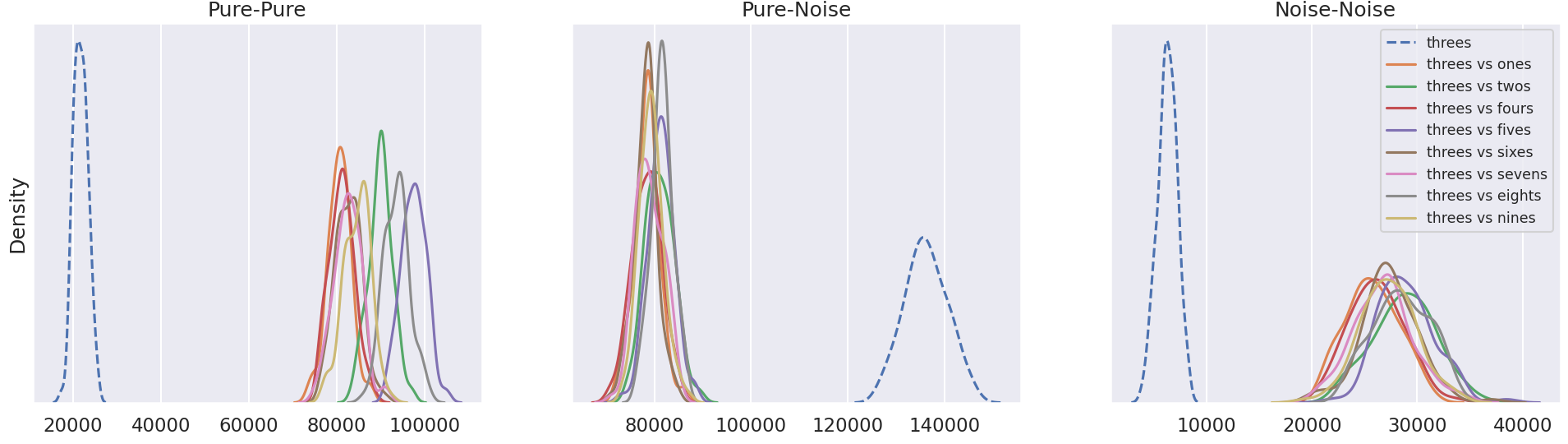}%
    \label{fig:mnist_noise_diff}}
\vfill
\subfloat[CIFAR10]{%
    \includegraphics[width=1\linewidth]{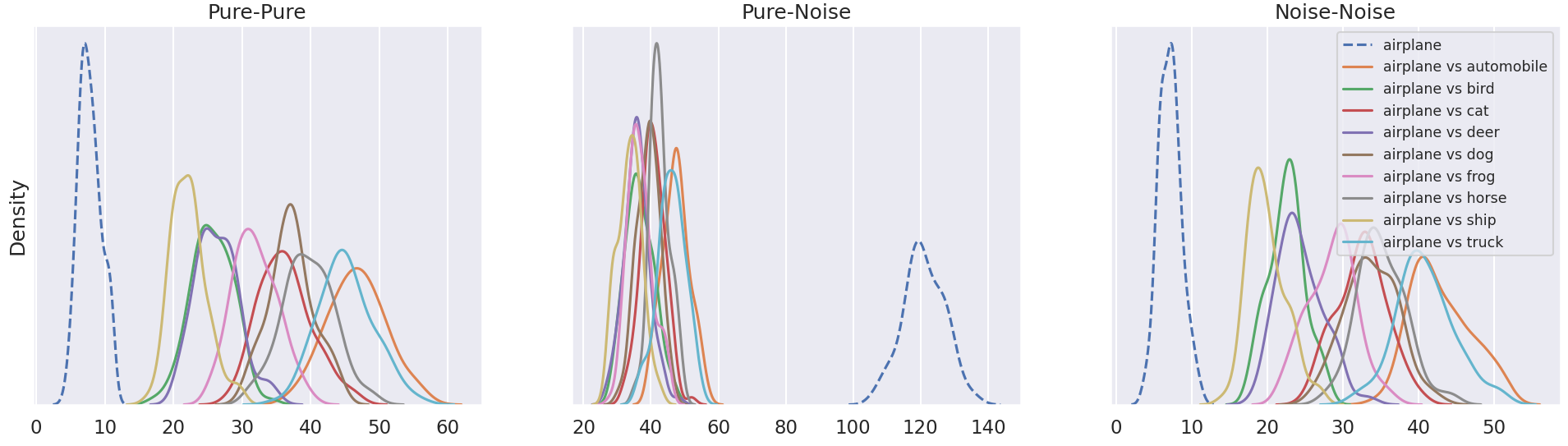}%
    \label{fig:cifar10_noise_diff}}
\vfill
\subfloat[COIL20]{%
    \includegraphics[width=1\linewidth]{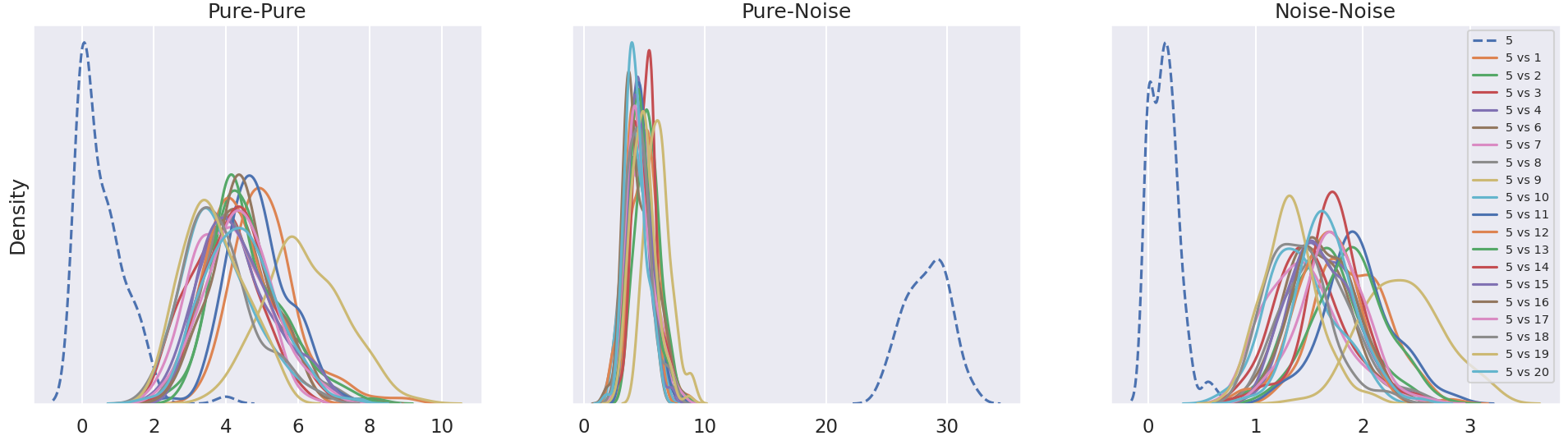}%
    \label{fig:coil20_noise_diff}}
\caption{Densities of $MTD(Q_1, Q_1)$ (dashed) and $MTD(Q_1, Q_s)$ (solid). The relative noise norm is $||\xi|| / ||x|| = 50\%$. These plots illustrate three regimes of point cloud comparison: without noise, with noise applied only to the right point cloud, and with noise applied to both.}
\label{fig:MTD_noise_diff_places}
\end{figure}




\subsubsection{High-complexity regime (CIFAR100)}
\rev{
In the case of the CIFAR100 dataset, the overlap-based comparison becomes significantly more challenging, even when noise is applied, see Fig.~\ref{fig:CIFAR100_mtd}. Nevertheless, the overlap values remain far from saturation, indicating substantial discrepancy between $MTD(Q_1,Q_1)$ and $MTD(Q_1,Q_s)$ distributions.}

\rev{In this regime, direct visual inspection of density overlap becomes less informative due to the high complexity and heterogeneity of the dataset. However, we observe that the support of $MTD(Q_1,Q_1)$ is consistently more concentrated than that of $MTD(Q_1,Q_s)$, reflecting reduced geometric variability when comparing a point cloud with itself.}

\rev{To better characterize this effect, we analyze the relative dispersion of the MTD distributions as a descriptive second-order property that is visually apparent in the estimated densities. 
Importantly, dispersion statistics are not used as a decision criterion in our method; the overlap functional remains the sole comparison metric. Dispersion is reported only as a descriptive second-order property that helps interpret the geometric structure of the distributions in high-complexity regimes.}


\rev{
This situation corresponds precisely to the scenario illustrated in Appendix~\ref{app:overlap_metric}, Fig.~\ref{fig:overlap_four_cases} (Case~3), where two densities exhibit substantial overlap while remaining far from identical and differing markedly in their dispersion. In this regime, the non-saturated overlap already indicates strong discrepancy, while dispersion statistics serve to contextualize the geometric structure of the overlap region.
}

\begin{figure}[ht]
\vspace{-0.1in}
\centering
\subfloat[CIFAR100]{%
    \includegraphics[width=\linewidth]{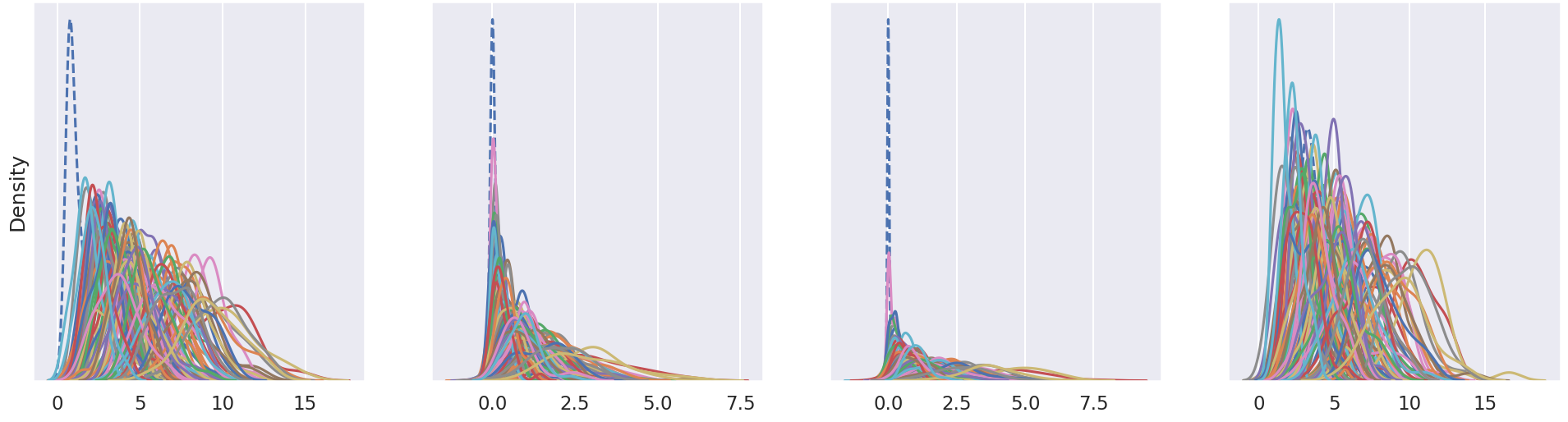}%
    \label{fig:cifar100_mtd}}
\caption{Densities of $MTD(Q_1, Q_1)$ (dashed) and $MTD(Q_1, Q_s)$ (solid), where Gaussian noise is applied only to the right argument (i.e., $MTD(\text{Pure}_i, \text{Noised}_i)$). From left to right, the relative noise norms $||\xi|| / ||x||$ are $[0\%, 25\%, 50\%, 75\%]$. In this high-complexity regime, direct overlap comparison becomes less visually informative due to substantial distributional heterogeneity. However, the overlap values remain far from saturation, indicating persistent geometric discrepancy. The relative  concentration (dispersion) of $MTD(Q_1,Q_1)$ compared to $MTD(Q_1,Q_s)$ 
provides additional descriptive insight into geometric variability. 
See Appendix~\ref{app:overlap_metric} for discussion of this regime.}
\label{fig:CIFAR100_mtd}
\end{figure}

\section{Cross-RipsNet}
\label{sec:Cross-RipsNet}

In the previous sections, we explored the use of cross-persistence in statistical inference. Specifically, we focused on the density of a linear representation derived from these diagrams, namely the Manifold Topology Divergence (MTD), which summarizes diagram information into a single interpretable value. While this approach provides an effective tool for comparing point clouds, estimating such densities requires repeated computation of cross-persistence diagrams-a computationally intensive process.

The RipsNet model, introduced in~\cite{desurrel2022ripsnetgeneralarchitecturefast}, is a point-order invariant neural network designed to transform point clouds into persistence images without directly computing topological features. Although effective in some contexts, it performs poorly in our setting, see Table~\ref{tab:dim_red_comparing}, as predicting cross-persistence densities requires processing two distinct point clouds and capturing their joint topological structure. To adapt RipsNet for this task, we replaced the original loss function with the Kullback--Leibler divergence, which better reflects the distributional nature of our output. We also introduced a normalization layer to improve stability, since the original model failed to converge without these modifications.

To address the computational bottleneck associated with repeated diagram estimation, we introduce \textit{Cross-RipsNet}-a neural architecture designed to predict either the density of cross-persistence diagrams or the density of linear representation such as MTD directly from raw point cloud data. In Section~\ref{sec:Cross_RipsNet_experiments}, we apply Cross-RipsNet to approximate the density of cross-persistence diagrams, while in Appendix~\ref{app:MTD_density_prediction}, we demonstrate how it can be adapted to estimate the density of MTD values.

\rev{In relation to prior persistence-learning architectures, \textit{Cross-RipsNet} differs in both input design and objective. Unlike approaches that operate directly on persistence diagrams with permutation-invariant layers, \textit{Cross-RipsNet} predicts the density of cross-persistence diagrams from raw point clouds (and, optionally, distance-derived features), avoiding explicit diagram computation at inference. Relative to RipsNet, which encodes a single cloud into persistence images, \textit{Cross-RipsNet} explicitly models paired clouds via separate encoders and a shared head, and can incorporate the asymmetric distance matrix \(m_{(P \cup Q)/Q}\) to capture cross-structure that cannot be recovered from either cloud alone. These departures are motivated by the asymmetry and pairwise nature of cross-persistence and by the need to predict a distributional object rather than a single diagram.}

This approach preserves the topological expressiveness of persistence-based descriptors while significantly reducing the computational cost of applying them at scale.

\subsection{Two types of Cross-RipsNet}

The RipsNet model was designed to use coordinate matrices \( X = \{x_1, \dots, x_n\} \subset \mathbb{R}^d \) for predicting persistent images. To achieve this, the authors used a point-order-invariant layer based on the DeepSets architecture, which transforms a given matrix into a feature vector.  
\begin{equation}
\mathrm{RipsNet}: X \rightarrow \phi_2 \left( \sum_{x \in X} \phi_1(x) \right).
\end{equation}
where \( \phi_1: \mathbb{R}^d \rightarrow \mathbb{R}^{d'} \) and \( \phi_2: \mathbb{R}^{d'} \rightarrow \mathbb{R}^{d''} \). See \cite{zaheer2018deepsets} for more information.

To extend this approach for predicting density of cross-persistent diagrams, that can be considered as averaged cross-persistence images, we proposed three architectural variants of Cross-RipsNet, described below and illustrated in Fig.~\ref{fig:Pic_of_archs}.

\textbf{(a) Modified RipsNet.} The first approach treats two input point clouds as a single, merged cloud by concatenating their coordinate matrices. The resulting set is processed jointly through the RipsNet pipeline, as illustrated in Fig.~\ref{fig:Pic_of_archs}a.

\textbf{(b) Cross-RipsNet.} In the second approach, additionally each point cloud is processed independently. The resulting feature representations are then combined and passed through a shared network head. This design, shown in Fig.~\ref{fig:Pic_of_archs}b, enables the model to learn interactions between clouds while maintaining separate encodings.

\textbf{(c) Cross-RipsNet with Distance Matrix.} The third approach builds upon the second, but additionally incorporates the asymmetric distance matrix \( m_{(P \cup Q)/Q} \) between the two clouds \( P \) and \( Q \) (see Section~\ref{sec:Cross-Barcode}). After applying a dimensionality reduction technique to this matrix, it is processed via a DeepSets-based module alongside the outputs of the individual point clouds, see Fig.~\ref{fig:Pic_of_archs}c. This approach inherits the asymmetry from the distance matrix \(m_{(P \cup Q)/Q} \), which enables the model to capture more effectively the underlying structural dependencies in the data.

Empirically, the third approach, Cross-RipsNet with Distance Matrix, demonstrated significantly better performance across our experiments, see Table~\ref{tab:dim_red_comparing}.

\begin{figure}[ht]
\vspace{-.15in}
\centering
{\includegraphics[width=1\linewidth]{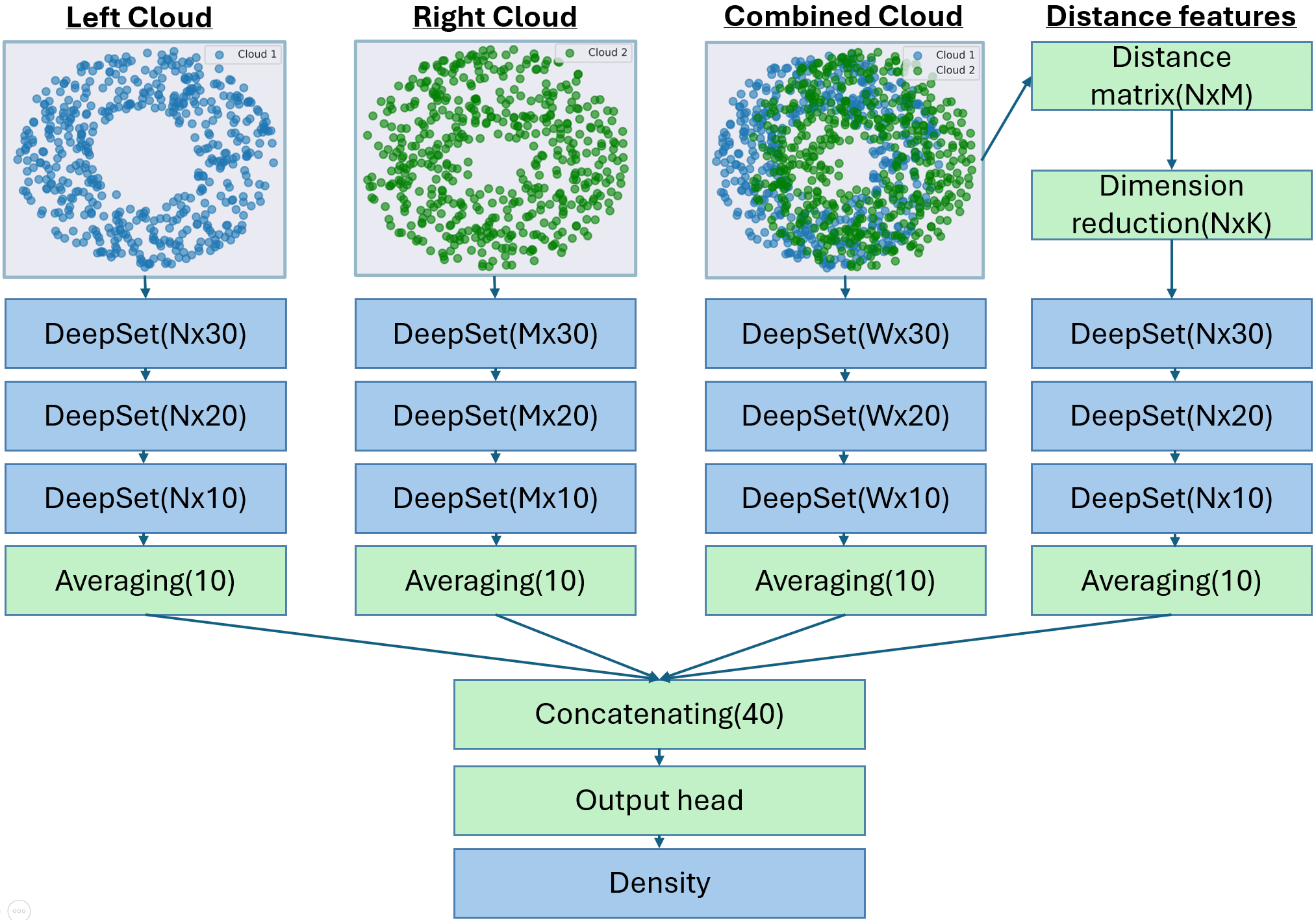}}

\caption{Illustration of the proposed architectures. Let \( N \) and \( M \) denote the sizes of the input point clouds, and let \( W = N + M \). The parameter \( K \) represents the output dimensionality of the dimension reduction method applied to the distance matrix, which contains zeros for all pairs of points within the right cloud. (a) \textbf{RipsNet} includes only the combined cloud block. (b) \textbf{Cross-RipsNet} includes all blocks except the distance features. (c) \textbf{Cross-RipsNet with distance matrix} incorporates all input blocks, including both point coordinates and distance-based features. See Appendix \ref{app:Ablation_CrossRpsNet} for the ablation study of the inputs.}

\label{fig:Pic_of_archs}
\end{figure}

\subsection{How to extract information from distance matrix}
In classical straightforward algorithms, one must construct all possible graphs with rising thresholds under point clouds to calculate persistence diagrams. For this purpose, it is necessary to compute the distance matrix between all points. This results in a computationally expensive process, as the size of the distance matrix grows quadratically with the number of points. As a result, the straightforward method of transforming point clouds into their persistence diagrams is based solely on the distance matrix; however, the RipsNet paper does not use this approach, opting for more efficient methods.

In Cross-RipsNet, we introduce a specialized layer to transform incoming distance matrix \(m_{(P \cup Q)/Q} \), derived from point clouds $P$ and $Q$, into a feature vector. While distance matrices can be computationally inefficient due to their large memory and processing requirements, we propose three approaches to extract essential information while reducing their dimensionality.

\begin{itemize}
    \item \textbf{Principal Component Analysis (PCA).} The first approach uses PCA to reduce the dimensionality of the asymmetric distance matrix to a fixed size \( K \). This provides a compact, low-rank representation of the distance structure, making the downstream neural model more memory-efficient while retaining the global geometry relevant for cross-persistence diagram estimation.

    \item \textbf{Top-\( K \) maximal distances.} In the second approach, we retain, for each point, only the \( K \) largest distances to other points. This is based on the observation that in cross-persistence constructions, prominent topological features often emerge from interactions between distant point pairs. Focusing on these maximal distances preserves critical information about global connectivity while significantly reducing matrix size.

    \item \textbf{Quantile-based distance summarization.} The third and most geometrically informed method involves selecting \( K \) quantiles from each point’s distribution of distances to others. This captures the statistical spread and local-to-global structure of each point’s neighborhood, providing a balanced and informative summary of topological context. As our experiments show, this approach achieves the best trade-off between compression and predictive performance.
\end{itemize}

\begin{table}[!t]
  \caption{Comparison of methods for predicting the density of cross-persistence diagrams across three domains. The $RipsNet$ and $Cross-RipsNet$ models do not use the distance matrix as input, while the other methods reduce its dimensionality to $K=60$ using different approaches.}
  \label{tab:dim_red_comparing}
  \centering
  \setlength{\tabcolsep}{4pt} 
  \renewcommand{\arraystretch}{1.2} 
  \footnotesize
  \resizebox{\columnwidth}{!}{ 
  \begin{tabular}{lccc}
    \toprule
    \textbf{Model} & \textbf{Synthetic} & \textbf{3D shapes} & \textbf{Textual data} \\
    \midrule
    \multicolumn{4}{c}{\textit{Symmetrical KL distance averaged among test data}} \\
    \midrule
    RipsNet & $2.60\pm0.10$ & $0.69\pm0.01$ & $0.90\pm0.03$ \\
    Cross-RipsNet & $2.34\pm0.05$ & $0.52\pm0.01$ & $0.62\pm0.03$ \\
    Cross-RipsNet/PCA & $2.40\pm0.04$ & $0.57\pm0.01$ & $0.66\pm0.02$ \\
    Cross-RipsNet/MAX & $2.38\pm0.04$ & $0.54\pm0.02$ & $0.66\pm0.05$ \\
    Cross-RipsNet/Quantiles & \textbf{2.05$\pm$0.08} & \textbf{0.48$\pm$0.03} & \textbf{0.56$\pm$0.01} \\
    \midrule
    \multicolumn{4}{c}{\textit{Symmetrical KL distance averaged among train data}} \\
    \midrule
    RipsNet & $1.99\pm0.04$ & $0.59\pm0.02$ & $0.27\pm0.012$ \\
    Cross-RipsNet & $1.85\pm0.15$ & $0.38\pm0.02$ & $0.21\pm0.001$ \\
    Cross-RipsNet/PCA & $2.15\pm0.11$ & $0.43\pm0.05$ & $0.21\pm0.004$ \\
    Cross-RipsNet/MAX & $1.81\pm0.20$ & $0.40\pm0.01$ & $0.22\pm0.001$ \\
    Cross-RipsNet/Quantiles & \textbf{1.38$\pm$0.06} & \textbf{0.35$\pm$0.01} & \textbf{0.21$\pm$0.005} \\
    \bottomrule
  \end{tabular}
  }
\end{table}

These methods are designed to preserve the critical information of the original distance matrix while making the computational process more manageable. As we can see in Table~\ref{tab:dim_red_comparing}, \rev{quantile-based summarization} outperforms other methods as a feature extractor from the distance matrix, showing promising results for use in Cross-RipsNet.

\rev{Despite the proposed dimensionality reduction strategies, Cross-RipsNet still requires the explicit construction of an asymmetric distance matrix as an intermediate step. On standard hardware (NVIDIA RTX~3060 with 32\,GB RAM), this limits the practical applicability of the model to point clouds of size up to $N\approx 10^4$ points per cloud, depending on the data dimensionality and batching strategy. Beyond this regime, the quadratic memory footprint of the distance matrix becomes the dominant bottleneck. Importantly, this limitation is shared by all methods relying on Vietoris-Rips type constructions and is not specific to Cross-RipsNet.}

\rev{Computationally, let $N=|P|$ and $M=|Q|$. Forming the asymmetric distance matrix $m_{(P \cup Q)/Q}$ costs $O(N  M d)$ time and $O(NM)$ memory, where $d$ is the ambient dimension used to compute distances, while the DeepSets-style encoders scale linearly in $N$ and $M$. Hence the dominant cost is quadratic in point counts, and higher-dimensional data tightens the practical limit through distance computation.}

\subsection{Cross-RipsNet experiments}
\label{sec:Cross_RipsNet_experiments}

In our experiments, we used one synthetic and two real datasets, which, as in the previous part, consist of objects divided into groups (classes). The goal was to evaluate the ability of Cross-RipsNet to predict the density of cross-persistence diagrams from point clouds, and to compare the performance of the model on both simple and complex datasets.

Firstly, we demonstrated the ability of Cross-RipsNet to predict the density of the cross-persistence diagrams using a simple synthetic dataset. This dataset consisted of randomly selected unions of circles in 2D space. It allowed us to evaluate the model on well-structured, low-dimensional data and assess its effectiveness in predicting cross-persistence diagram densities under ideal conditions.

Secondly, we worked with much more complex real-world data, including 3D shapes and text data. For the 3D shapes \cite{3D_shapes}, we sampled point clouds from various 3D objects. For the text-based data, we tokenized GPT- and human-generated answers \cite{Wiki_gpt} to prompts and embedded them using a pretrained Roberta model. This enabled us to test the model's robustness on high-dimensional, diverse data and assess its generalization capability.

The results demonstrate that Cross-RipsNet is able to accurately predict cross-persistence diagram densities, even for complex and high-dimensional point clouds. Moreover, the computational process is significantly faster than direct calculation methods, showing the potential of Cross-RipsNet for practical applications.

\subsection{Cross-RipsNet experiments (synthetic)}

For the synthetic dataset, we calculated cross-persistence densities directly from point clouds and trained a model to predict these images. The dataset included point clouds consisting of one, two, or three circles, and we explored three different combinations of these circles to generate cross-persistence densities. The combinations were as follows, see Fig.~\ref{fig:PI_examples_circles}:

\begin{enumerate}
    \item One circle vs. two circles
    \item One circle vs. three circles
    \item Two circles vs. three circles
\end{enumerate}

For each combination, we split the data into training and test sets, using 80\% of the data for training and 20\% for testing.

During these experiments, the base version of RipsNet was unable to predict the cross-persistence densities effectively for this dataset. However, with the modifications made to create Cross-RipsNet, we achieved a significant improvement in prediction quality, demonstrating the effectiveness of our model.
\begin{figure}[!t]
\centering
\subfloat[First class]{%
    \includegraphics[width=\columnwidth]{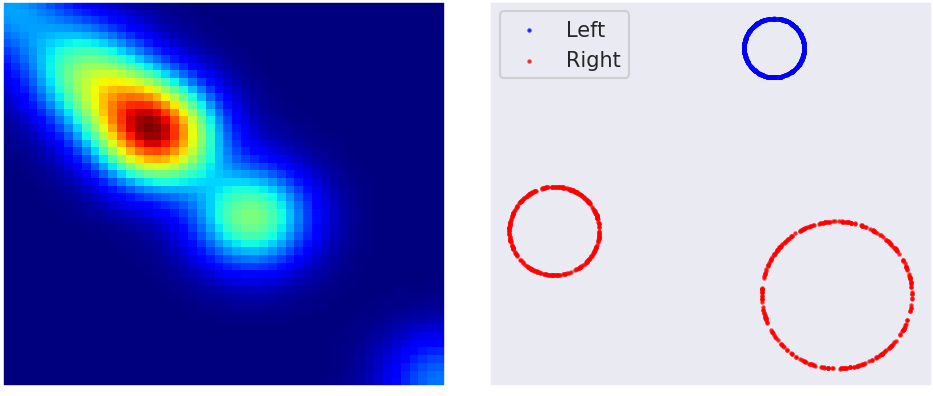}}\\
\subfloat[Second class]{%
    \includegraphics[width=\columnwidth]{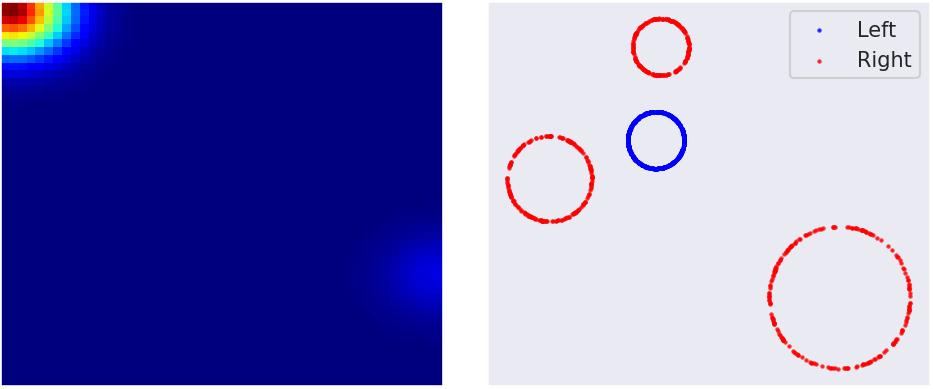}}\\
\subfloat[Third class]{%
    \includegraphics[width=\columnwidth]{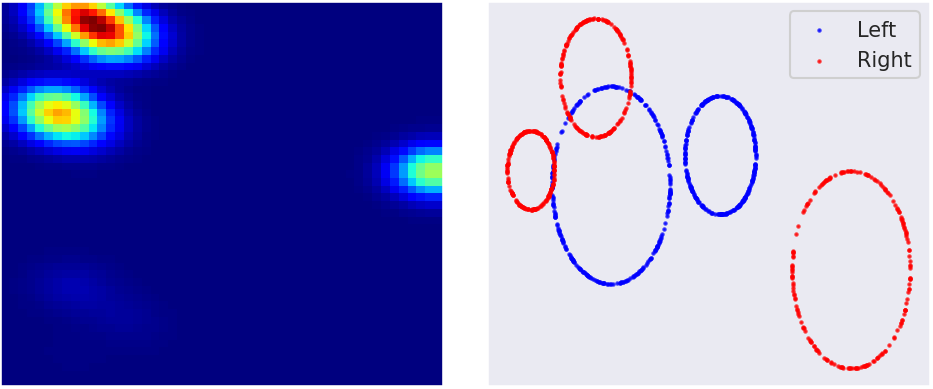}}
\caption{Examples of densities and corresponding pairs of point clouds for the cross-persistence diagrams in the synthetic dataset. All plots share the same spatial boundaries.}
\label{fig:PI_examples_circles}
\end{figure}

\subsection{Cross-RipsNet experiments (Real Data)}

For the real-world experiments, we used two datasets: 3D shapes (from the ModelNet10 dataset) and text data generated by GPT and humans.

\textbf{3D shapes (ModelNet10):} For each 3D object, we sampled 1024 points, which resulted in different point clouds corresponding to 10 distinct 3D shapes. We performed the same experiments as for the synthetic dataset, generating and predicting cross-persistence densities for these point clouds, see Fig.~\ref{fig:3d_shapes_results_test}. In the figure, the left side shows the real cross-persistence densities, while the right side shows the predicted cross-persistence densities. It can be observed that the Cross-RipsNet model effectively predicts the center of density, as well as the size and shape, of the cross-persistence densities.

\begin{figure}[t]
\centering
\subfloat[Train]{%
    \includegraphics[width=0.48\columnwidth]{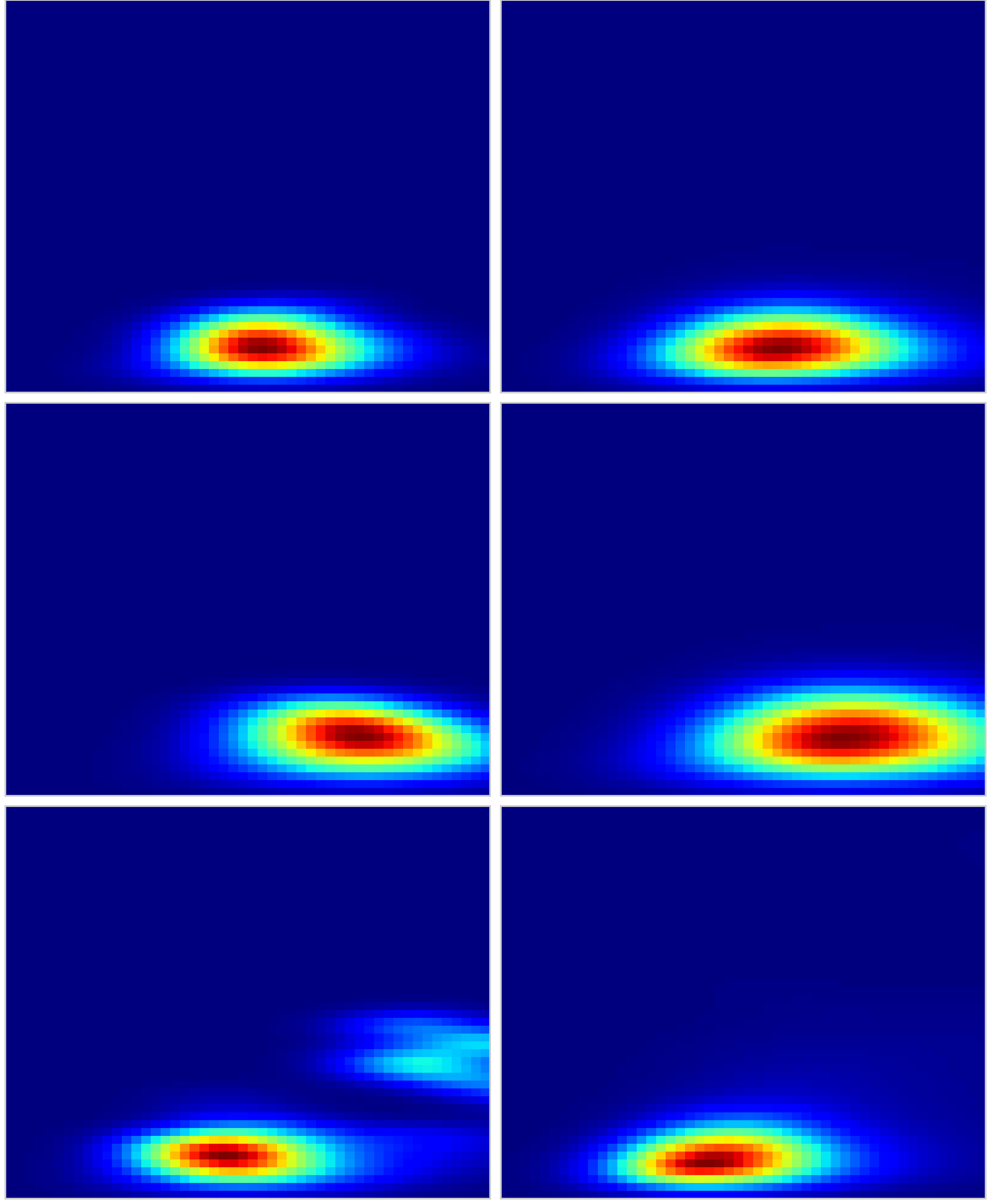}}
\hspace{0.02\columnwidth} 
\subfloat[Test]{%
    \includegraphics[width=0.48\columnwidth]{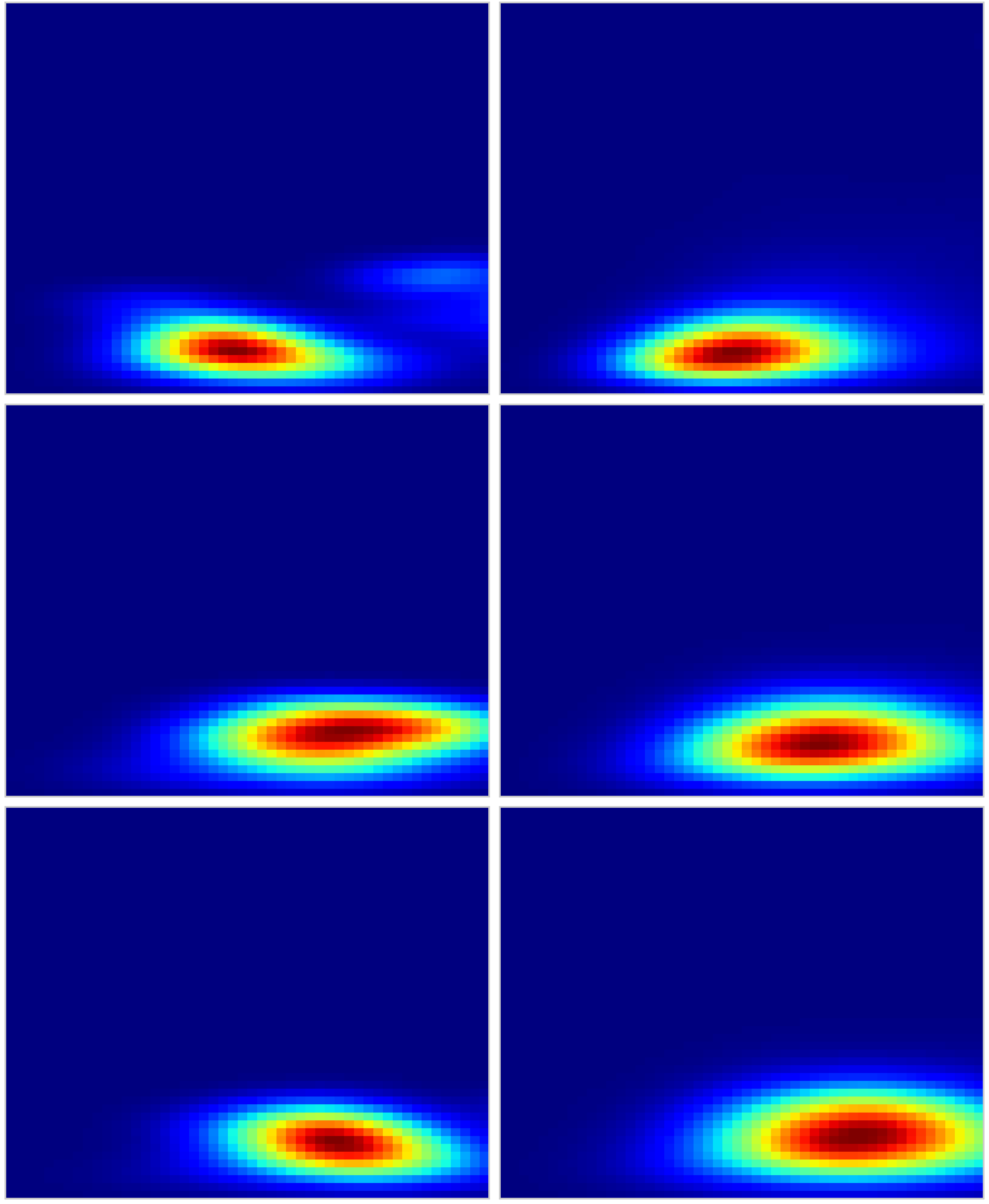}}
\caption{Results for the 3D shapes dataset. For each part of data, the left column shows the real persistence image, and the right column shows the predicted persistence image. Cross-RipsNet accurately predicts the center and shape of the density.}
\label{fig:3d_shapes_results_test}
\end{figure}

\textbf{GPT-Human generated answers on prompts:} For this experiment, we used data generated by both humans and GPT as their answers to prompts sourced from Wikipedia. We tokenized the text data and then used a pretrained Roberta model to transform these tokens into embedding vectors, which were subsequently used to generate point clouds. This resulted in 100 point clouds of different sizes, divided into two classes. Despite the varying point cloud sizes, Cross-RipsNet was able to handle the data effectively, as shown in Fig.~\ref{fig:Texts_results}. On the left side of the figure, we show the real cross-persistence densities, and on the right, the predicted cross-persistence densities. Cross-RipsNet demonstrated sensitivity to the size and shape of the density, as expected.

\begin{figure}[t]
\vspace{-.15in}
\centering
\subfloat[Train]
{\includegraphics[width=0.48\columnwidth]{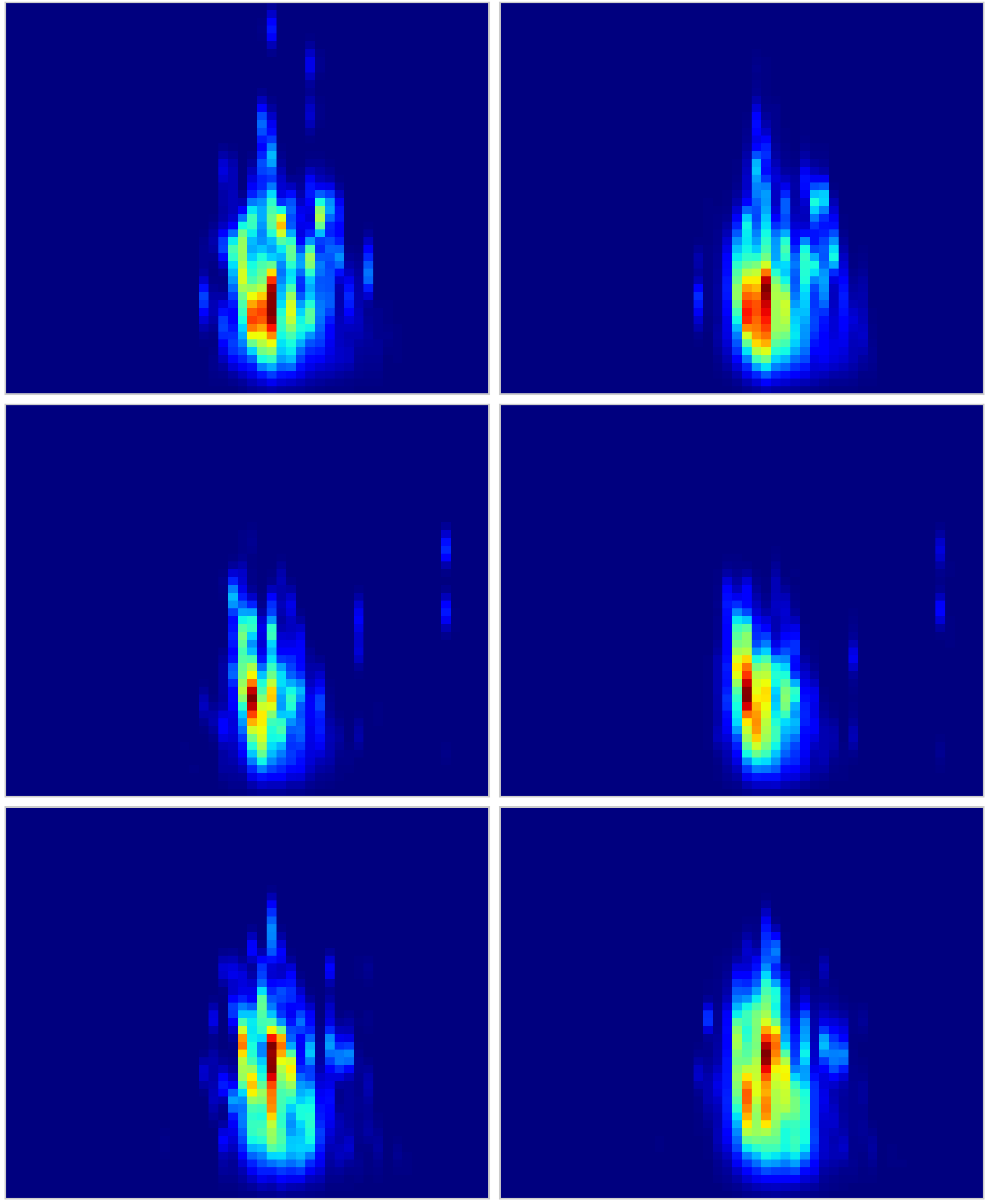}}
\hspace{0.02\columnwidth} 
\subfloat[Test]
{\includegraphics[width=0.48\columnwidth]{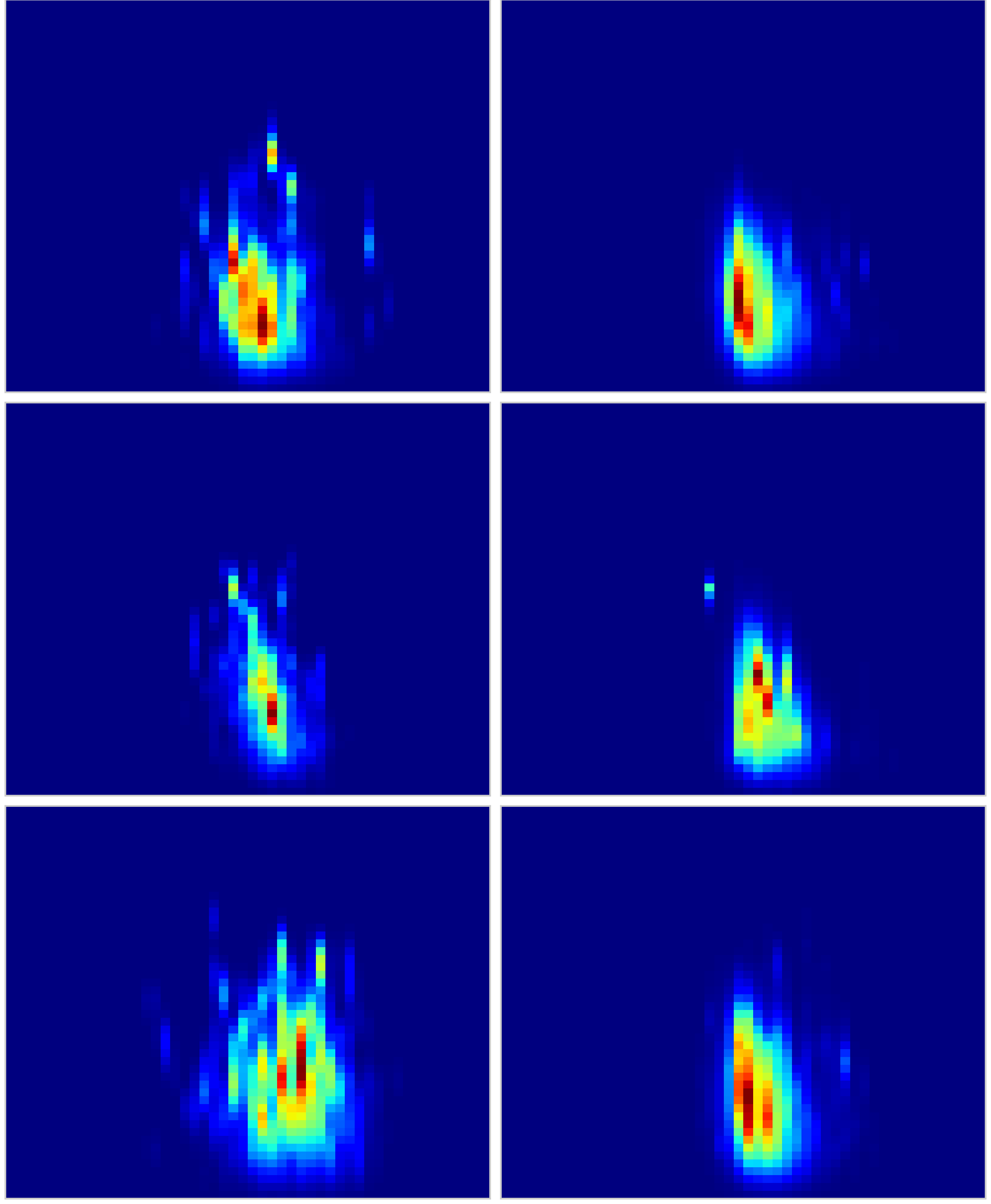}}
\caption{Results for text data. For each part of data, the left column shows the real persistence image, and the right column shows the predicted persistence image. The Cross-RipsNet model captures the size and shape of the density accurately.}
\label{fig:Texts_results}
\end{figure}

\textbf{Zero-dimensional features for text data:} In previous experiments, Cross-RipsNet worked with one-dimensional topological features. However, recent studies on the intrinsic dimension of generated data \cite{tulchinskii2023intrinsicdimensionestimationrobust} suggest that zero-dimensional features can be particularly useful in text domains. To investigate this, we performed additional experiments using zero-dimensional cross-persistence densities. These densities were calculated by positioning all zero-dimensional features along the y-axis, see Fig.~\ref{fig:Texts_results_zero}. On the left side of the figure, we show the real cross-persistence densities, and on the right side, the predicted cross-persistence densities. Cross-RipsNet learned the specific characteristics of this data and demonstrated good prediction quality.

\begin{figure}[t]
\vspace{-.15in}
\centering
\subfloat[Train]
{\includegraphics[width=0.48\columnwidth]{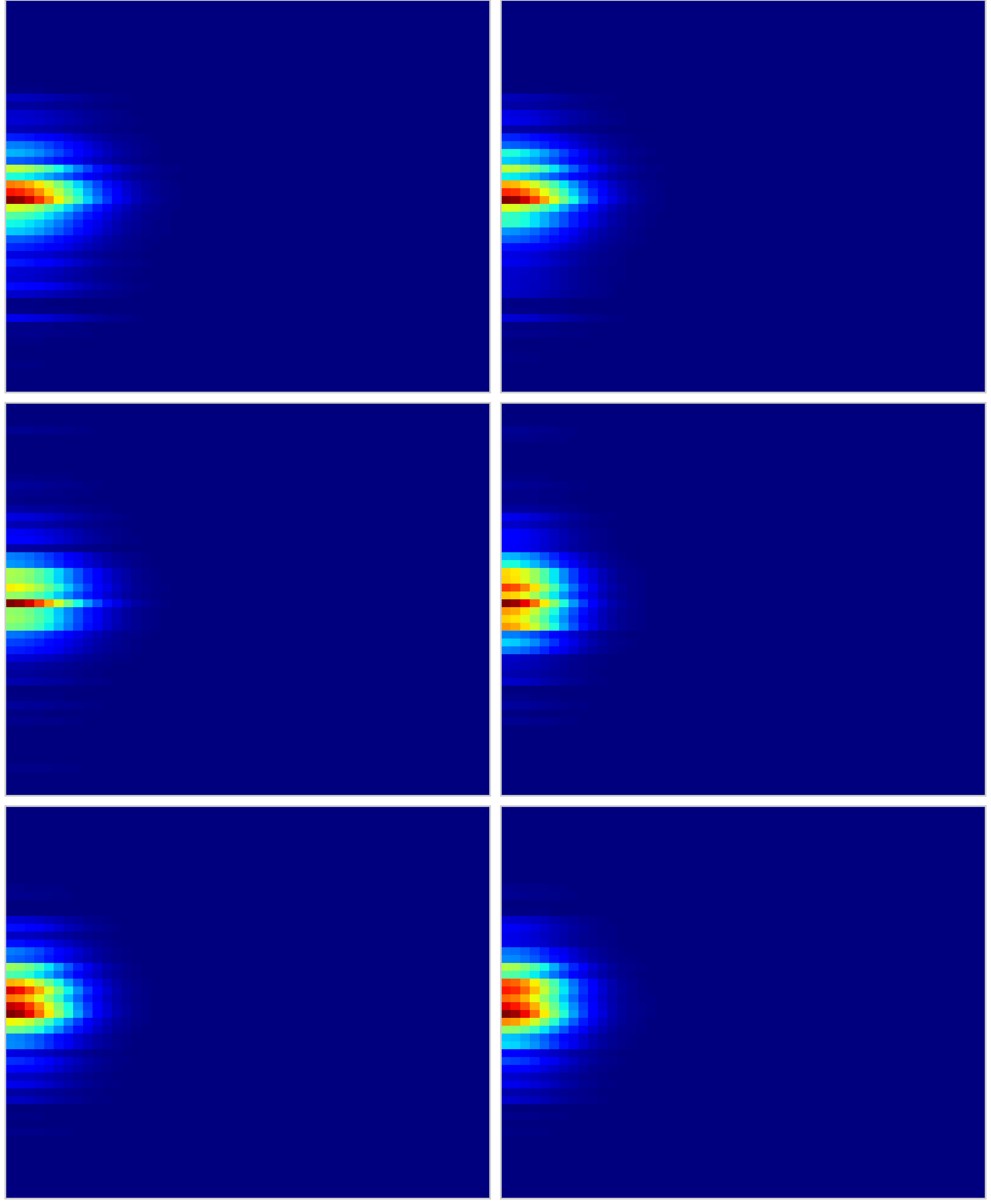}}
\hspace{0.02\columnwidth} 
\subfloat[Test]
{\includegraphics[width=0.48\columnwidth]{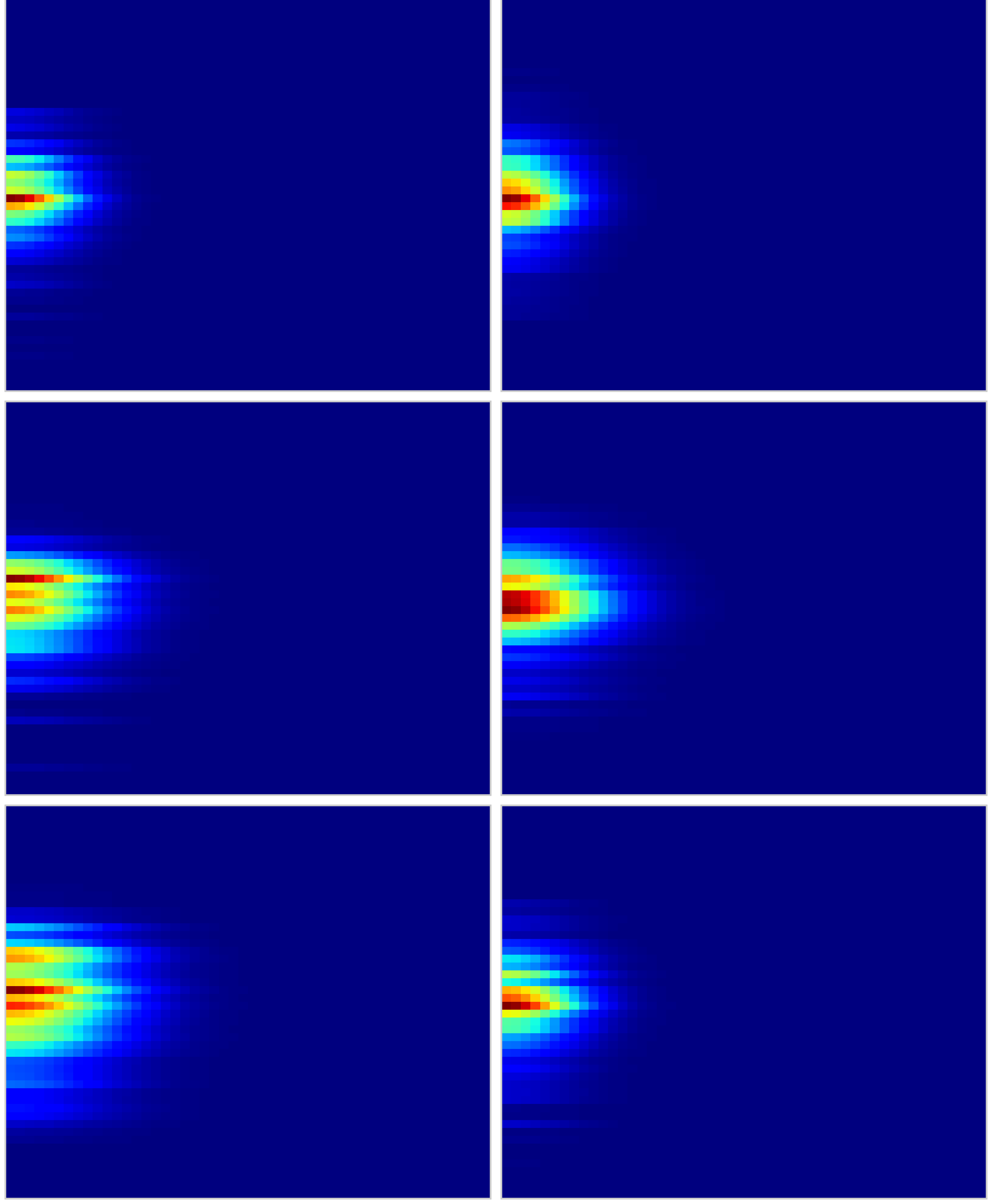}}
\caption{Results for text data for zero-dimensional homology case.  For each part of data, the left column shows the real persistence image, and the right column shows the predicted persistence image. Cross-RipsNet effectively learns the specificity of the data and achieves good results.}
\label{fig:Texts_results_zero}
\end{figure}

\subsection{Time estimation and complexity comparison}
 We measured time for calculation $20\%$ of the densities of expected Cross-persistence diagrams(cross-barcode) by straightforward classical algorithm and fitting Cross-RipsNet on $80\%$ of all data and then predict the rest of densities for three different domains. It appears that Cross-RipsNet can substantially faster do this task, see Table~\ref{tab:model_time}. Even more we skip diagrams calculation, we also skip PI calculation which may be very hard to calculate with bootstrapping original technique for estimating bandwidth parameter.

\rev{All timing experiments were conducted on a single workstation equipped with an NVIDIA RTX~3060 GPU and 32\,GB of RAM. For the classical pipeline, the reported times include the full preprocessing cost required to compute cross-persistence diagrams and their associated persistence images for the evaluated subset of data. For Cross-RipsNet, the reported times include all model-related preprocessing steps, including distance matrix computation, dimensionality reduction, as well as training on $80\%$ of the data and inference on the remaining $20\%$. Persistence images for the training split are assumed to be precomputed and loaded from disk in the latter case.}

\begin{table}[!t]
\caption{Time comparison of cross-barcode density computation for each domain using classical methods versus Cross-RipsNet (trained on $80\%$ of data, predicting $20\%$). Times are reported in hours.}
\label{tab:model_time}
\centering
\setlength{\tabcolsep}{6pt} 
\renewcommand{\arraystretch}{1.2} 
\footnotesize
\begin{tabular}{lccc}
\toprule
\textbf{Domain} & \textbf{Cross-Barcodes} & \textbf{Training + Predicting} & \textbf{Time Gain} \\
\midrule
3D & 2.60 & 0.40 & 6.5$\times$ \\
Text & 3.00 & 0.74 & 4.0$\times$ \\
Synthetic & 0.60 & 0.55 & 1.15$\times$ \\
\bottomrule
\end{tabular}
\end{table}

\section{Topological feature generator}
\label{sec:TopGen}
In this section, we conduct experiments using cross-persistence diagrams and their linear and statistical characteristics to generate features from time series. This approach was inspired by the use of persistence entropy to detect gravitational waves \footnote{\href{https://giotto-ai.github.io/gtda-docs/0.5.1/notebooks/gravitational_waves_detection.html}{https://giotto-ai.github.io/gtda-docs/grav-waves}}, as shown in Fig.~\ref{fig:Gravitational_waves}. In this example, the authors performed the following steps:
\begin{enumerate}
    \item Generated 200-dimensional time delay embeddings of each time series
    \item Used PCA to reduce the time delay embeddings to 3-dimensions
    \item Used the Vietoris-Rips construction to calculate persistence diagrams of $H_1$ and $H_0$ generators
    \item Extracted feature vectors using persistence entropy
    \item Trained a binary classifier on the topological features
\end{enumerate}

\begin{figure}[ht]
\centering
\subfloat[Gravitational wave with noise]{%
    \includegraphics[width=0.49\columnwidth]{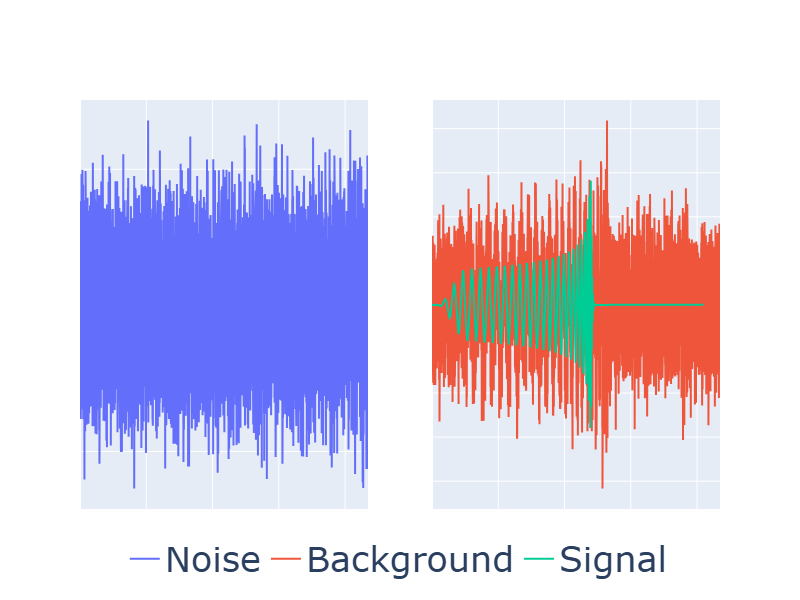}}
\subfloat[Time-delay embedded gravitational wave visualized in 3D]{%
    \includegraphics[width=0.49\columnwidth]{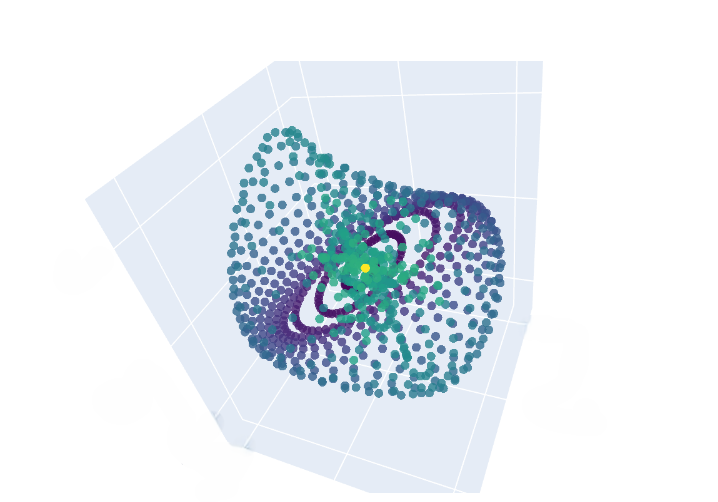}}
\caption{Example of gravitational wave noise process. (a) Gravitational wave with noise. (b) Time-delay embedded gravitational wave visualized in 3D.}
\label{fig:Gravitational_waves}
\end{figure}

After the second stage of the process, we can observe the cyclic structure of the obtained point clouds, which motivates \rev{the} use \rev{of} persistence homology, see Fig.~\ref{fig:Gravitational_waves}b.

Using this algorithm, the authors achieved good accuracy and a high ROC AUC score in this dataset.

\subsection{Gravitational waves}
In contrast to using only persistence entropy for $H_1$ and $H_0$ as in the example above, we can compute a much larger number of characteristics by using cross-persistence diagrams. To demonstrate the capabilities of this approach, we performed the same experiments.

To calculate a cross-persistence image, we need two point clouds. Consequently, our approach relies on comparing two time series. We selected one example from each class (gravitational wave, noisy gravitational wave, and pure noise) and used them as pairs for feature calculation for each time series.

We chose the following characteristics to calculate: MTD and cross-persistence entropy for each example, computed twice (once with the new time series on the left and once on the right side). In total, we obtain 12 values for each time series. As in the Giotto example, we used logistic regression for consistency with the baseline. In our experiments, we tested three different feature sets: only entropies, only MTD values, and both together. The results of the experiments are shown in Table~\ref{tab:Gravitational_exp}. By using cross-persistence diagrams with a single random example from each class for comparing topologies, our algorithm is able to extract valuable information about the topological structure of the given time series and classify them with good accuracy, which exceeds the accuracy of the baseline by $9\%$.


\begin{table*}[!t]
  \caption{Results of experiments with gravitational waves. Best scores are shown in \textbf{bold}}
  \label{tab:Gravitational_exp}
  \centering
  \setlength{\tabcolsep}{8pt} 
  \renewcommand{\arraystretch}{1.2} 
  \footnotesize
  \resizebox{\textwidth}{!}{ 
  \begin{tabular}{lcccc}
    \toprule
    \textbf{Model} & \textbf{Accuracy (train)} & \textbf{ROC (train)} & \textbf{Accuracy (valid)} & \textbf{ROC (valid)} \\
    \midrule
    \rev{Persistence Entropy (Giotto)} & $0.658\pm0.004$ & $0.703\pm0.005$ & $0.650\pm0.018$ & $0.700\pm0.059$ \\
    \rev{Cross-Entropies} & $0.666\pm0.001$ & $0.720\pm0.004$ & $0.700\pm0.030$ & $0.747\pm0.040$ \\
    MTDs & $0.686\pm0.007$ & $0.746\pm0.006$ & $0.704\pm0.054$ & $0.772\pm0.052$ \\
    MTDs+\rev{Cross-Entropies} & \textbf{0.692$\pm$0.004} & \textbf{0.776$\pm$0.005} & \textbf{0.711$\pm$0.040} & \textbf{0.783$\pm$0.040} \\
    \bottomrule
  \end{tabular}
  }
\end{table*}

\subsection{Time Series Classification}

To further evaluate our approach, we used six datasets from the UCR Time Series Classification Archive\footnote{https://www.timeseriesclassification.com/index.php} and developed a scikit-learn-compatible pipeline for topological feature generation, called \texttt{TopGen}. For comparison, we used two of the most popular methods for time series feature generation: CATCH22 and FreshPrince \cite{catch22, FreshPRINCE, bakeoff-revisited}. We used a Random Forest classifier for the downstream classification task, ensuring that the same parameters were applied for each feature generation method. We also investigated \texttt{TopGen} in multilabel classification tasks, in that case we took single random example from each classes for calculating features.

For consistency, we repeated the experiments with five different random seeds for each model. The accuracy scores and std. for the time series classification tasks are presented in Table~\ref{tab:Timeseries_exp}.

Although state-of-the-art (SOTA) models, such as FreshPrince and CATCH22, achieved higher accuracy compared to \texttt{TopGen}, the results suggest that \texttt{TopGen} can complement these methods by enhancing the feature sets.

    
    
    
    
    

\begin{table*}[!t]
  \caption{Experimental results on UCR datasets. Best scores are shown in \textbf{bold}; improved scores using TopGen features are \underline{underlined}.}
  \label{tab:Timeseries_exp}
  \centering
  \setlength{\tabcolsep}{7pt} 
  \renewcommand{\arraystretch}{1.2} 
  \footnotesize
  \resizebox{\textwidth}{!}{ 
  \begin{tabular}{l|ccccc}
    \toprule
    \textbf{Model} & \textbf{Worms} & \textbf{WormsTwoClass} & \textbf{Computers} & \textbf{Earthquakes} & \textbf{RefrigerationDevices} \\
    \midrule
    catch22 & $0.766\pm0.011$ & $0.784\pm0.019$ & $0.764\pm0.032$ & $0.804\pm0.012$ & $0.639\pm0.019$ \\
    TopGen & $0.701\pm0.010$ & $0.728\pm0.017$ & $0.657\pm0.017$ & $0.816\pm0.019$ & $0.494\pm0.016$ \\
    FreshPrince & $0.779\pm0.027$ & $0.777\pm0.036$ & \textbf{0.833$\pm$0.024} & \textbf{0.821$\pm$0.020} & \textbf{0.692$\pm$0.020} \\
    catch22+TopGen & \underline{0.779$\pm$0.024} & \textbf{0.784$\pm$0.028} & $0.758\pm0.030$ & \underline{0.814$\pm$0.020} & \underline{0.661$\pm$0.020} \\
    Fresh+TopGen & \textbf{0.783$\pm$0.026} & \underline{0.784$\pm$0.022} & $0.825\pm0.027$ & $0.820\pm0.021$ & $0.690\pm0.020$ \\
    \bottomrule
  \end{tabular}
  }
\end{table*}


In future work, we aim to further improve feature extraction by incorporating additional topological characteristics, developing more effective strategies for selecting examples from different classes, and using the density of cross-persistence diagrams instead of calculating individual cross-barcodes with random examples from each class. We also plan to test on a broader range of time series datasets. Additionally, we intend to explore more advanced classifiers and ensemble techniques to fully leverage the potential of topological features in time series analysis.

\section{Conclusion}
\label{sec:conclusion}

We have demonstrated that cross-persistence diagrams admit a density with respect to the Lebesgue measure, enabling the use of classical statistical tools in scenarios involving interactions between two manifolds. This theoretical result opens the door to applying statistical inference techniques in diverse applications, such as manifold discrimination, feature extraction for time series, and the detection of AI-generated content.

To support practical deployment, we introduced Cross-RipsNet, a neural architecture capable of efficiently estimating the density of cross-persistence diagrams and their linear representations at a significantly reduced computational cost compared to direct calculation. Our experiments on both synthetic and real-world datasets confirm that Cross-RipsNet effectively captures joint topological structures that are often overlooked by traditional methods. In addition, we explored the application of the proposed framework to time-series classification. While this setting presents additional challenges and remains an area for further development, the results indicate that cross-persistence features can provide complementary structural information in sequential data analysis.

Overall, our framework provides a flexible and computationally efficient approach for leveraging topological information in machine learning tasks. Future work may further enhance its performance, particularly in time-series classification and other settings involving high-dimensional or structured data.

\appendices

\section{\break The density of persistence diagrams}
\label{app:theorem_proof_base}

At this stage, we recall the key propositions regarding the density of persistence diagrams established in \cite{Density-Chazal}, as they will be essential to prove our theorem.

Since the construction of persistence diagrams relies on sequentially adding simplices to the filtration, all you need to know about this process is the time at which each simplex is added. Let \( n > 0 \) be an integer, and define \( \mathcal{F}_n \) as the collection of non-empty subsets of \( \{1, \ldots, n\} \). Consider a continuous function \( \varphi = (\varphi[J])_{J \in \mathcal{F}_n} : M^n \to \mathbb{R}^{\mathcal{F}_n} \), known as the \emph{filtering function}, which governs the order in which simplices enter the filtration. Specifically, a simplex \( J \) is included in the filtration at time \( \varphi[J] \). 

For a given point \( x = (x_1, \ldots, x_n) \in M^n \) and a simplex \( J \), we denote \( x(J) := (x_j)_{j \in J} \). The filtering function \( \varphi \) must satisfy the following assumptions:

\begin{itemize}
    \item[(K1)] \emph{Absence of interaction:} For $J \in \mathcal{F}_n, \varphi[J](x)$ only depends on $x(J)$.
    \item[(K2)] \emph{Invariance by permutation:} For $J \in \mathcal{F}_n$ and for $(x_1, \ldots, x_n) \in M^n$, if $\tau$ is a permutation of $\{1, \ldots, n\}$ whose support is included in $J$, then $\varphi[J](x_{\tau(1)}, \ldots, x_{\tau(n)}) = \varphi[J](x_1, \ldots, x_n)$.
    \item[(K3)] \emph{Monotony:} For $J \subset J' \in \mathcal{F}_n, \varphi[J] \leq \varphi[J']$.
    \item[(K4)] \emph{Compatibility:} For a simplex $J \in \mathcal{F}_n$ and for $j \in J$, if $\varphi[J](x_1, \ldots, x_n)$ is not a function of $x_j$ on some open set $U$ of $M^n$, then $\varphi[J] \equiv \varphi[J \setminus \{j\}]$ on $U$.
    \item[(K5)] \emph{Smoothness:} The function $\varphi$ is subanalytic and the gradient of each of its entries (which is defined a.s.e.) is non-vanishing a.s.e.
\end{itemize}

These assumptions ensure that a filtration can be properly defined only by \emph{filtering function}.

For a given \( x \in M^n \), the different values attained by \( \varphi(x) \) in the filtration can be ordered as \( r_1 < \dots < r_L \). We define \( E_l(x) \) as the set of simplices \( J \) for which \( \varphi[J](x) = r_l \). The sets \( E_1(x), \dots, E_L(x) \) form a partition of \( \mathcal{F}_n \), which we denote by \( A(x) \).

In original paper next lemmas and theorem were proven for $l\geq1$, but for our case the constraint $l>1$ is more suitable.

\begin{lemma}  \label{lm: minimal_el}
    For a.s.e. $x \in M^n$, for $l > 1$, $E_l(x)$ has a unique minimal element $J_l$ (for the partial order induced by inclusion).
\end{lemma}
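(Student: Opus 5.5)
The plan follows the argument of \cite{Density-Chazal}: show that on a full-measure open subset of $M^n$ the partition $A(x)$ is locally constant, and then read off the minimal elements from assumptions (K3) and (K4).

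\textbf{Step 1: genericity.} Since $\varphi$ is subanalytic by (K5), the sets $\{x : \varphi[J](x)=\varphi[J'](x)\}$ are subanalytic. Each such set either contains an open set or has empty interior; in the latter case its dimension is $<dn$, so it is $\mathcal{H}_{dn}$-null. Let $U$ be the complement of the union, over pairs $(J,J')$, of the boundaries of these coincidence sets, together with the null set where some gradient of $\varphi[J]$ is undefined or vanishes. Then $U$ has full measure. On each connected component of $U$, every relation $\varphi[J]=\varphi[J']$ either holds identically or holds nowhere. Hence the partition $A(x)$, and the ordering of its blocks, is constant there.

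\textbf{Step 2: uniqueness of the minimal element.} Fix such a component $V$ and an index $l>1$. Suppose $E_l(x)$ had two distinct minimal elements $J$ and $J'$. Then $\varphi[J]\equiv\varphi[J']$ on $V$.

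\begin{itemize}
\item If some $j\in J\setminus J'$ exists, then by (K1) $\varphi[J']$ does not depend on $x_j$. So $\varphi[J]$ is not a function of $x_j$ on the open set $V$.
\item By (K4), this gives $\varphi[J]\equiv\varphi[J\setminus\{j\}]$ on $V$.
\item If $J\setminus\{j\}\neq\emptyset$, then $J\setminus\{j\}\in E_l(x)$, which contradicts the minimality of $J$. The same argument applies with $J$ and $J'$ swapped.
\end{itemize}

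Since $J\ne J'$ and both are minimal, neither is contained in the other. So some $j\in J\setminus J'$ always exists, and a contradiction follows unless $J=\{j\}$ is a singleton.

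\textbf{Step 3: the singleton case.} If $J=\{j\}$ and $\varphi[\{j\}]\equiv\varphi[J']$ with $j\notin J'$, then $\varphi[\{j\}]$ does not depend on $x_j$. This is exactly where the restriction $l>1$ is used. In the Rips and cross-Rips settings every vertex enters at time $0$, so all singletons lie in $E_1$. For $l>1$, $E_l$ therefore contains no singletons, and the argument of Step 2 always applies.

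For a general filtering function, I would instead use the non-vanishing gradient condition in (K5). The gradient of $\varphi[\{j\}]$ involves only $x_j$, while that of $\varphi[J']$ involves only $x(J')$. These can agree on an open set only if both gradients vanish, and this is excluded almost everywhere.

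\textbf{Main obstacle.} The hardest step is Step 1: justifying that the coincidence relations are locally constant outside a null set. This needs the subanalytic stratification, specifically that a subanalytic set with empty interior has Hausdorff dimension $<dn$. It must also be transferred to the density-weighted law of $\mathbf{X}$, which is immediate because that law is absolutely continuous with respect to $\mathcal{H}_{dn}$.
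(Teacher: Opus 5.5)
Your proof is correct and is essentially the Chazal--Divol argument that the paper relies on by citation; the paper gives no separate proof, only the remark restricting to $l>1$. In both, subanalyticity confines the exceptional set to the boundaries of the coincidence sets $\{\varphi[J]=\varphi[J']\}$, (K1) and (K4) let you delete a vertex $j\in J\setminus J'$ to contradict minimality, and the leftover singleton case is excluded either by (K5) or, in the Rips/cross-Rips setting that motivates $l>1$, because every singleton lies in $E_1$; one small adjustment: when applying this to the cross-Rips filtration, do not excise the gradient-vanishing set in Step~1, since (K5) fails there for every $J$ with $J_X=\emptyset$ (so that set is not null) and Step~2 never uses it.
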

\begin{lemma}  \label{lm: partitions}
    A.s.e., $x \rightarrow A(x)$ is locally constant and the space $M^n$ is partitioned into a negligible set $N(M^n)$ and some open subanalytic sets $U_1,\dots,U_r$ on which $A$ is constant.
\end{lemma}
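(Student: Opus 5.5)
Lemma~\ref{lm: partitions} follows from Lemma~\ref{lm: minimal_el} together with assumption (K5). The plan is to realize $A(x)$ as determined by finitely many sign conditions on subanalytic functions, and then show that the locus where these conditions degenerate is negligible.

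\textbf{Step 1: reduce to sign conditions.} The partition $A(x)$ is completely determined by the relative order of the finitely many values $\varphi[J](x)$, $J\in\mathcal{F}_n$. Concretely, for every pair $J\neq J'$ in $\mathcal{F}_n$ we record whether $\varphi[J](x)-\varphi[J'](x)$ is $<0$, $=0$ or $>0$. Each such function is subanalytic by (K5), and there are finitely many of them. So $M^n$ is a finite union of the subanalytic ``sign cells''
\[
C_\sigma=\{x:\ \operatorname{sign}(\varphi[J]-\varphi[J'])(x)=\sigma(J,J')\ \text{for all } J\neq J'\},
\]
and $A$ is constant on each $C_\sigma$.

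\textbf{Step 2: keep the open cells.} Every subanalytic set is the disjoint union of its interior and a subanalytic set of dimension $<dn$ relative to the $dn$-dimensional manifold $M^n$. The latter is negligible for $\mathcal{H}_{dn}$. Let $U_1,\dots,U_r$ be the (finitely many, since subanalytic interiors have finitely many connected components) connected components of the interiors of the $C_\sigma$. These are open subanalytic sets on which $A$ is constant. Let $N(M^n)$ be the union of the boundaries $C_\sigma\setminus\operatorname{int}C_\sigma$; it is a finite union of lower-dimensional subanalytic sets, hence negligible. Local constancy of $A$ a.s.e.\ follows immediately, since $A$ is constant on each open $U_i$.

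\textbf{Step 3 (the expected obstacle): genuine equalities on open sets.} A cell defined by an equality $\varphi[J]=\varphi[J']$ may have nonempty interior; for Rips filtrations this happens constantly, because many simplices share the same longest edge. Such cells pose no problem for the conclusion, because they are already open and carry a constant $A$. The only thing that must be checked is that the degeneracy locus is small. Here (K5) is the tool: where $\varphi[J]-\varphi[J']$ is not identically zero near a point, its zero set is a subanalytic hypersurface. The non-vanishing gradient a.s.e.\ ensures that the level sets of each entry are of codimension one, so no full-dimensional piece is hidden in a boundary.

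\textbf{Step 4: link to the minimal elements.} One can additionally invoke (K4) together with Lemma~\ref{lm: minimal_el}. On each $U_i$, every $E_l$ with $l>1$ has a unique minimal element $J_l$, and the value $r_l=\varphi[J_l]$. This is a remark that will be useful later for the change of variables $x\mapsto(\varphi[J_l](x))_l$; it is not needed for the statement itself.
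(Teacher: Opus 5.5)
Your argument is correct and is essentially the standard proof from Chazal--Divol, which the paper cites rather than reproving: the sets comparing $\varphi[J]$ with $\varphi[J']$ are subanalytic, their frontiers have dimension $<dn$ and are therefore $\mathcal{H}_{dn}$-negligible, and away from these frontiers $A$ is locally constant. Step~2 already finishes the proof, using only the subanalyticity in (K5) and not the minimal-element lemma (your opening sentence suggests otherwise). The gradient remark in Step~3 is superfluous and, as written, not justified: (K5) controls the gradients of the individual entries $\varphi[J]$, not of the differences $\varphi[J]-\varphi[J']$.
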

\begin{lemma}  \label{lm: grad_ex}
 Fix $1\leq r \leq  R$ and assume that $J_1, \dots,J_L$ are the minimal elements of $A$ on $U_r$. Then, for $1< l \leq L$ and $j\in J_l$,$\nabla^j \varphi[J_l]\neq 0$ a.s.e. on $U_r$.
\end{lemma}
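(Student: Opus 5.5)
The plan is to argue by contradiction, combining the subanalyticity in (K5), the compatibility axiom (K4), and the minimality of $J_l$ from Lemma~\ref{lm: minimal_el} and Lemma~\ref{lm: partitions}. Fix $r$, an index $1<l\le L$ and $j\in J_l$. Consider the set
\begin{equation}
Z=\{x\in U_r:\ \nabla^j\varphi[J_l](x) \text{ is defined and } =0\}.
\end{equation}
Suppose $Z$ is not negligible; we want to show that this cannot happen.

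\textbf{Step 1: from positive measure to an open set.} By (K5), $\varphi[J_l]$ is subanalytic. Its partial gradient in the $x_j$ variables is defined outside a closed subanalytic set of positive codimension. Hence $Z$ is a subanalytic subset of $M^n$, since $M$ is real analytic. A subanalytic set of positive $\mathcal{H}_{dn}$-measure has full dimension $dn$, so it has nonempty interior. We therefore obtain an open set $V\subset U_r$ on which $\varphi[J_l]$ is differentiable and $\nabla^j\varphi[J_l]\equiv 0$. Shrinking $V$ in a chart, we may take it to be a product of a ball in the $x_j$ coordinates with an open set in the remaining coordinates. Then the fibres in the $x_j$ direction are connected, and the mean value theorem shows that $\varphi[J_l]$ is not a function of $x_j$ on $V$.

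\textbf{Step 2: apply compatibility.} First suppose $J_l\setminus\{j\}$ is non-empty. By (K4), $\varphi[J_l]\equiv\varphi[J_l\setminus\{j\}]$ on $V$. Since $A$ is constant on $U_r\supset V$, the simplex $J_l\setminus\{j\}$ belongs to $E_l(x)$ for $x\in V$. This contradicts the fact that $J_l$ is the unique minimal element of $E_l(x)$ (Lemma~\ref{lm: minimal_el}), because $J_l\setminus\{j\}\subsetneq J_l$.

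Next suppose $J_l=\{j\}$ is a singleton. Then $\nabla^j\varphi[J_l]$ is the full gradient of $\varphi[\{j\}]$, which by (K1) depends only on $x_j$. Its vanishing on the open set $V$ contradicts the a.s.e. non-vanishing gradient in (K5).

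This singleton case is exactly where $l>1$ is needed. The vertices whose filtration value is the constant $0$, such as the points of $\mathbf{Y}$ in the cross-filtration, all lie in $E_1$. For them the gradient genuinely vanishes, and for $l=1$ there is no unique minimal element. So the restriction $l>1$ excludes precisely the level at which the argument breaks.

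\textbf{Main obstacle.} I expect the delicate step to be Step 1: upgrading ``$Z$ has positive measure'' to ``the gradient vanishes identically on an open set where $\varphi[J_l]$ is locally independent of $x_j$''. This relies on subanalytic geometry, namely that the a.e.-defined gradient is subanalytic and that a full-dimensional subanalytic set has interior, together with the connected-fibre argument in charts. I would cite the standard stratification facts used in \cite{Density-Chazal} for this part. The remaining steps are formal consequences of (K4), (K5) and Lemma~\ref{lm: minimal_el}.
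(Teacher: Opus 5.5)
Your proof is correct and is essentially the argument of \cite{Density-Chazal}, which the paper only recalls without giving a proof. A non-negligible zero set of the subanalytic partial gradient has nonempty interior; on that open set (K4) forces $\varphi[J_l]\equiv\varphi[J_l\setminus\{j\}]$, contradicting the minimality of $J_l$, and the singleton case follows from (K1) and (K5).

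Your closing remark slightly misplaces the role of $l>1$. Under (K1)--(K5) the singleton case works for every $l$. In the cross-filtration every singleton enters at time $0$, so for $l>1$ the set $J_l$ has at least two vertices and the singleton branch never arises there. The restriction is really needed because $E_1$ consists of identically-zero entries and has no unique minimal element.
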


\begin{theorem} \label{th:pers_density}
    Fix $n \geq 1$. Assume that $M$ is a real analytic compact $d$-dimensional connected submanifold possibly with boundary and that $\mathbf{X}$ is a random variable on $M^n$ having a density with respect to the Hausdorff measure $\mathcal{H}_{dn}$. Assume that $\mathcal{K}$ satisfies the assumptions (K1)-(K5). Then, for $s \geq 0$, the expected measure $\mathbb{E}[D_s[\mathcal{K}(\mathbf{X})]]$ has a density with respect to the Lebesgue measure on $\Delta$.
\end{theorem}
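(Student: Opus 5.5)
The proof reduces the expected diagram to a finite sum of push-forwards of the law of $\mathbf{X}$ under explicit smooth maps $M^n\to\mathbb{R}^2$. Each push-forward is then shown to be absolutely continuous via the coarea formula, using the non-vanishing gradients supplied by Lemma~\ref{lm: grad_ex}.

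\textbf{Step 1 (combinatorial reduction).} By Lemma~\ref{lm: partitions}, up to the negligible set $N(M^n)$, $M^n=U_1\sqcup\dots\sqcup U_R$ with $A(x)$ constant on each $U_r$. On $U_r$ the order in which simplices enter the filtration is fixed, so the persistence pairing produced by the matrix reduction algorithm is also fixed. Each finite point of $PD_s[\mathcal{K}(x)]$ is therefore $(r_l(x),r_{l'}(x))$ for a fixed list of index pairs $(l,l')$ with $l<l'$, depending only on $r$ and $s$. By Lemma~\ref{lm: minimal_el}, for $l>1$ we have $r_l(x)=\varphi[J_l](x)$, where $J_l$ is the unique minimal element of $E_l$, and $J_l$ is constant on $U_r$. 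This gives
\[
\mathbb{E}[D_s[\mathcal{K}(\mathbf{X})]](B)=\sum_{r}\sum_{(l,l')}\mathbb{P}\bigl(\mathbf{X}\in U_r,\ \Phi_{l,l'}(\mathbf{X})\in B\bigr),
\qquad \Phi_{l,l'}:=(\varphi[J_l],\varphi[J_{l'}]).
\]
Essential classes, with death $=\infty$, are handled the same way with the one-dimensional map $\varphi[J_l]$; Lemma~\ref{lm: grad_ex} gives a non-vanishing gradient for it. The exceptional index $l=1$, whose value $r_1$ may be constant, only matters in degenerate cases such as $s=0$ births at $0$ for Rips. It is treated separately by the same one-dimensional argument on the death coordinate. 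It therefore suffices to show that each $(\Phi_{l,l'})_\#(\mathbf{1}_{U_r}\mathbb{P}_{\mathbf{X}})$ is absolutely continuous on $\mathbb{R}^2$.

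\textbf{Step 2 (rank two Jacobian).} Since $r_l\neq r_{l'}$, we have $J_l\neq J_{l'}$. Without loss of generality $J_{l'}\not\subset J_l$, so pick $j\in J_{l'}\setminus J_l$.
\begin{itemize}
\item By (K1), $\varphi[J_l]$ does not depend on $x_j$, so $\nabla^j\varphi[J_l]=0$.
\item By Lemma~\ref{lm: grad_ex}, $\nabla^j\varphi[J_{l'}]\neq0$ a.s.e. on $U_r$, and $\nabla^{i}\varphi[J_l]\neq 0$ for some $i\in J_l$, $i\neq j$.
\end{itemize}
The differential of $\Phi_{l,l'}$, restricted to the $x_i$- and $x_j$-directions, is then block triangular with nonzero diagonal blocks. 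Hence the Jacobian $J\Phi_{l,l'}=\sqrt{\det(D\Phi\, D\Phi^{T})}$ is positive a.s.e. on $U_r$. Subanalyticity (K5) guarantees that these "a.s.e." statements hold off a closed negligible subanalytic set, and that $\Phi_{l,l'}$ is locally Lipschitz there.

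\textbf{Step 3 (coarea).} Let $\kappa$ be the density of $\mathbf{X}$ with respect to $\mathcal{H}_{dn}$. By the coarea formula on the $dn$-dimensional manifold $M^n$,
\[
\mathbb{P}(\mathbf{X}\in U_r,\Phi(\mathbf{X})\in B)=\int_B\int_{\Phi^{-1}(u)\cap U_r}\frac{\kappa(x)}{J\Phi(x)}\,d\mathcal{H}_{dn-2}(x)\,du .
\]
This formula is valid because $J\Phi>0$ a.e., so the set where $J\Phi=0$ contributes nothing. The inner integral is therefore the desired density, which is finite for a.e.\ $u$ by integrability. Summing over $r$ and $(l,l')$ gives the density of $\mathbb{E}[D_s]$ on $\Delta$.

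\textbf{Main obstacle.} I expect the main difficulty to lie in Step 2 together with the justification of the coarea formula on a manifold with boundary. One must ensure that the sets where gradients vanish or $A$ jumps are genuinely negligible and that Lipschitz regularity holds. I would handle this by stratifying each $U_r$ into analytic cells, using standard subanalytic geometry, and applying the coarea formula cell by cell.
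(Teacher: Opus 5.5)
Your proposal is correct and follows essentially the same route as the paper, namely the Chazal--Divol argument that Appendix~B replicates for the cross filtration: partition into the sets $U_r$ on which $A$ is constant, write each diagram point as $(\varphi[J_l],\varphi[J_{l'}])$ with minimal elements, get rank two from an index $j\in J_{l'}\setminus J_l$ together with an index of $J_l$, and conclude by the coarea formula (and $J_{l'}\setminus J_l\neq\emptyset$ is automatic from (K3), so no WLOG is needed). The one slip is your handling of $l=1$: under (K1)--(K5) it is never degenerate, because (K1), (K3) and (K5) force $E_1$ to be a single vertex with non-vanishing gradient, so $r_1$ only appears as the birth of the essential $H_0$ class and is covered by (K5) directly rather than by the gradient lemma; note also that a one-dimensional argument for a constant birth would give a density on a line, not on $\Delta$.
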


One of the main instrument which was used by the author of original paper is classical result in geometric measure theory is called coarea formula.
\begin{theorem}[Coarea formula]\label{thm:coarea}
    Let \( M \) (resp. \( N \)) be a smooth Riemannian manifold of dimension \( m \) (resp. \( n \)). Assume that \( m \geq n \) and let \( \Phi: M \to N \) be a differentiable map. Denote by \( D\Phi \) the differential of \( \Phi \). The Jacobian of \( \Phi \) is defined by
    \begin{equation}
    J\Phi = \sqrt{\det((D\Phi) \times (D\Phi)^t)}.
    \end{equation}
    For \( f: M \to \mathbb{R}_+ \) a positive measurable function, the following equality holds:
    \begin{IEEEeqnarray}{lCr}
    \int_M f(x) J\Phi(x) d\mathcal{H}_m(x)= \notag \\
    = \int_N \left( \int_{x \in \Phi^{-1}(\{y\})} f(x) d\mathcal{H}_{m-n}(x) \right) d\mathcal{H}_n(y).
    \end{IEEEeqnarray}
\end{theorem}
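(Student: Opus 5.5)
The plan is to prove the formula first for indicator functions of Borel sets and then extend it to all positive measurable $f$. Both sides are additive and monotone in $f$, so simple functions and monotone convergence finish the job once the indicator case is known. Using a countable atlas and a subordinate partition of unity, I will reduce to the case where $M$ is an open subset of $\mathbb{R}^m$ and $N$ an open subset of $\mathbb{R}^n$, each carrying a smooth Riemannian metric. The metric contributes only a smooth positive density on each side, which is absorbed into $f$ and into the Jacobian. I will assume $\Phi$ is $C^1$. If only differentiability or a Lipschitz condition is available, I would first prove the $C^1$ case and then use Lusin-type $C^1$ approximation on sets of almost full measure.

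Next I split $M$ into the regular set $R=\{J\Phi>0\}$, which is open, and the critical set $Z=\{J\Phi=0\}$.

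\textbf{Regular set.} Near a point of $R$ the differential $D\Phi$ is surjective. By the implicit function theorem there is a local diffeomorphism $\psi:U\to V\times W$, with $V\subset\mathbb{R}^n$ and $W\subset\mathbb{R}^{m-n}$, such that $\Phi\circ\psi^{-1}(y,z)=y$. Thus the fibres $\Phi^{-1}(y)\cap U$ are exactly the slices $\psi^{-1}(\{y\}\times W)$.
\begin{itemize}
\item I apply the change of variables formula for $\psi$ and then Fubini on $V\times W$.
\item This reduces the claim to a pointwise identity of linear algebra. For a surjective linear map $L:\mathbb{R}^m\to\mathbb{R}^n$ with kernel $K$, Lebesgue measure on $\mathbb{R}^m$ disintegrates as $(JL)^{-1}\,d\mathcal{H}_{m-n}|_{L^{-1}(y)}\,dy$.
\item I prove this identity by choosing orthonormal bases adapted to $K$ and $K^\perp$. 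The restriction $L|_{K^\perp}$ is then an isomorphism with $|\det L|_{K^\perp}|=\sqrt{\det(LL^t)}=JL$.
\item The same computation identifies the Jacobian of the slice parametrization $z\mapsto\psi^{-1}(y,z)$ with the $(m-n)$-dimensional area factor on the fibre. This yields the formula on $U$.
\end{itemize}
Countably many such charts cover $R$, and a partition of unity glues the local formulas.

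\textbf{Critical set.} The left-hand side vanishes on $Z$. So I must show that $\mathcal{H}_{m-n}(\Phi^{-1}(y)\cap Z)=0$ for $\mathcal{H}_n$-a.e.\ $y$. This is the step I expect to be the main obstacle.

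\begin{enumerate}
\item First I would prove Eilenberg's inequality:
\begin{equation}
\int^{*}_{N}\mathcal{H}_{m-n}(A\cap\Phi^{-1}(y))\,dy\le C\,\mathrm{Lip}(\Phi)^n\,\mathcal{H}_m(A).
\end{equation}
The proof covers $A$ by small balls, bounds each fibre's content by the ball diameters, and integrates over $y$.
\item Then I would use Federer's perturbation trick. Consider $\Phi_\varepsilon(x,w)=\Phi(x)+\varepsilon w$ on $Z\times B$, where $B$ is the unit ball of $\mathbb{R}^n$. The map $\Phi_\varepsilon$ is a submersion with $J\Phi_\varepsilon\to 0$ uniformly on compact parts of $Z\times B$ as $\varepsilon\to0$.
\item Applying the regular-case formula to $\Phi_\varepsilon$ shows that $\int\mathcal{H}_{m}(\Phi_\varepsilon^{-1}(y)\cap(Z\times B))\,dy\to0$.
\item Combining this with Eilenberg's inequality on the slices $w=\text{const}$ gives the desired negligibility for a.e.\ $y$.
\end{enumerate}

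With the critical set shown to be negligible on both sides, the formula holds on $R$ and on $Z$, and the indicator case, hence the general case, follows.
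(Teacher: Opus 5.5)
\paragraph{Comparison with the paper.} The paper does not prove this statement. It quotes the coarea formula as a classical result of geometric measure theory, taken over from Chazal--Divol, and uses it only as a black box in Appendix~B. Your proposal is essentially the standard Evans--Gariepy proof. On $\{J\Phi>0\}$ you use submersion charts in place of their piecewise linearization. On $\{J\Phi=0\}$ you use Eilenberg's inequality combined with the perturbation $\Phi(x)+\varepsilon w$. In outline it is correct.

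Citation buys the full Lipschitz generality at no cost. Your route buys self-containment, and it also shows how little the paper's application requires. There $\Phi_{ir}$ is built from Euclidean distances, is smooth on the open set $V_r$, and has full rank almost everywhere. What is needed is absolute continuity of $B\mapsto\mathbb{P}(\Phi_{ir}(\mathbf{Z})\in B,\ \mathbf{Z}\in V_r)$. Your regular-set argument (submersion charts plus Fubini) already gives this, so your hardest step, the critical-set analysis, is not needed for that conclusion.

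\paragraph{Points to make explicit.} To cover the statement exactly as written, four points need to be spelled out.

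(i) A merely differentiable $\Phi$ need not be locally Lipschitz. So the $\mathcal{H}_m$-null set left over by your Lusin-type $C^1$ approximation must still be shown to meet almost every fibre in an $\mathcal{H}_{m-n}$-null set. Apply Eilenberg's inequality on the sets $A_k=\{x:\ |\Phi(x')-\Phi(x)|\le k|x'-x| \text{ whenever } |x'-x|<1/k\}$, taken in charts. These cover $M$ because $\Phi$ is differentiable at every point, and $\Phi$ is $k$-Lipschitz on their subsets of diameter less than $1/k$.

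(ii) In your last step, Eilenberg's inequality is applied to $(x,w)\mapsto w$ on an $m$-dimensional subset of $\mathbb{R}^{m+n}$. You therefore need it for arbitrary subsets of a metric space measured by $\mathcal{H}_m$, not only for $A\subset M$ as you stated it. Your covering argument does give this version.

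(iii) Measurability of $y\mapsto\int_{\Phi^{-1}(y)}f\,d\mathcal{H}_{m-n}$ is part of the claim. On the regular set it comes from Tonelli in the straightening charts. On $Z$ it comes only from the vanishing of the upper integral.

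(iv) The Riemannian metric does not enter merely as a density to be absorbed into $f$, because the area distortion along a fibre depends on the plane $\ker D\Phi(x)$. It is cleaner to run your linear algebra directly in the inner-product spaces $T_xM$ and $T_{\Phi(x)}N$, where it goes through unchanged.
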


\section{\break The density of cross-persistence diagrams}
\label{app:theorem_proof_cross}

In this section, we extend the result of Theorem~2.1 to show that cross-persistence diagrams admit a density under the given assumptions. Our approach follows the pipeline of \cite{Density-Chazal} but introduces a new filtration, referred to as the Vietoris-Rips cross-persistence filtration.

\medskip
\noindent
\textbf{Setup.}
We consider two real analytic, compact, $d$-dimensional connected submanifolds $M$ and $N$, each possibly with boundary. Let $\mathbf{X}$ be a random variable on $M^n$ and $\mathbf{Y}$ a random variable on $N^k$, both having densities with respect to the Hausdorff measures $\mathcal{H}_{dn}$ and $\mathcal{H}_{dk}$, respectively. Since we must handle both random variables simultaneously, we define
\begin{equation}
\mathbf{Z} \;=\; (\mathbf{X}, \mathbf{Y}).
\end{equation}
which also has a density on $M^n \times N^k$.

\medskip
\noindent
\textbf{Vietoris-Rips cross-persistence filtration.}
To construct the Vietoris-Rips cross-persistence filtration, we introduce a \emph{filtering function}
\begin{equation}
\varphi \;=\; \bigl(\varphi[J]\bigr)_{J \in \mathcal{F}_{n+k}}
:\;
M^n \times N^k \;\to\; \mathbb{R}^{\mathcal{F}_{n+k}}.
\end{equation}

This function is defined over the collection of simplices \( J \in \mathcal{F}_{n+k} \), where each simplex \( J \) corresponds to a subset of vertices from the two manifolds \( M \) and \( N \). Specifically, \( J_X \subseteq M^n \) and \( J_Y \subseteq N^k \) are subsets of vertices corresponding to the random variables \( \mathbf{X} \) and \( \mathbf{Y} \), respectively, and \( J = J_X \cup J_Y \).

The filtering function is then defined as
\begin{equation}
\varphi[J](\mathbf{X}, \mathbf{Y}) = 
\begin{cases}
\max\limits_{\substack{i \in J_X \cup J_Y \\ j \in J_X}} \|x_i - x_j\|, & \text{if } J_X \neq \emptyset, \\
0, & \text{otherwise.}
\end{cases}
\end{equation}

Here, $J_X$ and $J_Y$ denote the subsets of vertices in $J$ corresponding to $\mathbf{X}$ and $\mathbf{Y}$, respectively, and $J = J_X \cup J_Y$.  


\rev{We emphasize that the Vietoris-Rips cross-persistence filtering function does not satisfy assumption (K5) globally. Indeed, for any singleton or simplex $J$ such that $J_X=\emptyset$, we have $\varphi[J]\equiv 0$, and consequently $\nabla \varphi[J]$ vanishes everywhere. This situation is fully analogous to the setting considered in \cite[Theorem~3.3]{Density-Chazal}, where the authors treat the case of 0-persistence separately due to the same degeneracy. Following this strategy, we split the analysis into two parts. For $s\ge 1$, all persistence pairs necessarily involve at least one simplex with $J_X\neq\emptyset$, and on the corresponding open subanalytic sets the filtering functions are subanalytic and satisfy the required non-degeneracy and rank conditions a.s.e.. The case $s=0$, which consists of persistence pairs with birth time equal to zero and simplices supported entirely on $\mathbf{Y}$ , is handled separately, where the relevant mapping has full rank relative to its image and admits a density on the vertical line $\{0\}\times[0,\infty)$.}

\begin{theorem}\label{th: cross_pers_density}
Fix $n,k \ge 1$. Suppose $M$ and $N$ are real analytic, compact, $d$-dimensional connected submanifolds (possibly with boundaries), and let $\mathbf{X}$ and $\mathbf{Y}$ be random variables on $M^n$ and $N^k$, respectively, each having a density with respect to the Hausdorff measures $\mathcal{H}_{dn}$ and $\mathcal{H}_{dk}$. Then, for the $\mathcal{K}$-Vietoris-Rips cross-persistence filtration and any $s \ge 1$, the expected measure
\begin{equation}
\mathbb{E}\bigl[D_s[\mathcal{K}(\mathbf{X})]\bigr]
\end{equation}
admits a density with respect to the Lebesgue measure on $\Delta$. Furthermore,
\begin{equation}
\mathbb{E}\bigl[D_0[\mathcal{K}(\mathbf{X})]\bigr]
\end{equation}
has a density with respect to the Lebesgue measure on the vertical line $\{0\}\times[0,\infty)$.
\end{theorem}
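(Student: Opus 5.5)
The plan is to rerun the pipeline of \cite{Density-Chazal} behind Theorem~\ref{th:pers_density} for the filtering function $\varphi$ on $M^n\times N^k$, handling $s\ge 1$ and $s=0$ separately.

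\textbf{Axioms.} First I would check which of (K1)--(K5) $\varphi$ satisfies. (K1) and (K2) are immediate, since $\varphi[J]$ is a maximum of pairwise distances indexed by $J$. For (K3), note that enlarging $J$ enlarges the index set of the maximum. If $J_X=\emptyset$ but $J'_X\neq\emptyset$, we use $\varphi[J]=0\le\varphi[J']$.

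(K4) holds as in the ordinary Rips case. If $\varphi[J]$ does not depend on $z_j$ on an open set, then on that set the maximum is attained (a.s.e.) by a pair not involving $j$, so $\varphi[J]=\varphi[J\setminus\{j\}]$. The boundary subcase where removing $j$ empties $J_X$ needs a separate check.

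Each entry is a maximum of finitely many distance functions on analytic manifolds, hence subanalytic. Its gradient is nonzero a.s.e. whenever $J_X\neq\emptyset$ and $|J|\ge2$. It vanishes identically when $J_X=\emptyset$, and on $X$-singletons, exactly as in ordinary Rips.

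\textbf{Case $s\ge 1$.} Next I would redo Lemmas~\ref{lm: minimal_el}--\ref{lm: grad_ex} on $M^n\times N^k$ with the level $r_1=0$ removed. The value $0$ is taken by all simplices supported in $Y$ and by all singletons, and it is the only degenerate level. For $l>1$ the arguments go through, giving $E_l$ a unique minimal element $J_l$, an open subanalytic partition $U_1,\dots,U_R$ on which $A$ is constant, and nonvanishing partial gradients $\nabla^j\varphi[J_l]$, $j\in J_l$.

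For $s\ge1$, a pair $(b,d)$ has $b$ the value of an $s$-simplex and $d$ that of an $(s+1)$-simplex. If $b>0$, both simplices are minimal elements $J_{l},J_{l'}$ with $l,l'>1$. I would then follow Chazal--Divol. On each $U_r$, the diagram is a finite sum of Diracs at $\Phi(z)=(\varphi[J_l](z),\varphi[J_{l'}](z))$. If $J_l\neq J_{l'}$, pick indices $j\in J_l$ and $j'\in J_{l'}\setminus J_l$ so that $D\Phi$ has rank $2$ a.s.e. Then apply the coarea formula (Theorem~\ref{thm:coarea}) to push the density of $\mathbf Z$ forward, which gives absolute continuity on $\Delta$.

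The main obstacle is the pairs with $b=0$ for $s\ge1$, i.e.\ $s$-simplices lying entirely in $Y$ that are killed later. These would put mass on the line $\{b=0\}$, which is Lebesgue-null. I would argue this does not happen, because the $Y$-part is contracted at time $0$. At time $0$ the complex is the full simplex on $Y$ together with the isolated $X$-points, which is acyclic in degrees $\ge1$. So no class of degree $s\ge1$ is born at $0$, and every birth value is some $r_l$ with $l>1$.

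To get rank $2$, I would choose $j'$ as an $X$-vertex realizing the death maximum that is not involved in the birth maximum. If this fails on a positive-measure set, the two values coincide there, which contradicts minimality of distinct $J_l$.

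\textbf{Case $s=0$.} Here every birth is $0$, since all vertices appear at time $0$. The deaths are merge times $\varphi[J_l]$ of edges with at least one $X$-endpoint. I would consider the one-dimensional map $z\mapsto\varphi[J_l](z)$ on each $U_r$. By the gradient lemma it is a submersion a.s.e., and the coarea formula then gives a density for the death coordinate on $\{0\}\times[0,\infty)$. Finally, sum over the finitely many $r$ and $l$; the resulting density is integrable since the number of points in each diagram is bounded.
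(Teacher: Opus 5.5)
Your proposal is correct and follows essentially the same route as the paper. You set aside the degenerate level $r_1=0$, use the minimal-element, partition and gradient lemmas for $l>1$, and apply the coarea formula to $\Phi=(\varphi[J_l],\varphi[J_{l'}])$ with rank $2$ for $s\ge1$, and to the death coordinate alone with rank $1$ for $s=0$. Your observation that the time-$0$ complex (a full simplex on $Y$ plus isolated $X$-points) is acyclic in degrees $\ge1$ usefully makes explicit why no birth for $s\ge1$ lies at level $r_1$; the paper's rank argument relies on this without stating it.

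One slip: in the rank-$2$ step, $j'$ need not, and sometimes cannot, be an $X$-vertex. For example, with birth edge $\{x_1,y_1\}$ and death edge $\{x_1,y_2\}$, no such $X$-vertex exists, yet the two values differ. Any $j'\in J_{l'}\setminus J_l$ works, because the gradient lemma covers $Y$-indices as well. That set is nonempty by (K3), since $r_l<r_{l'}$.
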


\begin{proof}

\textbf{Partition of $M^n \times N^k$.}
We begin by partitioning $M^n \times N^k$ into finitely many open subanalytic sets $U_1,\ldots,U_R$, on each of which the partition $A$ (induced by the Vietoris-Rips cross-persistence filtration) is constant. Concretely, let
\begin{equation}
A \;=\; \bigl\{E_1,\dots,E_L\bigr\}
\end{equation}
be the partition of $\mathcal{F}_{n+k}$ determined by the filtration. Among these sets, $E_1(\mathbf{z})$ corresponds to $r_1=0$ and contains all singletons and simplices formed solely by points from $\mathbf{Y}$. Each set $E_l$ with $l>1$ has a unique minimal element $J_l$ (see Lemma~\ref{lm: minimal_el}). For $E_1$, we pick $J_1=\{1\}$ as the minimal element.

Using these minimal elements, one shows that the map $\mathbf{z}\mapsto A(\mathbf{z})$ is locally constant on each $U_r$, thus yielding a partition of $M^n\times N^k$ into a negligible set plus the open subanalytic sets $U_1,\ldots,U_R$ (see Lemma~\ref{lm: partitions}). If $J_1,\ldots,J_L$ are the minimal elements on $U_r$, then for each $l>1$ and $j\in J_l$, we have
\begin{equation}
\nabla^j \varphi[J_l] \;\neq\; 0
\quad
\text{a.s.e.\ on }U_r
\end{equation}
(see Lemma~\ref{lm: grad_ex}).  

\medskip
\noindent
\textbf{Defining $V_r$.}
We now define
\begin{equation}
V_r
\;=\;
U_r
\;\setminus\;
\Bigl(\,
\bigcup_{l=2}^L \bigcup_{j=1}^{\mid J_l\mid}
\{\nabla^j \varphi[J_l] = 0\}
\Bigr).
\end{equation}
On each set \( V_r \), the diagram \( D_s[\mathcal{K}(\mathbf{z})] \) can be decomposed as
\begin{equation}
D_s[\mathcal{K}(\mathbf{z})] \text{ on } V_r = \sum_{i=1}^{N_r} \delta_{r_i},
\end{equation}
where \( \delta_{r_i} \) represents the Dirac delta measure corresponding to the feature \( r_i \) in the diagram.

where $r_i = \bigl(\varphi[J_{l_1}](\mathbf{z}),\,\varphi[J_{l_2}](\mathbf{z})\bigr)$ for certain $l_1,l_2$.

\medskip
\noindent
\textbf{Rank of $\Phi_{ir}$.}
For $s\ge 1$, we wish to compute the expected diagram

\begin{IEEEeqnarray}{rCl}
    E\bigl[D_s[\mathcal{K}(\mathbf{X})]\bigr] &=& \sum_{r=1}^{R} E\Bigl[\mathbf{1}\{\mathbf{X} \in V_r\} \; D_s[\mathcal{K}(\mathbf{X})]\Bigr] \notag \\
    &=& \sum_{r=1}^{R} \sum_{i=1}^{N_r} E\Bigl[\mathbf{1}\{\mathbf{X} \in V_r\} \; \delta_{r_i}\Bigr].
\end{IEEEeqnarray}
Define
\begin{equation}
\Phi_{ir}: \quad
x \;\in\; V_r
\;\;\mapsto\;
r_i
\;=\;
\Bigl(\varphi[J_{l_1}](x),\;\varphi[J_{l_2}](x)\Bigr).
\end{equation}
Let $\kappa$ be the density of $\mathbf{Z}$ with respect to $ \mathcal{H}_{dn}\times\mathcal{H}_{dk}$.  
\rev{We apply the Coarea Formula (Theorem~\ref{thm:coarea}), noting that for $s\ge1$ the mapping $\Phi_{ir}$ has full rank (equal to 2) on $V_r$ a.s.e.} Indeed:

\begin{itemize}
\item If $j \in (J_{l_2}\setminus J_{l_1})$, then $\nabla^j\varphi[J_{l_2}](x)\neq 0$, while $\nabla^j\varphi[J_{l_1}](x)=0$.
\item If $j' \in J_{l_1}$, then $\nabla^{j'}\varphi[J_{l_1}](x)\neq 0$.
\end{itemize}
Hence,

\begin{IEEEeqnarray}{lcr}
    E\Bigl[\mathbf{1}\{\mathbf{X} \in V_r\} \, \delta_{r_i}\Bigr] = P\bigl(\Phi_{ir}(\mathbf{X}) \in B, \; \mathbf{X} \in V_r\bigr) \notag \\
    = \int_{V_r} \mathbf{1}\{\Phi_{ir}(x) \in B\} \, \kappa(x) \, d\mathcal{H}_{nd}(x) \notag \\
    = \int\limits_{u \in B} \int_{x \in \Phi_{ir}^{-1}(\{u\})} \bigl(J\Phi_{ir}(x)\bigr)^{-1} \, \kappa(x) \, d\mathcal{H}_{nd-2}(x) \, du.
\end{IEEEeqnarray}

This shows $E\bigl[\mathbf{1}\{\mathbf{X}\in V_r\}\,\delta_{r_i}\bigr]$ admits a density with respect to the Lebesgue measure on $\Delta$. Consequently, the function
\begin{equation}
p(u)
\;=\;
\sum\limits_{r=1}^{R}
\sum\limits_{i=1}^{N_r}
\int_{x\in \Phi_{ir}^{-1}(\{u\})}
\bigl(J\Phi_{ir}(x)\bigr)^{-1}\,
\kappa(x)\,
d\mathcal{H}_{nd-2}(x).
\end{equation}
constitutes the density of $E[D_s[\mathcal{K}(\mathbf{X})]]$ on $\Delta$.

\medskip
\noindent
\textbf{Case $s=0$.}
For 0-persistence, the path is similar but requires additional attention to $E_1$, which includes all singletons and simplices consisting solely of points from $\mathbf{Y}$, appended at $t=0$. Consequently, for every $J\in E_1$, $\varphi[J]\equiv 0$. In the pair $\bigl(\varphi[J_{l_1}](x),\,\varphi[J_{l_2}](x)\bigr)$, the component $\varphi[J_{l_1}](x)$ is always zero, while $\nabla^j\varphi[J_{l_2}](x)\neq 0$ for $j\in J_{l_2}$. Thus, the image of $\Phi_{ir}$ lies in the set $\{0\}\times [0,\infty)$, and $\Phi_{ir}$ has rank 1, which is full rank relative to its image. The same application of the Coarea Formula therefore establishes that
\begin{equation}
\mathbb{E}\bigl[D_0[\mathcal{K}(\mathbf{X})]\bigr]
\end{equation}
also admits a density with respect to the Lebesgue measure on $\{0\}\times [0,\infty)$.

\end{proof}

\section{\break The Density-Overlap Metric Discussion}
\label{app:overlap_metric}

\rev{We use the density-overlap functional:}

\begin{equation}\mathcal{O}(p,q)
=
\int_{\mathbb{R}}
\min\{p(z), q(z)\}\, dz,\end{equation}
\rev{as a continuous similarity score between estimated MTD distributions. Here, $p(z)$ and $q(z)$ denote one-dimensional probability density functions of the scalar random variable $Z=\mathrm{MTD}(Q_i,Q_j)$, obtained via a linear representation of cross-persistence diagrams as established in Proposition~\ref{th:linear_repr}. Fig.~\ref{fig:overlap_four_cases} illustrates four typical scenarios: full overlap, partial overlap, overlap with substantially different dispersion, and near-zero overlap.}
\begin{figure}[t]
\centering
\IfFileExists{pictures/overlap_four_cases.png}{%
    \includegraphics[width=0.49\textwidth]{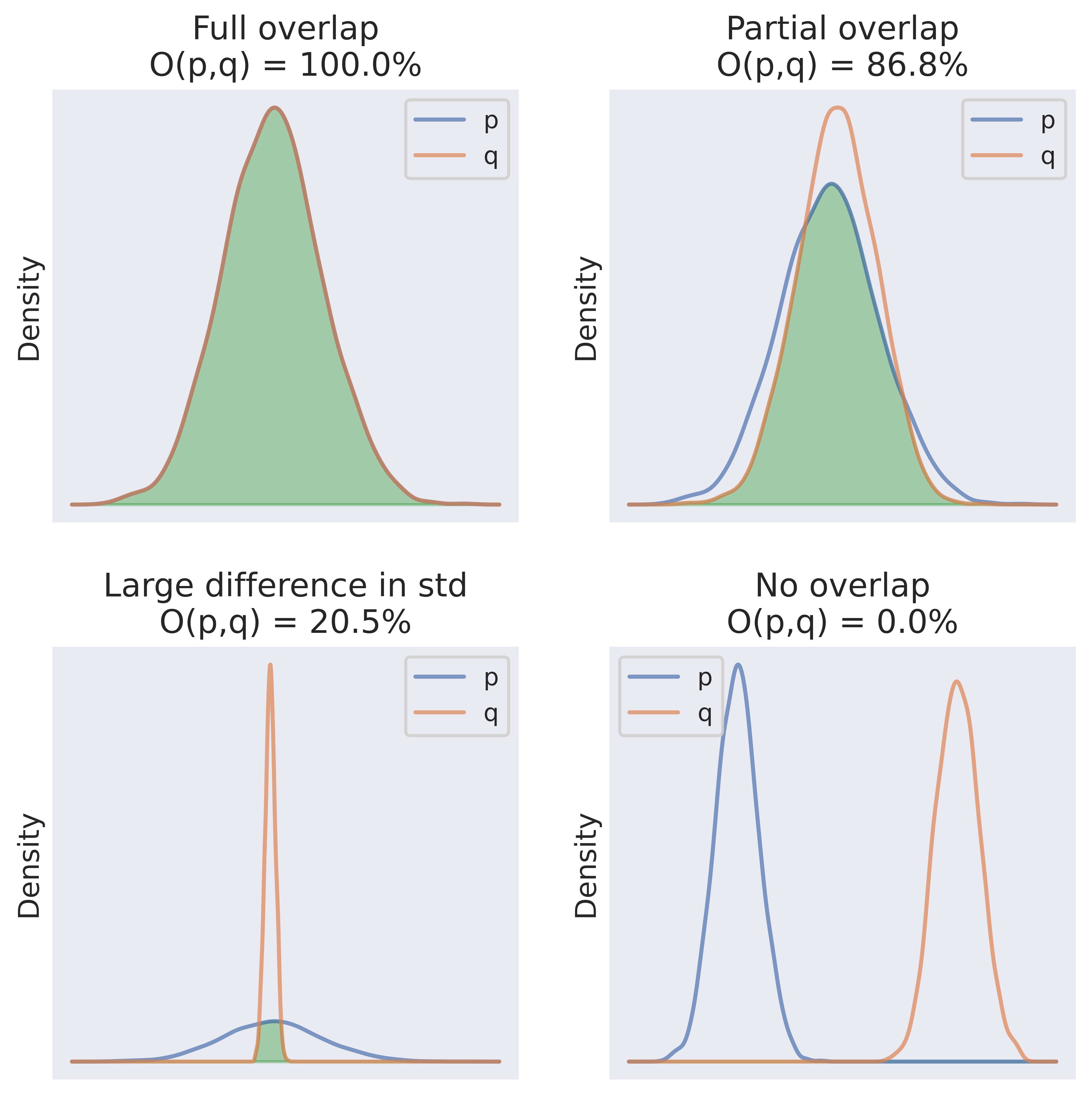}%
}{%
    \fbox{\parbox{0.95\textwidth}{\centering\rev{Place the file \texttt{pictures/overlap\_four\_cases.png} here.}}}%
}
\caption{\rev{Illustration of the overlap metric $\mathcal{O}(p,q)$ on four representative pairs of densities $p$ and $q$.}}
\label{fig:overlap_four_cases}
\end{figure}

\rev{
The density-overlap functional $\mathcal{O}(p,q)$ is an $L^1$-based similarity measure and is therefore insensitive to second-order geometric properties of the distributions, such as dispersion within the region of overlap. Consequently, two densities may exhibit moderate or even substantial overlap while differing significantly in their concentration or spread, as illustrated in the third scenario of Fig.~\ref{fig:overlap_four_cases}.
This regime is encountered in practice in high-complexity datasets such as CIFAR100 (Section~\ref{sec:experiments}), where overlap values remain well below saturation while dispersion differs substantially across class-conditional MTD densities.
}

\section{Stability of MTD Density and Overlap-Based Comparison}
\label{app:mtd_stability}

\rev{
In this appendix, we briefly justify the stability of the proposed MTD-based statistical comparison procedure.
}

\rev{
It is known that expected persistence diagrams admit densities and that kernel-based estimators constructed from independent realizations are statistically consistent under mild regularity assumptions. Such results follow from standard kernel density estimation theory for random measures and were established for persistence diagrams in~\cite{Density-Chazal}.
}

\rev{
Moreover, for suitable choices of bandwidth (e.g., selected via cross-validation), convergence rates of the corresponding estimators can be derived by adapting classical analyses of kernel density estimators to the persistence diagram setting, see~\cite{Density-Chazal,Tsybakov2008IntroductionTN}.
}

\rev{
Since cross-persistence diagrams satisfy the same structural assumptions as classical persistence diagrams (Theorem~\ref{th: cross_pers_density}), these consistency and convergence properties extend directly to the cross-persistence setting.
}

We now consider the Manifold Topology Divergence (MTD), which is a linear functional of a cross-persistence diagram defined by
\begin{equation}
\mathrm{MTD}_i(P, Q)
:=
\sum_{(\alpha_b, \alpha_d)\in \mathrm{CrossBarcode}_i(P,Q)} (\alpha_d-\alpha_b).
\end{equation}

In practical settings, due to finite sample size and bounded homological complexity, we assume that a cross-persistence diagram contains at most $N$ off-diagonal points almost surely. This ensures that the MTD functional is well-defined and integrable.

\begin{proposition}
Let $D$ be a random cross-persistence diagram and let
\begin{equation}
MTD(D) := \sum_{(x,y)\in D} (y-x)
\end{equation}
denote the MTD functional on diagram. Let $\mu$ and $\hat\mu$ be two probability laws on the space of cross-persistence diagrams such that
\begin{equation}
\|\mu - \hat\mu\|_{\mathrm{TV}} \le \varepsilon.
\end{equation}
Then the induced distributions of the random variable $MTD(D)$ satisfy
\begin{equation}
\|\rho_{\mathrm{MTD}} - \hat\rho_{\mathrm{MTD}}\|_1 \le \varepsilon.
\end{equation}
\end{proposition}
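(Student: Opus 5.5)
This is the data-processing inequality for total variation under a measurable map. We write $T:\mathcal{D}\to\mathbb{R}$, $T(D)=\mathrm{MTD}(D)=\sum_{(x,y)\in D}(y-x)$, where $\mathcal{D}$ is the space of cross-persistence diagrams with at most $N$ off-diagonal points. We equip $\mathcal{D}$ with the Borel $\sigma$-algebra of the standard topology on finite point configurations in $\Delta$; this is the one for which the random diagram $D_s[\mathcal{K}(\mathbf{Z})]$ is a random element.

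\textbf{Step 1: measurability of $T$.} On the stratum $\mathcal{D}_m$ of diagrams with exactly $m\le N$ points, $T$ is a symmetric continuous function of the $m$ coordinates, namely a linear function of the points. Hence $T$ is Borel on each stratum and therefore on $\mathcal{D}=\bigcup_{m\le N}\mathcal{D}_m$. The uniform bound $N$ also makes $T$ finite everywhere, which is the only role of the standing assumption here.

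\textbf{Step 2: push-forward contraction.} Let $T_\#\mu$ and $T_\#\hat\mu$ be the laws of $\mathrm{MTD}(D)$ under $\mu$ and $\hat\mu$. For any Borel $B\subset\mathbb{R}$, the preimage $T^{-1}(B)$ is measurable in $\mathcal{D}$, so
\begin{equation}
|T_\#\mu(B)-T_\#\hat\mu(B)| = |\mu(T^{-1}(B))-\hat\mu(T^{-1}(B))| \le \sup_{A}|\mu(A)-\hat\mu(A)| = \|\mu-\hat\mu\|_{\mathrm{TV}}\le\varepsilon .
\end{equation}
Taking the supremum over $B$ gives $\|T_\#\mu-T_\#\hat\mu\|_{\mathrm{TV}}\le\varepsilon$.

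\textbf{Step 3: passing to densities.} By Proposition~\ref{th:linear_repr}, $T_\#\mu$ and $T_\#\hat\mu$ admit Lebesgue densities $\rho_{\mathrm{MTD}}$ and $\hat\rho_{\mathrm{MTD}}$; we take the existence of $\hat\rho_{\mathrm{MTD}}$ as implicit in the statement. For absolutely continuous laws, Scheffé's identity gives $\|T_\#\mu-T_\#\hat\mu\|_{\mathrm{TV}}=\tfrac12\|\rho_{\mathrm{MTD}}-\hat\rho_{\mathrm{MTD}}\|_1$.

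\textbf{Main obstacle: the normalization convention.} The claimed bound holds as stated only if $\|\cdot\|_{\mathrm{TV}}$ denotes the full variation $|\mu-\hat\mu|(\mathcal{D})=2\sup_A|\mu(A)-\hat\mu(A)|$. Under that convention, Step 2 reads $|T_\#\mu-T_\#\hat\mu|(\mathbb{R})\le|\mu-\hat\mu|(\mathcal{D})$. This follows by applying the Hahn decomposition of $T_\#\mu-T_\#\hat\mu$ and pulling the positive and negative sets back through $T$. It then coincides with $\|\rho_{\mathrm{MTD}}-\hat\rho_{\mathrm{MTD}}\|_1$ with no factor of $2$. We will fix this convention explicitly at the start of the proof.

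With the $\sup_A$ convention instead, the same argument only yields the bound $2\varepsilon$, so the statement as written would need the factor-$2$ convention.
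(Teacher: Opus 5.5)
Your proof is correct and is essentially the paper's argument: measurability of $\mathrm{MTD}$, the fact that push-forward by a measurable map does not increase total variation, and then identifying total variation with the $L^1$ distance between the densities. Your normalization remark is a real sharpening. The paper's last step (``total variation coincides with the $L^1$ distance between densities'') silently uses the full-variation convention $|\mu-\hat\mu|(\mathcal{D})$. Under the $\sup_A$ convention only the bound $2\varepsilon$ follows, exactly as you say.
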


\begin{proof}
The $MTD$ functional is measurable. The distributions $\rho_{\mathrm{MTD}}$ and $\hat\rho_{\mathrm{MTD}}$ are the pushforward measures of $\mu$ and $\hat\mu$ under $MTD$.  
It is a standard property of total variation distance that pushforward by a measurable mapping does not increase total variation. Hence,
\begin{equation}
\|\rho_{\mathrm{MTD}} - \hat\rho_{\mathrm{MTD}}\|_{\mathrm{TV}}
\le
\|\mu - \hat\mu\|_{\mathrm{TV}}
\le \varepsilon.
\end{equation}
Since total variation coincides with the $L^1$ distance between densities when they exist, the claim follows.
\end{proof}

\begin{proposition}
Let $p,q$ and $\hat p, \hat q$ be true and estimated densities of two MTD random variables, satisfying
\begin{equation}
\|p-\hat p\|_1 \le \delta, \qquad \|q-\hat q\|_1 \le \delta.
\end{equation}
Then the density-overlap functional
\begin{equation}
\mathcal{O}(p,q) = \int_{\mathbb{R}} \min\{p(z),q(z)\}\,dz
\end{equation}
satisfies
\begin{equation}
|\mathcal{O}(p,q) - \mathcal{O}(\hat p,\hat q)| \le 2\delta.
\end{equation}
\end{proposition}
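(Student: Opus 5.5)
The approach is to show that $\min\{\cdot,\cdot\}$ is $1$-Lipschitz in each argument pointwise, and then integrate. For real numbers $a,b,c,d$ we have the elementary inequality
\begin{equation}
\bigl|\min\{a,b\}-\min\{c,d\}\bigr| \le |a-c| + |b-d|.
\end{equation}
To see this, first split the difference through the intermediate term $\min\{c,b\}$:
\begin{equation}
\min\{a,b\}-\min\{c,d\} = \bigl(\min\{a,b\}-\min\{c,b\}\bigr) + \bigl(\min\{c,b\}-\min\{c,d\}\bigr).
\end{equation}
It then suffices to check that $|\min\{a,b\}-\min\{c,b\}|\le |a-c|$. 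This follows because $t\mapsto\min\{t,b\}$ is nondecreasing with slope at most one; equivalently, one can use $\min\{a,b\}=\tfrac12(a+b-|a-b|)$ together with the reverse triangle inequality.

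Next, apply this pointwise with $a=p(z)$, $b=q(z)$, $c=\hat p(z)$, $d=\hat q(z)$ for every $z\in\mathbb{R}$. All four functions are integrable, and $0\le\min\{p,q\}\le p$, so each overlap integral is finite and the difference of the two integrals equals the integral of the difference. Hence
\begin{equation}
|\mathcal{O}(p,q)-\mathcal{O}(\hat p,\hat q)| \le \int_{\mathbb{R}}\bigl|\min\{p,q\}-\min\{\hat p,\hat q\}\bigr|\,dz \le \|p-\hat p\|_1+\|q-\hat q\|_1 \le 2\delta.
\end{equation}

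No step here is a genuine obstacle; the only care needed is measurability and integrability of the minimum, which follow from its being bounded by the integrable function $p$. Note also that the argument never uses that $\hat p,\hat q$ are probability densities. It applies equally to nonnegative, possibly unnormalized kernel estimates, so the bound holds for any $L^1$ estimators. In combination with the previous proposition, this gives stability of the overlap with respect to total-variation perturbations of the laws of the diagrams.
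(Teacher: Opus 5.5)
Your proof is correct and follows the same route as the paper: it integrates the pointwise bound $|\min\{p,q\}-\min\{\hat p,\hat q\}|\le |p-\hat p|+|q-\hat q|$ over $\mathbb{R}$. You also justify the elementary inequality and the integrability of the minima, which the paper leaves implicit.
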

\begin{proof}
The claim follows directly from the Lipschitz continuity of the $\min$ operator with respect to the $L^1$ norm.

For all $z\in\mathbb R$,
\begin{IEEEeqnarray}{lcr}
|\min\{p(z),q(z)\}-\min\{\hat p(z),\hat q(z)\}| \le \notag \\
\le |p(z)-\hat p(z)|+|q(z)-\hat q(z)|.
\end{IEEEeqnarray}
Integrating both sides yields
\begin{equation}
|\mathcal{O}(p,q)-\mathcal{O}(\hat p,\hat q)| \le  \|p-\hat p\|_1+\|q-\hat q\|_1 \le 2\delta.
\end{equation}
\end{proof}

\rev{As a consequence, the overlap-based comparison used in Section~\ref{sec:experiments} is stable under density estimation error. In particular, when the estimated overlap is significantly separated from the chosen significance threshold (e.g., $0.05$), the probability of misclassification due to density estimation error becomes negligible for sufficiently large sample size.}

\section{\break Comparative Analysis of Human and AI-Generated Texts}
\label{app:Human_GPT}

To conduct a more comprehensive analysis of texts, which represent the most complex domain among those studied, we focused on examining the densities of cross-persistence diagrams derived from comparing GPT-generated texts with human-written texts.

As illustrated in Fig.~\ref{fig:Human_GPT_densities}, we analyzed four types of densities (Human vs. GPT, Human vs. Human, GPT vs. Human, and GPT vs. GPT). Through a detailed visual inspection of these diagrams, we observed that the densities with human texts on the left side of the $Cross-Barcode(P, Q)$ exhibit  significantly greater visual diversity compared to those with GPT-generated texts on the left side. 

\begin{figure*}[!t]
\vspace{-.15in}
\centering
{\includegraphics[width=0.32\linewidth]{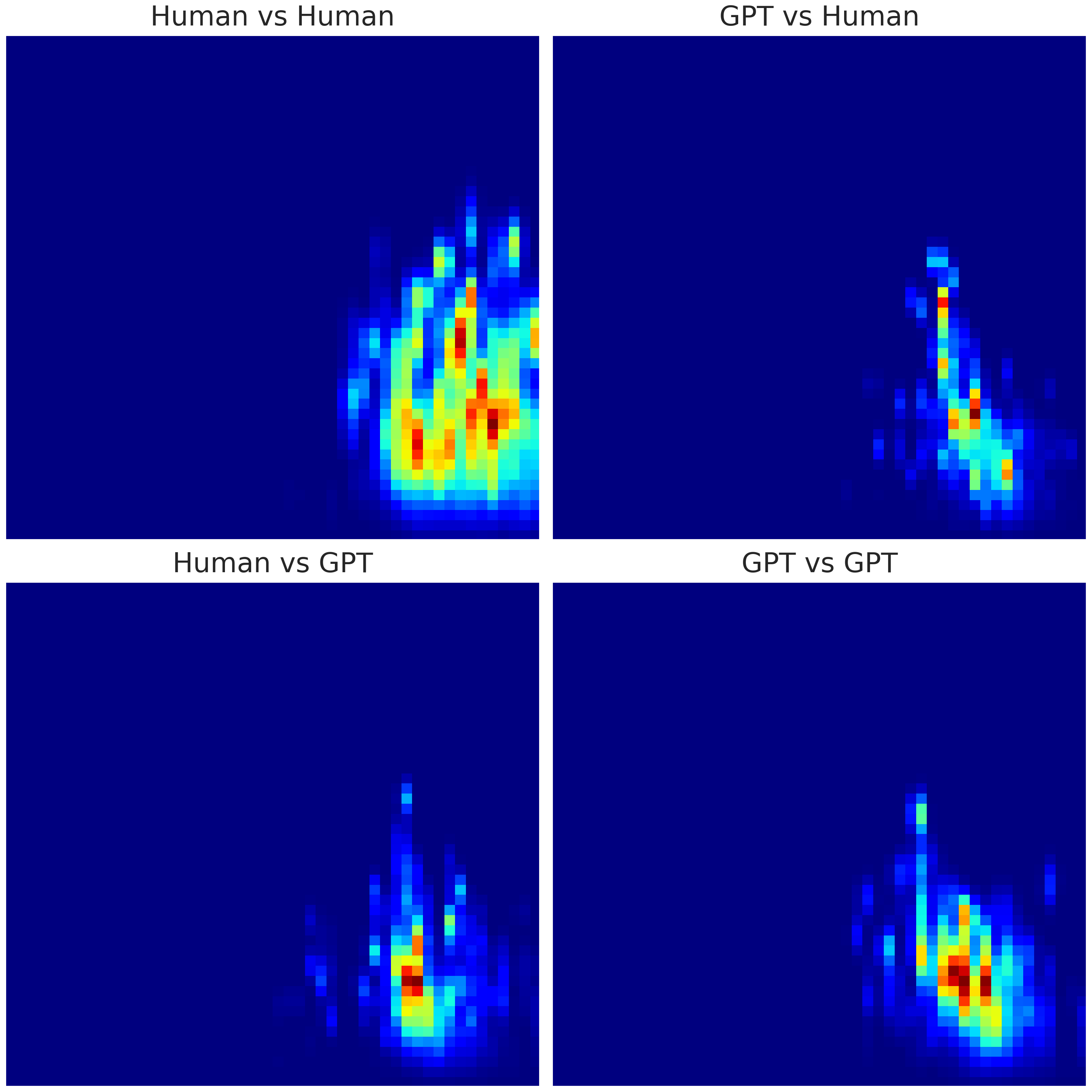}}
{\includegraphics[width=0.32\linewidth]{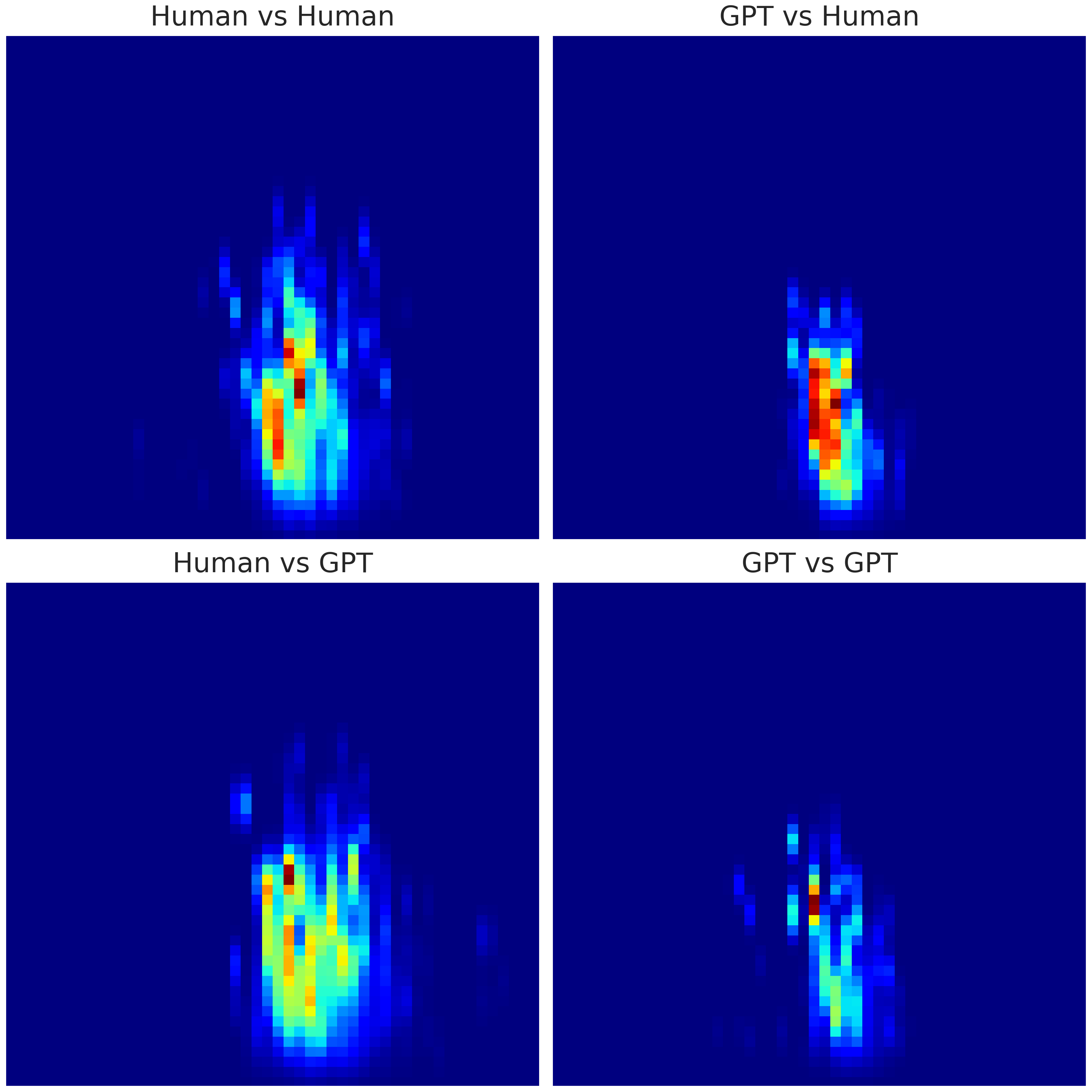}}
{\includegraphics[width=0.32\linewidth]{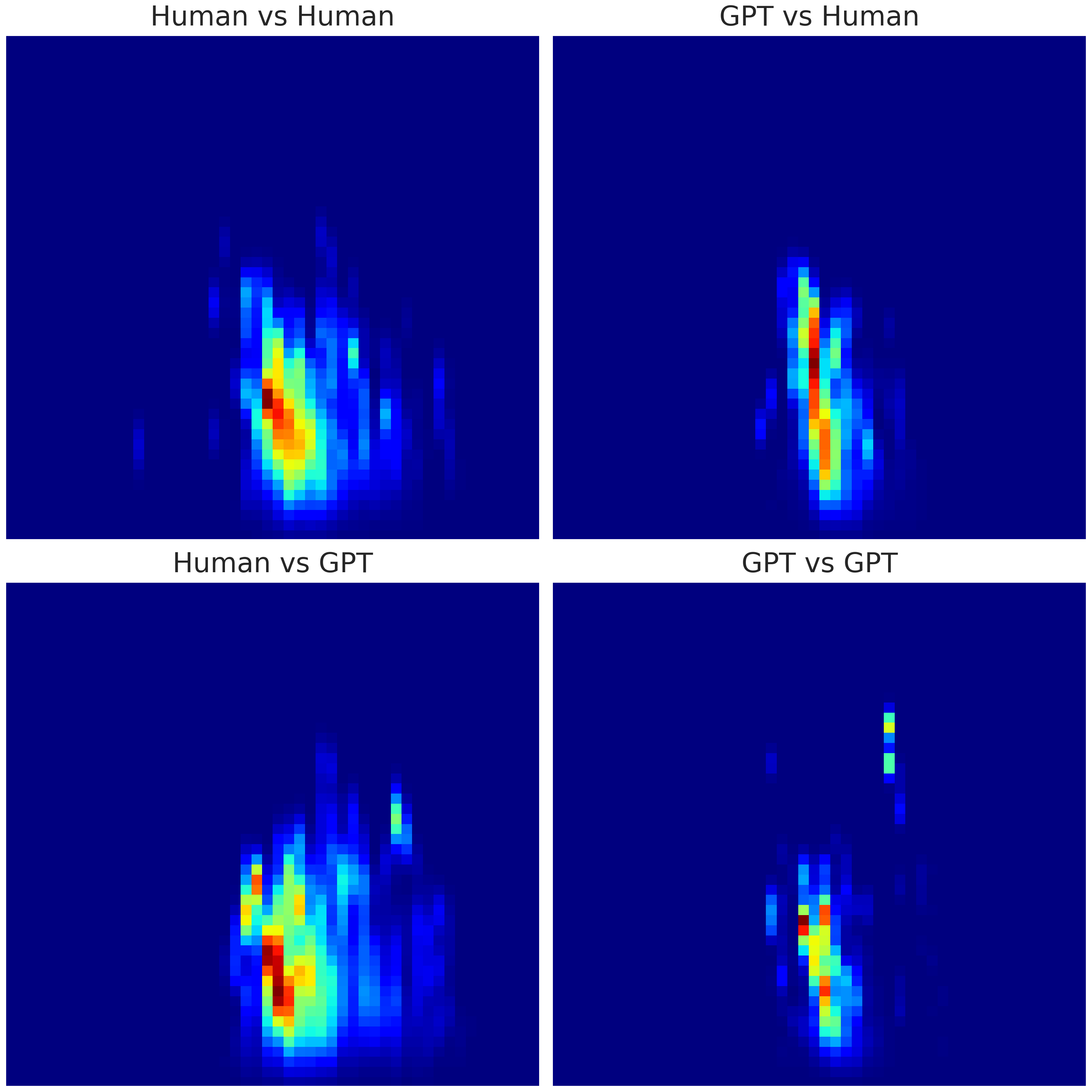}}
\caption{Comparison of cross-persistence diagram densities for different input combinations. A higher level of diversity is evident in the densities corresponding to human texts on the left side.}
\label{fig:Human_GPT_densities}
\end{figure*}

These observations motivated us to investigate the differences in diversity by applying dimensionality reduction techniques. We vectorized the densities of the cross-persistence diagrams and applied principal component analysis (PCA) and Isomap to project them onto a 2D space. The results, presented in Fig.~\ref{fig:Human_GPT_densities_projected}, demonstrate that these densities can be effectively separated into two distinct groups in the lower-dimensional representation.

To further assess the quality of the proposed features and their ability to differentiate whether the left point cloud in a pair is AI-generated or written by a human, we conducted an experiment with a baseline: calculating two persistence images for each point cloud, concatenating them, and applying PCA. As shown in Fig.~\ref{fig:Human_GPT_densities_projected_baseline}, our new approach can significantly better detect this difference and localize Human-Any pairs more effectively. 

\rev{These qualitative observations suggest that cross-persistence diagram densities capture structural differences between human-written and AI-generated texts. To substantiate this claim and to align the text-domain experiments with the quantitative protocol used in the gravitational wave study, we next introduce an explicit classification setting and report standard performance metrics.}

\begin{figure}[t]
\vspace{-.15in}
\centering
{\includegraphics[width=0.45\linewidth]{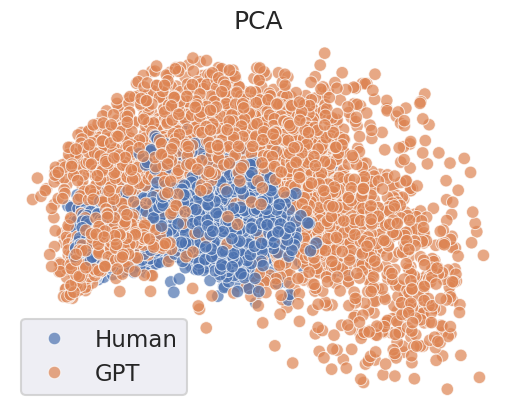}}
{\includegraphics[width=0.45\linewidth]{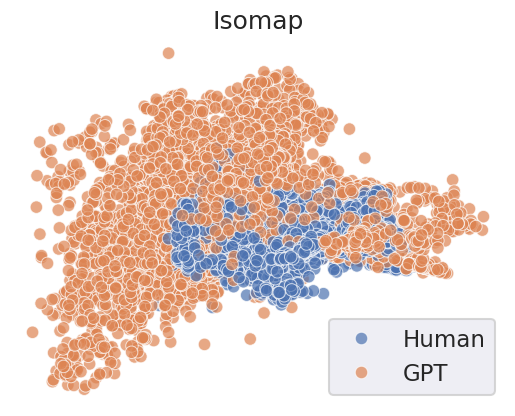}}
\caption{Projections of cross-persistence densities onto a 2D space using different dimensionality reduction techniques. Human vs Any text densities (blue) and AI vs Any text densities (orange) exhibit partial separation, confirming their distinct characteristics.}
\label{fig:Human_GPT_densities_projected}
\end{figure}

\begin{figure}[t]
\vspace{-.15in}
\centering
{\includegraphics[width=0.45\linewidth]{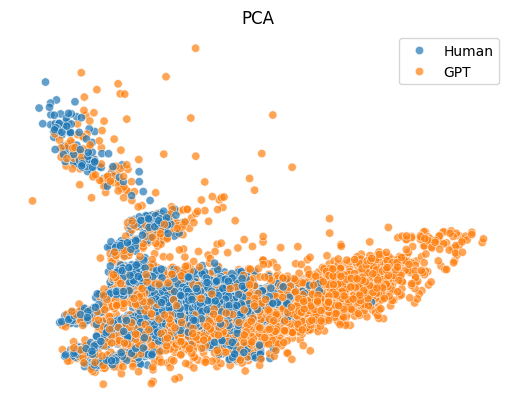}}
{\includegraphics[width=0.45\linewidth]{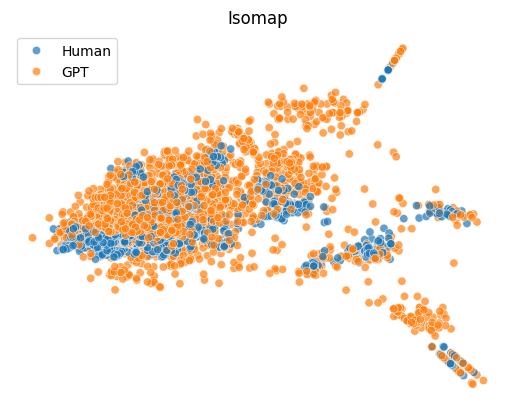}}
\caption{Projections of concatenated persistence images onto a 2D space using different dimensionality reduction techniques. It can be seen, that with usage of common persistence images, detection wether left cloud originates from AI or Human became much more harder, as support of Human points in this case are not localized.}
\label{fig:Human_GPT_densities_projected_baseline}
\end{figure}

\rev{The results in Figs.~\ref{fig:Human_GPT_densities}--\ref{fig:Human_GPT_densities_projected_baseline} reveal a systematic difference between cross-barcodes in which the \emph{left} point cloud corresponds to a human-written text and those in which it corresponds to an AI-generated text. This observation naturally motivates a supervised classification setting. Given a collection of reference texts from both classes, we compute cross-persistence diagrams between a new text and each reference text and extract feature vectors from the resulting densities. These representations are treated as samples drawn from class-dependent distributions. A standard classifier can then be trained to determine whether the left point cloud in a pair originates from a human or an AI-generated text.}

\rev{In this work, we adopt the same downstream evaluation protocol as in the gravitational wave experiment and train a logistic regression classifier on the proposed topological features. The corresponding quantitative results are reported in Table~\ref{tab:AI_Human_text_exp}, enabling a direct comparison with baseline representations based on classical persistence entropy of the query point cloud. The proposed cross-persistence based features consistently outperform the baseline persistence entropy representation in terms of both accuracy and ROC on the validation set. In particular, MTD-based features achieve the highest discriminative performance, confirming that cross-persistence densities capture information that is not accessible from single-cloud summaries alone. This confirms that incorporating cross-persistence information leads to a measurable improvement over single-cloud topological summaries in the text domain.}

\begin{table*}[!t]
  \caption{
  \rev{Results of AI vs. human text classification on Wiki and Reddit datasets.
  We report mean $\pm$ standard deviation over repeated runs.
  Wiki texts are generated using Wikipedia prompts, while Reddit texts are generated from question-based prompts following the experimental protocol of \cite{tulchinskii2023intrinsicdimensionestimationrobust}.
  Best validation scores within each dataset block are shown in \textbf{bold}.}
  }
  \label{tab:AI_Human_text_exp}
  \centering
  \setlength{\tabcolsep}{8pt}
  \renewcommand{\arraystretch}{1.2}
  \footnotesize
  \resizebox{\textwidth}{!}{
  \begin{tabular}{lcccc}
    \toprule
    \textbf{Model} & \textbf{Accuracy (train)} & \textbf{ROC (train)} & \textbf{Accuracy (valid)} & \textbf{ROC (valid)} \\
    \midrule
    
    \multicolumn{5}{c}{\textbf{Wiki dataset (Wikipedia prompts)}} \\
    \midrule
    Persistence Entropy (baseline) 
        & $0.839\pm0.003$ & $0.942\pm0.001$ 
        & $0.841\pm0.027$ & $0.945\pm0.007$ \\
    Cross-Entropies 
        & $0.948\pm0.004$ & $0.970\pm0.001$ 
        & $0.950\pm0.009$ & $0.977\pm0.007$ \\
    MTDs 
        & $0.955\pm0.001$ & $0.975\pm0.001$ 
        & $0.960\pm0.007$ & $0.980\pm0.006$ \\
    MTDs + Cross-Entropies 
        & 0.958$\pm$0.002 & 0.976$\pm$0.001
        & \textbf{0.966$\pm$0.007} & \textbf{0.981$\pm$0.006} \\

    \midrule
    \multicolumn{5}{c}{\textbf{Reddit dataset (question-based prompts)}} \\
    \midrule
    Persistence Entropy (baseline) 
        & $0.885\pm0.002$ & $0.944\pm0.002$ 
        & $0.873\pm0.027$ & $0.942\pm0.023$ \\
    Cross-Entropies 
        & $0.904\pm0.002$ & $0.942\pm0.002$ 
        & $0.912\pm0.010$ & $0.946\pm0.017$ \\
    MTDs 
        & $0.957\pm0.002$ & $0.988\pm0.001$ 
        & $0.957\pm0.009$ & $0.989\pm0.006$ \\
    MTDs + Cross-Entropies 
        & 0.965$\pm$0.002 & 0.994$\pm$0.000
        & \textbf{0.971$\pm$0.008} & \textbf{0.994$\pm$0.003} \\

    \bottomrule
  \end{tabular}
  }
\end{table*}

\section{\break Cross-RipsNet for predicting MTD density}
\label{app:MTD_density_prediction}

In this section, we conducted experiments with Cross-RipsNet to predict not merely the density of cross-persistence diagrams but specifically the density of Manifold Topology Divergence (MTD) directly. This investigation is particularly relevant, as certain applications may require solely MTD density prediction, similar to the use case presented in Section~\ref{sec:experiments}.

To evaluate Cross-RipsNet's capability in predicting such objects, we devised an experiment where the model was tasked with forecasting the MTD density for novel pairs of point clouds from a synthetic dataset. The results, as illustrated in Fig.~\ref{fig:predicting_MTD}, demonstrate that our model achieves this prediction with high accuracy.


\begin{figure}[t]
\vspace{-.15in}
\centering
{\includegraphics[width=0.49\linewidth]{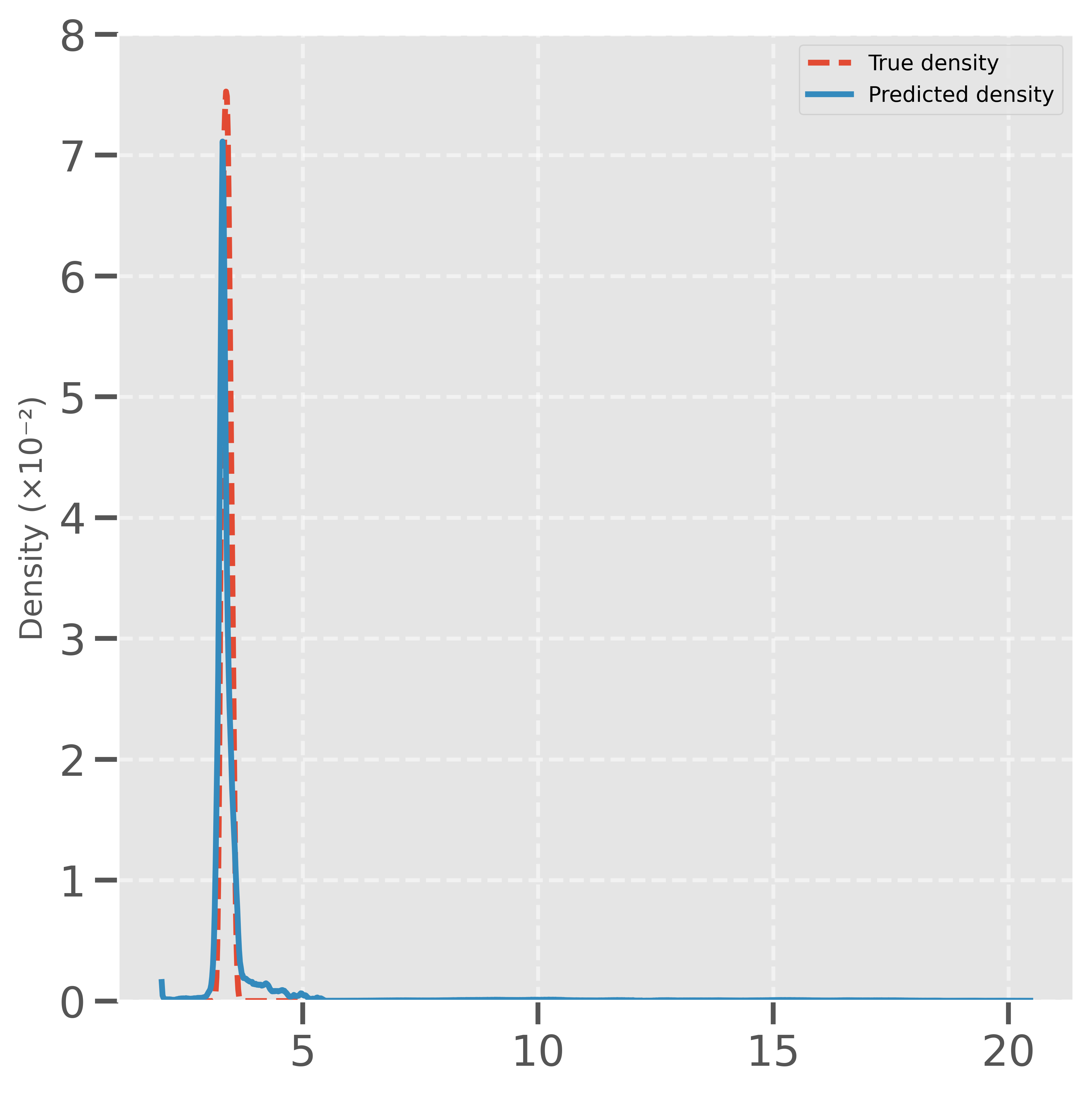}}
{\includegraphics[width=0.49\linewidth]{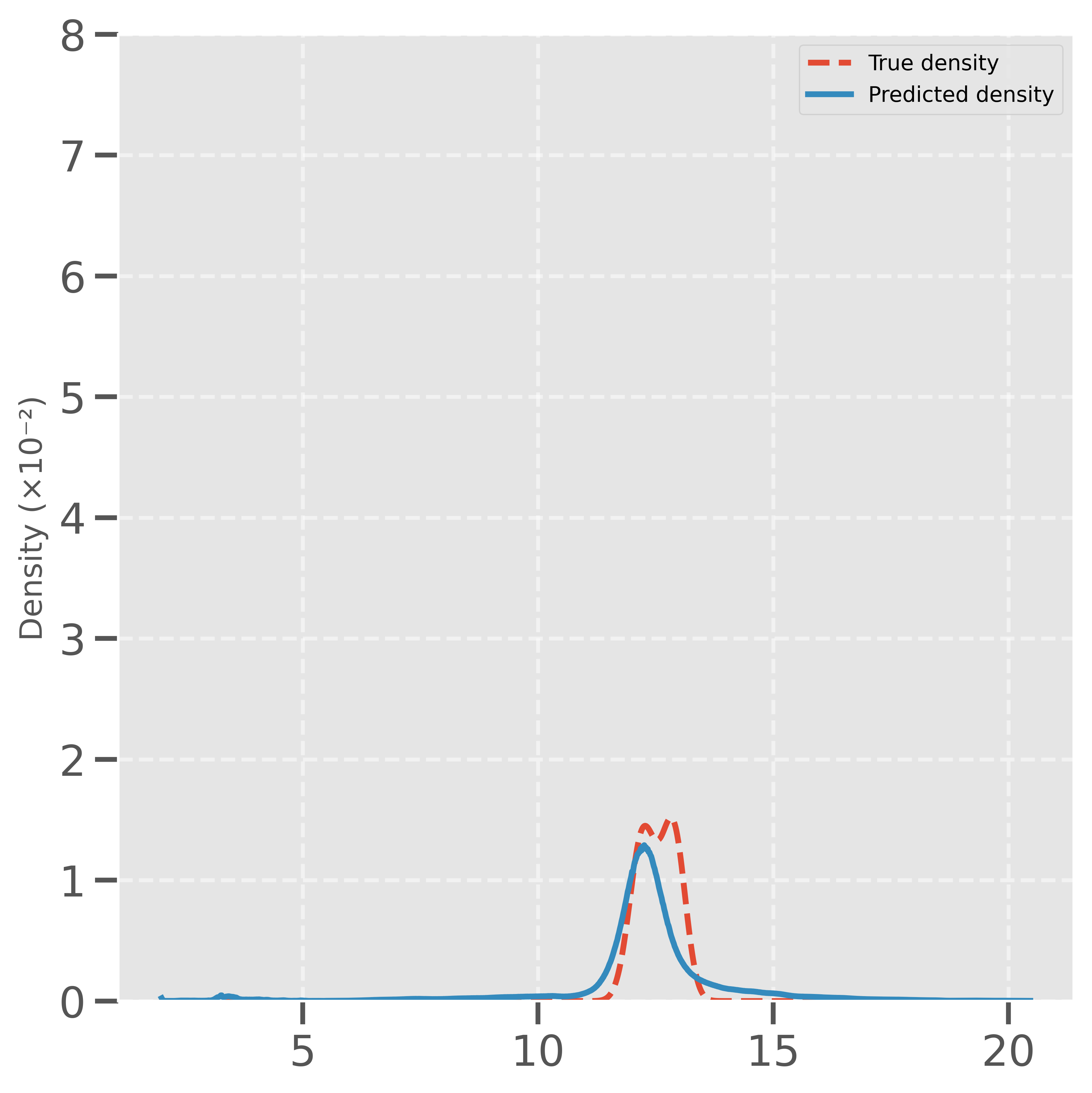}}
\caption{Cross-RipsNet's performance in predicting MTD density, for the synthetic dataset and two randomly chosen pairs of clouds from the test set.}
\label{fig:predicting_MTD}
\end{figure}

\section{\break Sensitivity Analysis of Noise Injection}
\label{app:noise_sensitivity}

\rev{Noise injection plays a key role in our empirical pipeline for improving the separability of image classes (Section ~\ref{sec:experiments}). To make the choice of noise magnitude transparent and to quantify its effect, we perform a sensitivity analysis. We measure the mean overlap (intersection) between estimated cross-persistence densities across classes as a function of the noise level. We compare two regimes: (i) noise is injected only into the right point cloud and (ii) noise is injected into both point clouds; the results  on the COIL20 dataset are summarized in Fig.~\ref{fig:coil20_noise_sensitivity}.}
\begin{figure}[t]
\centering
\IfFileExists{pictures/COIL20_noise_sensitivity.png}{%
    \includegraphics[width=0.95\linewidth]{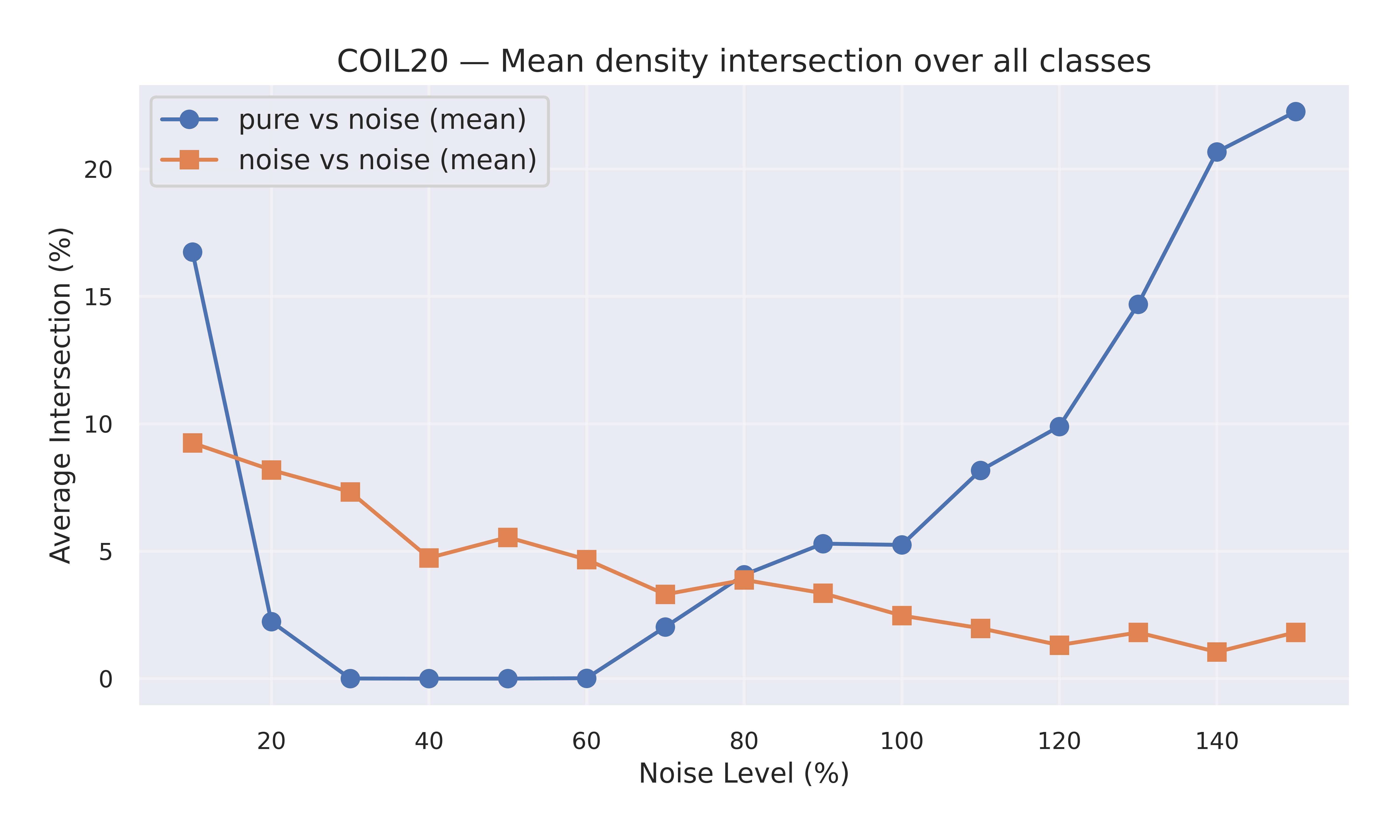}%
}{%
    \fbox{\parbox{0.95\linewidth}{\centering\rev{Place the file \texttt{pictures/COIL20\_noise\_sensitivity.png} here.}}}%
}
\caption{\rev{COIL20 sensitivity analysis: mean intersection between cross-persistence density estimates across all classes vs. noise level. Blue curve: \textit{pure vs. noise} (only the right point cloud is noised). Orange curve: \textit{noise vs. noise} (both point clouds are noised).}}
\label{fig:coil20_noise_sensitivity}
\end{figure}


\rev{A plausible mechanism behind the observed effect is that adding noise perturbs pairwise distances and can effectively “thicken” the sampled manifold. In the cross-filtration, this can increase the frequency of topological interactions between a clean cloud and a noised cloud in a class-dependent way, which amplifies differences between classes in the resulting density. At the same time, the \textit{noise vs.
noise} regime tends to preserve overlap reduction more smoothly, which is consistent with the robustness of the MTD-based statistics to symmetric perturbations. This intuition is aligned with observations in computer vision that controlled noise injection can improve robustness and separability by encouraging reliance on stable structures rather than brittle fine-scale artifacts; see, e.g., adaptive/trainable noise injection and parametric noise injection for robustness \cite{LI2024103855, Parametric_noise}}.

\rev{
From a theoretical perspective, this behavior is consistent with the stability theory of persistence diagrams. It is well established that persistence diagrams are stable with respect to perturbations of the underlying metric or point cloud, with the bottleneck distance bounded by the magnitude of the perturbation. In our setting, moderate Gaussian noise induces controlled perturbations of pairwise distances, leading to bounded and structured shifts of cross-persistence features. For intra-class comparisons (e.g., $MTD(Q_1,Q_1)$), stability implies that moderate perturbations preserve the concentration of topological summaries. In contrast, inter-class comparisons (e.g., $MTD(Q_1,Q_s)$) accumulate both intrinsic geometric discrepancy and perturbation-induced variation, which can increase the separation between the corresponding MTD density estimates in intermediate noise regimes. When the noise level becomes dominant relative to the signal, this structured effect vanishes, explaining the eventual loss of separability observed at high noise levels.
}

\rev{
In particular, the existence of noise regimes where the mean overlap approaches zero indicates that the overlap-based comparison can become highly discriminative without modifying the underlying metric, supporting its use as the primary similarity measure in the main experiments.
}

\section{\break Are the Cross-RipsNet Inputs Redundant?}
\label{app:Ablation_CrossRpsNet}

In this section, we investigate whether all inputs to the Cross-RipsNet model are necessary. Specifically, we perform an ablation study in which we remove the input corresponding to the second (right) point cloud-see Fig.~\ref{fig:Pic_of_archs}-to test whether the model can still capture topological discrepancies using only the first point cloud and the combined cloud.

\rev{Our hypothesis is that the observed effect is not due to redundancy in the model architecture, but rather to redundancy in the input representation itself. Since the combined point cloud implicitly contains both $X$ and $Y$, providing the model with the pair $(X \cup Y,\, Y)$ already makes the information about $X$ recoverable in principle. As a result, during training the network can learn to extract the necessary information about the missing cloud implicitly from the combined input and use it for cross-persistence diagram density prediction.}

To evaluate this, we conducted multiple experiments on a synthetic and 3D shapes datasets using different random seeds to ensure the consistency and robustness of the results. In each run, we assessed model performance using the KL divergence on the cross-persistence density prediction task (see Section~\ref{sec:Cross_RipsNet_experiments} for details).

\rev{As shown in Fig.~\ref{fig:boxplot_ablation}, removing the explicit input corresponding to the second point cloud does not lead to a degradation in performance. This behavior should not be interpreted as an architectural flaw of Cross-RipsNet, but rather as a consequence of redundancy in the provided inputs. Importantly, the cross-persistence diagram $Cross-Barcode(P, Q)$ is not symmetric in its arguments, and the left point cloud plays a distinguished role in the construction of the filtration. Therefore, once the combined cloud is available together with one of the individual clouds, the model has access to all information required to reconstruct the asymmetric topological interactions and to predict the corresponding density.}

\begin{figure}[t]
\centering
\subfloat[Synthetic dataset]{%
    \includegraphics[width=0.49\textwidth]{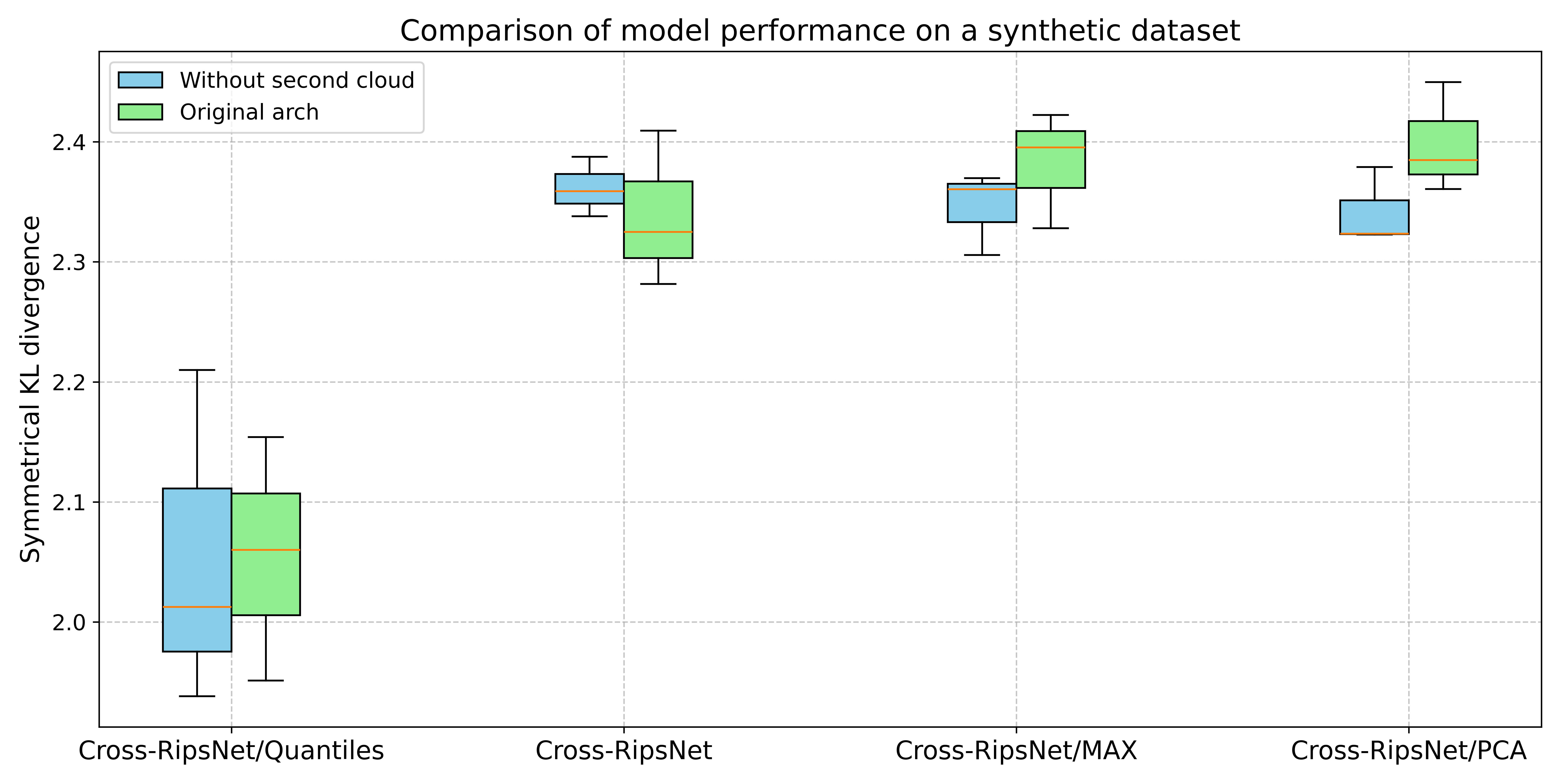}%
    \label{fig:ablation_synthetic}}
\hfil
\subfloat[3D shapes dataset]{%
    \includegraphics[width=0.49\textwidth]{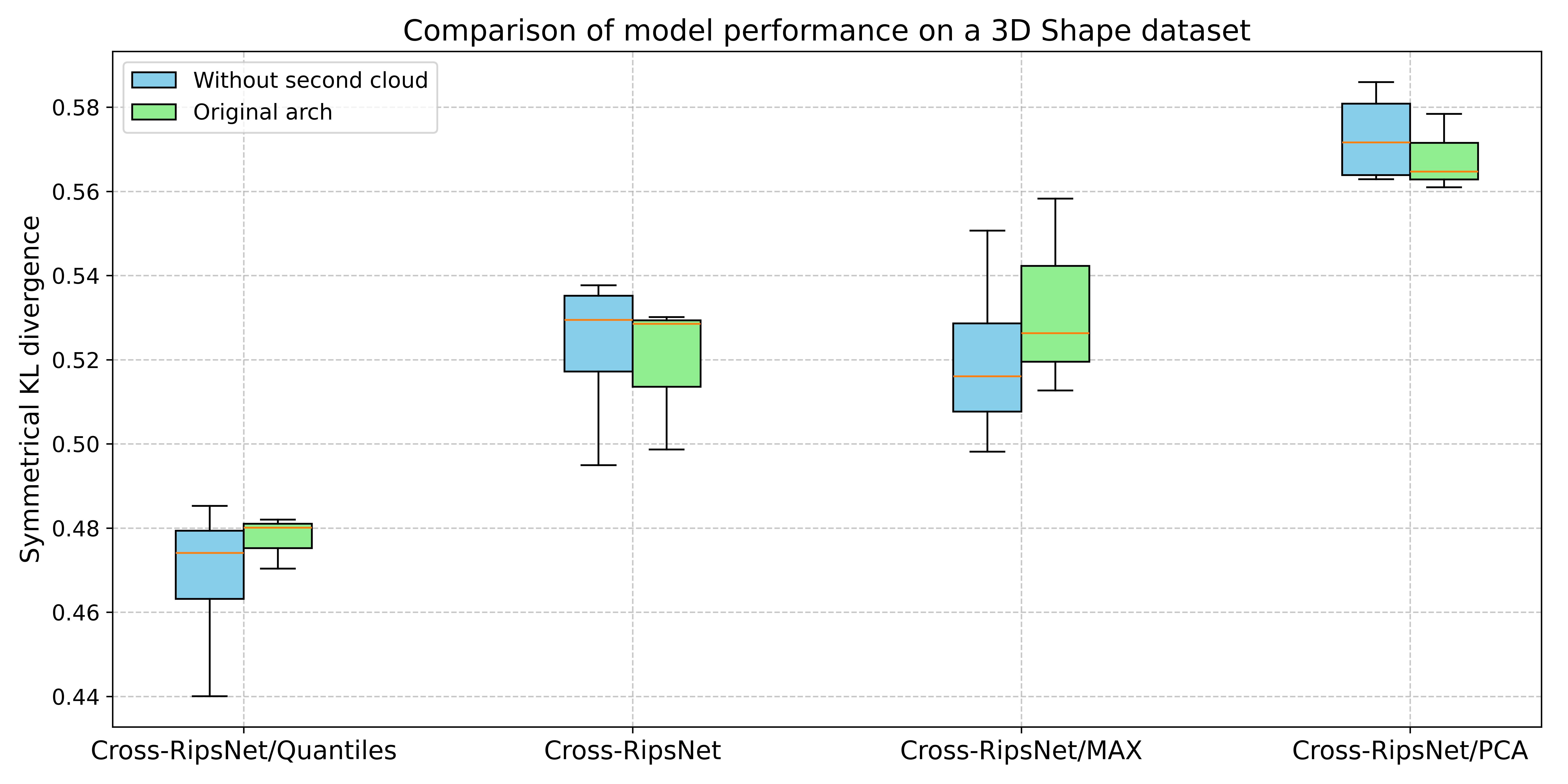}%
    \label{fig:ablation_3d}}
\caption{Boxplot showing the distribution of KL divergence values for cross-persistence density prediction across multiple random seeds on the test set. The plot compares the original Cross-RipsNet model (using all inputs) with a modified version in which the input corresponding to the second point cloud has been removed.}
\label{fig:boxplot_ablation}
\end{figure}

\bibliographystyle{IEEEtran}   
\bibliography{references}      

@inproceedings{3D_shapes,
title={3d shapenets: A deep representation for volumetric shapes},
author={Wu, Zhirong and Song, Shuran and Khosla, Aditya and Yu, Fisher and Zhang, Linguang and Tang, Xiaoou and Xiao, Jianxiong},
booktitle={Proceedings of the IEEE conference on computer vision and pattern recognition},
pages={1912--1920},
year={2015}
}

@inproceedings{Tsybakov2008IntroductionTN,
  title={Introduction to Nonparametric Estimation},
  author={A. Tsybakov},
  booktitle={Springer Series in Statistics},
  year={2008},
  url={https://api.semanticscholar.org/CorpusID:42933599}
}

@inproceedings{Wiki_gpt,
 author = {Krishna, Kalpesh and Song, Yixiao and Karpinska, Marzena and Wieting, John and Iyyer, Mohit},
 booktitle = {Advances in Neural Information Processing Systems},
 pages = {27469--27500},
 title = {Paraphrasing evades detectors of AI-generated text, but retrieval is an effective defense},
 volume = {36},
 year = {2023}
}

@inproceedings{Density-Chazal,
  TITLE = {{The density of expected persistence diagrams and its kernel based estimation}},
  AUTHOR = {Chazal, Fr{\'e}d{\'e}ric and Divol, Vincent},
  URL = {https://hal.science/hal-01716181},
  BOOKTITLE = {{SoCG 2018 - Symposium of Computational Geometry}},
  ADDRESS = {Budapest, Hungary},
  YEAR = {2018},
  MONTH = Jun,
  HAL_ID = {hal-01716181},
  HAL_VERSION = {v3},
}

@inproceedings{tulchinskii2023intrinsicdimensionestimationrobust,
 author = {Tulchinskii, Eduard and Kuznetsov, Kristian and Kushnareva, Laida and Cherniavskii, Daniil and Nikolenko, Sergey and Burnaev, Evgeny and Barannikov, Serguei and Piontkovskaya, Irina},
 booktitle = {Advances in Neural Information Processing Systems},
 pages = {39257--39276},
 title = {Intrinsic Dimension Estimation for Robust Detection of AI-Generated Texts},
 volume = {36},
 year = {2023}
}

@inproceedings{Tulchinskii_2023,
   title={Topological Data Analysis for Speech Processing},
   DOI={10.21437/interspeech.2023-1861},
   booktitle={INTERSPEECH 2023},
   author={Tulchinskii, Eduard and Kuznetsov, Kristian and Kushnareva, Laida and Cherniavskii, Daniil and Barannikov, Serguei and Piontkovskaya, Irina and Nikolenko, Sergey and Burnaev, Evgeny},
   year={2023},
   month=aug, pages={311–315},
   collection={interspeech_2023} }

@article{desurrel2022ripsnetgeneralarchitecturefast,
  TITLE = {{RipsNet: a general architecture for fast and robust estimation of the persistent homology of point clouds}},
  AUTHOR = {Hensel, Felix and Glisse, Marc and Chazal, Fr{\'e}d{\'e}ric and de Surrel, Thibault and Carriere, Mathieu and Lacombe, Th{\'e}o and Kurihara, Hiroaki and Ike, Yuichi},
  URL = {https://inria.hal.science/hal-03867083},
  JOURNAL = {{Proceedings of Machine Learning Research}},
  PUBLISHER = {{PMLR}},
  SERIES = {Proceedings of Topological, Algebraic, and Geometric Learning Workshops 2022},
  VOLUME = {196},
  PAGES = {96-106},
  YEAR = {2022},
  HAL_ID = {hal-03867083},
  HAL_VERSION = {v1},
}

@article{barannikov2021manifold,
  title={Manifold Topology Divergence: a Framework for Comparing Data Manifolds.},
  author={Barannikov, Serguei and Trofimov, Ilya and Sotnikov, Grigorii and Trimbach, Ekaterina and Korotin, Alexander and Filippov, Alexander and Burnaev, Evgeny},
  journal={Advances in neural information processing systems},
  pages={7294--7305},
  year={2021},
}

@article{bubenik2015statistical,
  title={Statistical topological data analysis using persistence landscapes.},
  author={Bubenik, Peter},
  journal={Journal of Machine Learning Research},
  volume={16},
  pages={77--102},
  year={2015}
}

@article{edelsbrunner2002topological,
author = {Zomorodian, Afra J},
journal = {Ph.D. Thesis},
year = {2001},
month = {01},
pages = {},
publisher = {University of Illinois at Urbana-Champaign},
title = {COMPUTING AND COMPREHENDING TOPOLOGY: PERSISTENCE AND HIERARCHICAL {M}ORSE COMPLEXES}
}

@article{adams2016persistenceimages,
  author  = {Henry Adams and Tegan Emerson and Michael Kirby and Rachel Neville and Chris Peterson and Patrick Shipman and Sofya Chepushtanova and Eric Hanson and Francis Motta and Lori Ziegelmeier},
  title   = {Persistence Images: A Stable Vector Representation of Persistent Homology},
  journal = {Journal of Machine Learning Research},
  year    = {2017},
  volume  = {18},
  number  = {8},
  pages   = {1--35},
  url     = {http://jmlr.org/papers/v18/16-337.html}
}

@article{barannikov:morsecomplex,
  TITLE = {{The Framed Morse complex and its invariants}},
  AUTHOR = {Barannikov, Serguei},
  URL = {https://hal.science/hal-01745109},
  JOURNAL = {{Advances in Soviet Mathematics }},
  PUBLISHER = {{American Mathematical Society}},
  SERIES = {Singularities and Bifurcations},
  VOLUME = {21},
  PAGES = {93-116},
  YEAR = {1994},
  MONTH = Apr,
  DOI = {10.1090/advsov/021/03},
  HAL_ID = {hal-01745109},
  HAL_VERSION = {v1},
}

@article{barannikov2022representation,
  TITLE = {{Representation Topology Divergence: A Method for Comparing Neural Network Representations}},
  AUTHOR = {Barannikov, Serguei and Trofimov, Ilya and Balabin, Nikita and Burnaev, Evgeny},
  URL = {https://hal.science/hal-03821864},
  JOURNAL = {{Proceedings of Machine Learning Research}},
  PUBLISHER = {{PMLR}},
  SERIES = {Proceedings of the 39th International Conference on Machine Learning (ICML 2022)},
  VOLUME = {162},
  PAGES = {1607-1626},
  YEAR = {2022},
  MONTH = May,
  HAL_ID = {hal-03821864},
  HAL_VERSION = {v1},
}

@article{zomorodian2005computing,
  title={Computing persistent homology},
  author={Zomorodian, Afra and Carlsson, Gunnar},
  journal={Discrete \& Computational Geometry},
  volume={33},
  number={2},
  pages={249--274},
  year={2005},
  publisher={Springer}
}

@inproceedings{zaheer2018deepsets,
author = {Zaheer, Manzil and Kottur, Satwik and Ravanbhakhsh, Siamak and P\'{o}czos, Barnab\'{a}s and Salakhutdinov, Ruslan and Smola, Alexander J},
title = {Deep Sets},
year = {2017},
booktitle = {Proceedings of the 31st International Conference on Neural Information Processing Systems},
pages = {3394–3404},
}

@inproceedings{kushnareva2021artificial,
  title={Artificial Text Detection via Examining the Topology of Attention Maps},
  author={Kushnareva, Laida and Cherniavskii, Daniil and Mikhailov, Vladislav and Artemova, Ekaterina and Barannikov, Serguei and Bernstein, Alexander and Piontkovskaya, Irina and Piontkovski, Dmitri and Burnaev, Evgeny},
  booktitle={Empirical Methods in Natural Language Processing},
  year={2021}
}

@inproceedings{rieck2020uncoveringtopologytimevaryingfmri,
 author = {Rieck, Bastian and Yates, Tristan and Bock, Christian and Borgwardt, Karsten and Wolf, Guy and Turk-Browne, Nicholas and Krishnaswamy, Smita},
 booktitle = {Advances in Neural Information Processing Systems},
 pages = {6900--6912},
 title = {Uncovering the Topology of Time-Varying fMRI Data using Cubical Persistence},
 volume = {33},
 year = {2020}
}

@article{catch22,
      title={catch22: CAnonical Time-series CHaracteristics}, 
      author={Carl H Lubba and Sarab S Sethi and Philip Knaute and Simon R Schultz and Ben D Fulcher and Nick S Jones},
      journal = {Data Mining and Knowledge Discovery},
      year={2019},
      volume={33},
      number={6},
      pages={1821--1852},
      DOI = {10.1007/s10618-019-00647-x}
}

@InProceedings{FreshPRINCE,
author="Middlehurst, Matthew
and Bagnall, Anthony",
title="The FreshPRINCE: A Simple Transformation Based Pipeline Time Series Classifier",
booktitle="Pattern Recognition and Artificial Intelligence",
year="2022",
publisher="Springer International Publishing",
address="Cham",
pages="150--161",
isbn="978-3-031-09282-4",
DOI={10.1007/978-3-031-09282-4_13},
}

@article{bakeoff-revisited,
   title={Bake off redux: a review and experimental evaluation of recent time series classification algorithms},
   volume={38},
   ISSN={1573-756X},
   url={http://dx.doi.org/10.1007/s10618-024-01022-1},
   DOI={10.1007/s10618-024-01022-1},
   number={4},
   journal={Data Mining and Knowledge Discovery},
   publisher={Springer Science and Business Media LLC},
   author={Middlehurst, Matthew and Schäfer, Patrick and Bagnall, Anthony},
   year={2024},
   month=apr, pages={1958–2031} }

@inproceedings{chazal2013Silhouettes,
author = {Chazal, Fr\'{e}d\'{e}ric and Fasy, Brittany Terese and Lecci, Fabrizio and Rinaldo, Alessandro and Wasserman, Larry},
title = {Stochastic Convergence of Persistence Landscapes and Silhouettes},
year = {2014},
doi = {10.1145/2582112.2582128},
booktitle = {Proceedings of the Thirtieth Annual Symposium on Computational Geometry},
pages = {474–483}
}

@article{LI2024103855,
title = {AdaNI: Adaptive Noise Injection to improve adversarial robustness},
journal = {Computer Vision and Image Understanding},
year = {2024},
issn = {1077-3142},
doi = {https://doi.org/10.1016/j.cviu.2023.103855},
url = {https://www.sciencedirect.com/science/article/pii/S1077314223002357},
author = {Yuezun Li and Cong Zhang and Honggang Qi and Siwei Lyu},
}

@inproceedings{Parametric_noise,
title = "Parametric noise injection: Trainable randomness to improve deep neural network robustness against adversarial attack",
author = "Zhezhi He and Rakin, {Adnan Siraj} and Deliang Fan",
year = "2019",
pages = "588--597",
doi = "10.1109/CVPR.2019.00068",
booktitle = "Proceedings of the IEEE/CVF Conference on Computer Vision and Pattern Recognition",
}

\begin{IEEEbiography}[{\includegraphics[width=1in,height=1.25in,clip,keepaspectratio]{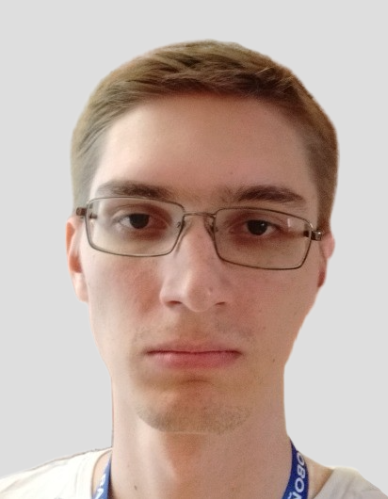}}]{Alexander Mironenko} 
received the M.S. degree in mathematics from State University of Moscow (MSU) in 2023. He is currently a Ph.D. candidate in the Department of Computational and Data Science and Engineering at Skolkovo Institute of Science and Technology, Russia, under the supervision of Prof. Burnaev and Dr. Barannikov. He also works as an engineer at the Optic Algorithm Laboratory, LRC Huawei, Russia.

His research interests include representation learning, topological data analysis, and machine learning.

E-mail: Alexander.Mironenko@skoltech.ru

ORCID iD: 0009-0004-0207-2400
\end{IEEEbiography}

\begin{IEEEbiography}[{\includegraphics[width=1in,height=1.25in,clip,keepaspectratio]{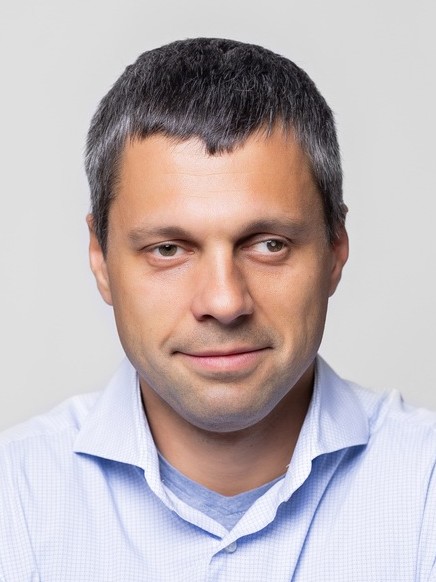}}]{Evgeny Burnaev} 
received the M.S. degree in applied physics and mathematics from the Moscow Institute of Physics and Technology in 2006 and the Ph.D. degree in foundations of computer science from the Institute for Information Transmission Problems RAS in 2008. He is a Doctor of Physico-Mathematical Sciences, Professor of the Russian Academy of Sciences, Vice President for AI Development, Director of the AI Center of Skoltech.

His research interests include Generative Modeling, Manifold Learning, Surrogate Modeling, 3D Deep Learning and Engineering AI.

ORCID iD: 0000-0001-8424-0690
\end{IEEEbiography}

\begin{IEEEbiography}[{\includegraphics[width=1in,height=1.25in,clip,keepaspectratio]{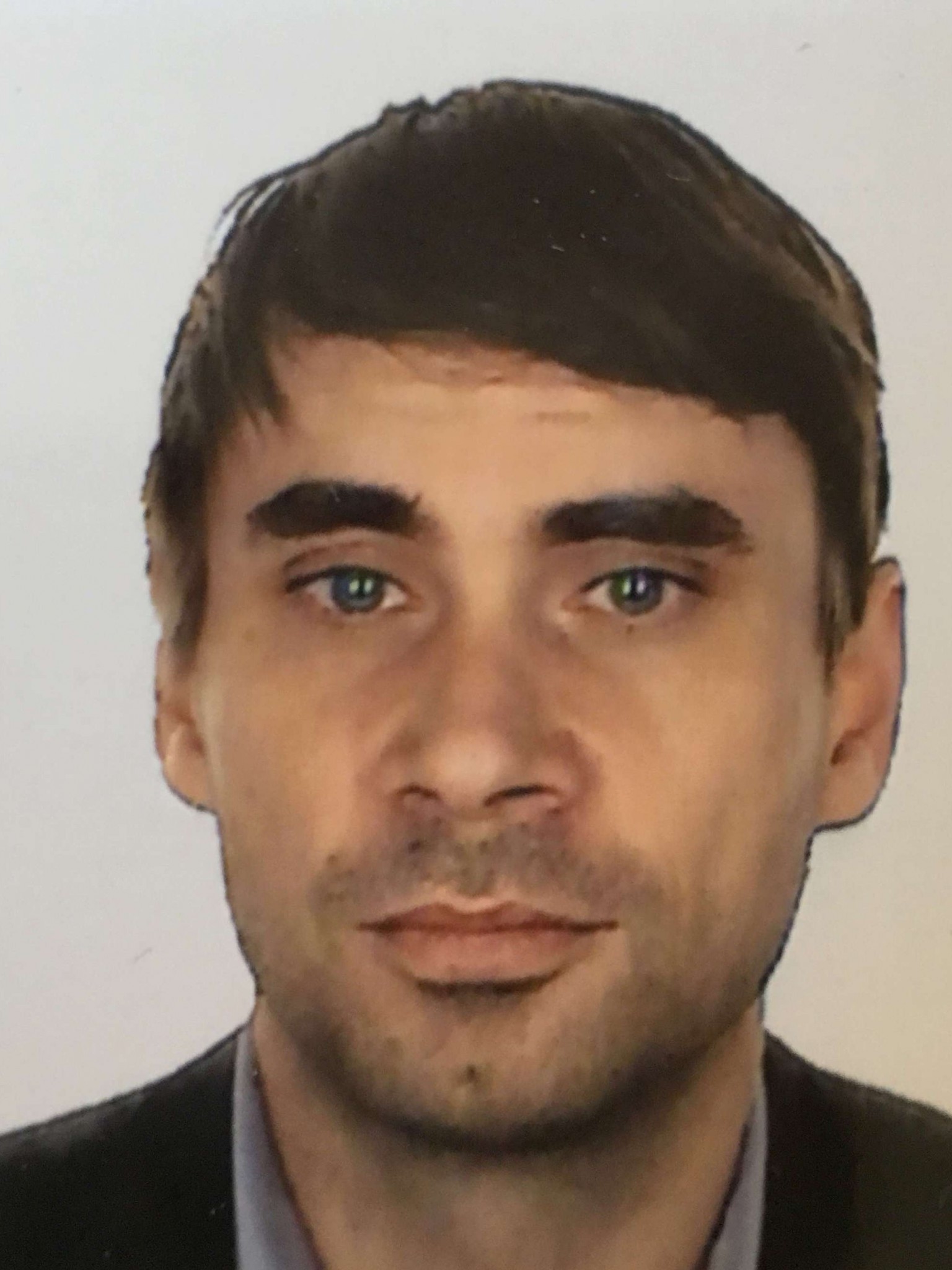}}]{Serguei Barannikov} 
received the M.S. degree in mathematics from State University of Moscow (MSU) and the Ph.D. degree in mathematics from the University of California, Berkeley. He is currently a leading research scientist at Skolkovo Institute of Science and Technology. From 1999 to 2010, he was a researcher at Ecole Normale Supérieure, Paris, and then at Paris Diderot University. His early work introduced canonical forms of filtered complexes, now known as persistence barcodes, fundamental in topological data analysis. More recently, he has applied topological methods to machine learning, including large language models, with publications at NeurIPS, ICML, and ICLR.

His research interests include algebraic topology, algebraic geometry, mathematical physics, and machine learning.

ORCID iD: 0000-0002-9323-0651
\end{IEEEbiography}

\EOD

\end{document}